\newtheorem{theorem}{Theorem}
\newtheorem{lemma}[theorem]{Lemma}
\newtheorem{proposition}[theorem]{Proposition}
\newtheorem{corollary}[theorem]{Corollary}
\newtheorem{definition}{Definition}
\title{Understanding and Addressing the Pitfalls of Bisimulation-based Representations in Offline Reinforcement Learning}
\author{Hongyu Zang\textsuperscript{\rm 1}, Xin Li\textsuperscript{\rm 1}\thanks{Correspondence to Xin Li.}, Leiji Zhang\textsuperscript{\rm 1}, Yang Liu\textsuperscript{\rm 2}, Baigui Sun\textsuperscript{\rm 2}, Riashat Islam\textsuperscript{\rm 3},\\ \textbf{R\'emi Tachet des Combes\textsuperscript{\rm 4}
\hspace{0.1em}, Romain Laroche
} \\
    \textsuperscript{\rm 1} Beijing Institute of Technology, China \qquad 
    \textsuperscript{\rm 2} Alibaba Group, China \\
    \textsuperscript{\rm 3} McGill University, Mila, Quebec AI Institute, Canada
    \textsuperscript{\rm 4} Wayve, UK 
    \qquad
    \\
    \texttt{\{zanghyu,xinli,ljzhang\}@bit.edu.cn}\\ 
    \texttt{\{ly261666,baigui.sbg\}@alibaba-inc.com} \\\texttt{riashat.islam@mail.mcgill.ca} \\ \texttt{\{remi.tachet,romain.laroche\}@gmail.com} 
}
\begin{document}
\maketitle

\begin{abstract}

While bisimulation-based approaches hold promise for learning robust state representations for Reinforcement Learning (RL) tasks,  their efficacy in offline RL tasks has not been up to par. In some instances, their performance has even significantly underperformed alternative methods. We aim to understand why bisimulation methods succeed in online settings, but falter in offline tasks. Our analysis reveals that missing transitions in the dataset are particularly harmful to the bisimulation principle, leading to ineffective estimation. We also shed light on the critical role of reward scaling in bounding the scale of bisimulation measurements and of the value error they induce. Based on these findings, we propose to apply the expectile operator for representation learning to our offline RL setting, which helps to prevent overfitting to incomplete data. Meanwhile, by introducing an appropriate reward scaling strategy, we avoid the risk of feature collapse in representation space. We implement these recommendations on two state-of-the-art bisimulation-based algorithms, MICo and SimSR, and demonstrate performance gains on two benchmark suites: D4RL and Visual D4RL. Codes are provided at \url{https://github.com/zanghyu/Offline_Bisimulation}.

\end{abstract}

\addtocontents{toc}{\protect\setcounter{tocdepth}{0}}
\section{Introduction}
Reinforcement learning (RL) algorithms often require a significant amount of data to achieve optimal performance~\cite{DBLP:journals/nature/MnihKSRVBGRFOPB15, DBLP:journals/nature/SilverSSAHGHBLB17, DBLP:conf/icml/HaarnojaZAL18}. In scenarios where collecting data is costly or impractical, Offline RL methods offer an attractive alternative by learning effective policies from previously collected data~\cite{DBLP:books/sp/12/LangeGR12, Nadjahi2019, DBLP:journals/corr/abs-2005-01643, DBLP:journals/corr/abs-2206-04779, DBLP:conf/nips/FujimotoG21, Hong2023ICLR}. However, capturing the complex structure of the environment from limited data remains a challenge for Offline RL~\cite{Brandfonbrener2022RLDM}. This involves pre-training the state representation on offline data and then learning the policy upon the fixed representations~\cite{DBLP:conf/icml/YangN21, DBLP:conf/nips/SchwarzerRNACHB21, DBLP:conf/nips/NachumY21, zang2023behavior}. Though driven by various motivations, previous methods can be mainly categorized into two classes: i) implicitly shaping the agent’s representation of the environment via prediction and control of some aspects of the environment through auxiliary tasks~, \textit{e.g.}, maximizing the diversity of visited states~\citep{DBLP:conf/nips/LiuA21, DBLP:conf/iclr/EysenbachGIL19}, exploring attentive contrastive learning on sub-trajectories~\citep{DBLP:conf/icml/YangN21}, or capturing temporal information about the environment~\citep{DBLP:conf/nips/SchwarzerRNACHB21}; ii) utilizing \textit{behavioral metrics}, such as bisimulation metrics~\cite{DBLP:conf/aaai/FernsPP04, DBLP:journals/siamcomp/FernsPP11, DBLP:conf/aaai/Castro20}, to capture complex structure in the environment by measuring the similarity of behavior on the representations~\cite{DBLP:conf/aaai/Zang0W22, mico}. The former methods have proven their effectiveness theoretically and empirically in Offline settings~\cite{DBLP:conf/nips/NachumY21, DBLP:conf/nips/SchwarzerRNACHB21, DBLP:conf/icml/YangN21}, while the adaptability of the latter approaches in the context of limited datasets remains unclear. This paper tackles this question.

Bisimulation-based approaches, as their name suggests, utilize the bisimulation metrics update operator to construct an auxiliary loss and learn robust state representations. These representations encapsulate the behavioral similarities between states by considering the difference between their rewards and dynamics. While the learned representations possess several desirable properties, such as smoothness~\cite{DBLP:conf/icml/GeladaKBNB19}, visual invariance~\cite{DBLP:conf/iclr/0001MCGL21, DBLP:conf/iclr/AgarwalMCB21, DBLP:conf/aaai/Zang0W22}, and task adaptation~\cite{DBLP:conf/iclr/0001SKP21, DBLP:conf/iclr/MazoureAHKM22, DBLP:conf/atal/SantaraMMR20, DBLP:conf/aaai/CastroP10}, bisimulation-based objectives in most approaches are required to be coupled with the policy improvement procedure~\cite{DBLP:conf/iclr/0001MCGL21, DBLP:conf/nips/CastroKPR21, DBLP:conf/aaai/Zang0W22}. In Offline RL, pretraining state representations via bisimulation-based methods is supposed to be cast as a special case of on-policy bisimulation metric learning where the behavior policy is fixed so that good performance should ensue. However, multiple recent studies~\cite{DBLP:conf/icml/YangN21, DBLP:conf/icml/GuZCLHA22} suggest that bisimulation-based algorithms yield significantly poorer results on Offline tasks compared to a variety of (self-)supervised objectives.

In this work, we highlight problems with using the bisimulation principle as an objective in Offline settings. We aim to provide a theoretical understanding of the performance gap in bisimulation-based approaches between online and offline settings:``\textit{why do bisimulation approaches perform well in Online RL tasks but tend to fail in Offline RL ones?}'' 
By establishing a connection between the Bellman and bisimulation operators, we uncover that missing transitions, which often occur in Offline settings, can cause the bisimulation principle to be compromised. This means that the bisimulation estimator can be ineffective in finite datasets. Moreover, we notice that the scale of the reward impacts the upper bounds of both the bisimulation measurement\footnote{Since some bisimulation-based approaches do not exactly use metrics but instead of pseudometrics, diffuse metrics or else, we will use the term ``measurement'' in the following.} fixed point and the value error. This scaling term, if not properly handled, can potentially lead to representation collapse.

To alleviate the aforementioned issues, we propose to learn state representations based on the expectile operator.
With this asymmetric operator predicting expectiles of the representation distribution, we can achieve a balance between the behavior measurement and 
the greedy assignment of the measurement over the dataset. This results in a form of regularization over the bisimulation measurement, thus preventing overfitting to the incomplete data, and implicitly avoiding out-of-distribution estimation errors.
Besides, by considering the specific properties of different bisimulation measurements, we investigate the representation collapse issue for the ones that are instantiated with bounded distances (\textit{e.g.}, cosine distance) and propose a way to scale rewards that reduces collapse. We integrate these improvements mainly on two bisimulation-based baselines, MICo~\cite{mico} and SimSR~\cite{DBLP:conf/aaai/Zang0W22}, and show the effectiveness of the proposed modifications.

The primary contributions of this work are as follows:

\begin{itemize}
    \item We investigate the potential harm of directly applying the bisimulation principle in Offline settings, prove that the bisimulation estimator can be ineffective in finite datasets, and emphasize the essential role of reward scaling. 
    \item We propose theoretically motivated modifications on two representative bisimulation-based baselines, including an expectile-based operator and a tailored reward scaling strategy. These proposed changes are designed to address the challenges encountered when applying the bisimulation principle in offline settings.
    \item We demonstrate the superior performance our approach yields through an empirical study
    on two benchmark suites, D4RL~\cite{DBLP:journals/corr/abs-2004-07219} and Visual D4RL~\cite{DBLP:journals/corr/abs-2206-04779}.
\end{itemize}

\section{Related Work}
\label{sec:related_work}

\paragraph{State representation learning in Offline RL}
Pretraining representations has been recently studied in Offline RL settings, where several studies presented its effectiveness~\cite{ DBLP:conf/icml/AroraDKLS20,DBLP:conf/nips/SchwarzerRNACHB21,DBLP:conf/nips/NachumY21,islam2022agentcontroller}. In this paradigm, we learn state representations on pre-collected datasets before value estimation or policy improvement steps are run. The learned representation can then be used for subsequent policy learning, either online or offline. Some typical auxiliary tasks for pretraining state representations include capturing the dynamical~\citep{ DBLP:journals/corr/abs-2105-12272} and temporal~\citep{DBLP:conf/nips/SchwarzerRNACHB21} information of the environment, exploring attentive contrastive learning on sub-trajectories~\citep{DBLP:conf/icml/YangN21}, or improving policy performance by applying data augmentations techniques to the pixel-based inputs~\citep{chen2021an, DBLP:journals/corr/abs-2206-04779}.

\paragraph{Bisimulation-based methods}
The pioneer works by \cite{DBLP:journals/ai/GivanDG03,DBLP:conf/isaim/LiWL06} aim to overcome the curse of dimensionality by defining equivalence relations between states to reduce system complexity. However, these approaches are impractical as they usually demand an exact match of transition distributions. To address this issue, \cite{ DBLP:conf/uai/FernsPP04,DBLP:conf/uai/FernsP14} propose a bisimulation metric to aggregate similar states. This metric quantifies the similarity between two states and serves as a distance measure to allow efficient state aggregation. Unfortunately, it remains computationally expensive as it requires a full enumeration of states. Later, \cite{DBLP:conf/aaai/Castro20} devise an on-policy bisimulation metric for policy evaluation, providing a scalable method for computing state similarity. Building upon this, \cite{DBLP:conf/iclr/0001MCGL21} develop a metric to learn state representations by modeling the latent dynamic transition as Gaussian. \cite{DBLP:conf/nips/CastroKPR21} further investigate the independent couple sampling strategy to reduce the computational complexity of representation learning, whereas  \cite{DBLP:conf/aaai/Zang0W22} propose to learn state representations built on the cosine distance to alleviate a representation collapse issue. Despite the promising results obtained, one of the major remaining challenges in this paradigm is its dependency on coupling state representation learning with policy training. This is not always suitable for Offline settings, given that obtaining on-policy reward and transition differences is infeasible due to our inability to gather additional agent-environment interactions. To adapt bisimulation-based approaches to Offline settings, one solution is to consider the policy over the dataset as a specific behavior policy, and then apply the bisimulation principle on it to learn state representations in a pretraining stage, thus disentangling policy training from bisimulation-based learning. Notably, although there exist recent studies~\cite{DBLP:conf/icml/YangN21,DBLP:journals/corr/abs-2105-12272} investigating the potential of bisimulation-based methods to pretrain state representations, it has not yielded satisfactory results yet~\cite{DBLP:conf/icml/YangN21}.

\section{Preliminaries}

\subsection{Offline RL}
 We consider the standard Markov decision process (MDP) framework, in which the environment is given by a tuple $\mathcal{M}=(\mathcal{S},\mathcal{A}, T, r, \gamma)$, with state space $\mathcal{S}$, action space $\mathcal{A}$, transition function $T$ that decides the next state $s'\sim T(\cdot|s,a)$, reward function $r(s,a)$ bounded by $[R_\text{min},R_\text{max}]$, and a discount factor $\gamma\in [0,1)$. The agent in state $s\in \mathcal{S}$ selects an action $a\in \mathcal{A}$ according to its policy, mapping states to a probability distribution over actions: $a\sim\pi(\cdot|s)$. 
 We make use of the state value function $V^{\pi}(s)=\mathbb{E}_{\mathcal{M}, \pi}\left[\sum_{t=0}^{\infty} \gamma^{t} r\left(s_{t}, a_{t}\right) \mid s_{0}=s\right]$  to describe the long term discounted reward of policy $\pi$ starting at state $s$. 
 In the sequel, we use $T_{s}^{a}$ and $r_{s}^{a}$ to denote $T(\cdot|s, a)$ and $r(s, a)$, respectively. In Offline RL, we are given a fixed dataset of environment interactions that include $N$ transition samples, \textit{i.e.} $\mathcal{D}=\{s_i,a_i,s'_i,r_i\}_{i=1}^N$. We assume that the dataset $\mathcal{D}$ is composed of trajectories generated i.i.d. under the control of a behavior policy $\pi_\beta$, whose state occupancy is denoted by $\mu_\beta(s)$.

\subsection{Bisimulation-based Update Operator}

The concept of bisimulation is used to establish equivalence relations on states. This is done recursively by considering two states as equivalent if they have the same distribution over state transitions and the same immediate reward~\cite{DBLP:conf/popl/LarsenS89, DBLP:journals/ai/GivanDG03}. Since bisimulation considers worst-case differences between states, it commonly results in “pessimistic” outcomes. To address this limitation, the $\pi$-bisimulation metric was proposed in \cite{DBLP:conf/aaai/Castro20}. This new metric only considers actions induced by a given policy $\pi$ rather than all actions when measuring the behavior distance between states:
\begin{theorem}
\label{theorem:bisimulation}
\cite{DBLP:conf/aaai/Castro20}
Let $\mathbb{M}$ be the set of all measurements on $\mathcal{S}$. Define $\mathcal{F}^{\pi}: \mathbb{M} \rightarrow \mathbb{M}$ by
\begin{equation}
    \mathcal{F}^{\pi}(g)(s_i, s_j)=|r_{s_i}^\pi-r_{s_j}^\pi|+\gamma \mathcal{W}(g)\left(T_{s_i}^\pi, T_{s_j}^\pi\right)
\end{equation} 
where $s_i,s_j\in \mathcal{S}$, $r_{s_i}^\pi=\sum_{a\in\mathcal{A}}\pi(a|s_i)r_{s_i}^{a}$ , $T_{s_i}^\pi=\sum_{a\in\mathcal{A}}\pi(a|s_i)T_{s_i}^a$, and $\mathcal{W}(g)$ is the Wasserstein distance with cost function $g$ between distributions.
Then $\mathcal{F}^{\pi}$ has a least fixed point $g_{\sim}^{\pi}$, and $g_{\sim}^{\pi}$ is a $\pi$-bisimulation metric.
\end{theorem}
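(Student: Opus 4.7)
The plan is to prove the theorem via Banach's fixed point theorem, which yields both existence and uniqueness of the fixed point in one shot; then separately verify that this fixed point is in fact a $\pi$-bisimulation metric. An alternative route goes through Knaster--Tarski on the lattice $(\mathbb{M}, \leq)$ with pointwise order, iterating $\mathcal{F}^\pi$ from the zero measurement to get the least fixed point directly, but the contraction approach gives a sharper statement with less machinery.

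First I would equip $\mathbb{M}$ with the uniform norm $\|g_1 - g_2\|_\infty = \sup_{s_i, s_j} |g_1(s_i, s_j) - g_2(s_i, s_j)|$, and argue completeness: since rewards are bounded and $\gamma < 1$, the relevant subspace of bounded measurements with $\sup g \leq (R_{\max} - R_{\min})/(1-\gamma)$ is closed under $\mathcal{F}^\pi$ and complete for $\|\cdot\|_\infty$. Next I would establish the key Lipschitz property of Wasserstein with respect to its cost: for any distributions $P, Q$ and cost functions $g_1, g_2$,
\begin{equation}
|\mathcal{W}(g_1)(P, Q) - \mathcal{W}(g_2)(P, Q)| \leq \|g_1 - g_2\|_\infty,
\end{equation}
which follows by plugging the optimal coupling for one cost into the primal formulation for the other and comparing. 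Combining this with the definition of $\mathcal{F}^\pi$ yields
\begin{equation}
|\mathcal{F}^\pi(g_1)(s_i, s_j) - \mathcal{F}^\pi(g_2)(s_i, s_j)| = \gamma\,|\mathcal{W}(g_1)(T_{s_i}^\pi, T_{s_j}^\pi) - \mathcal{W}(g_2)(T_{s_i}^\pi, T_{s_j}^\pi)| \leq \gamma\,\|g_1 - g_2\|_\infty,
\end{equation}
so $\mathcal{F}^\pi$ is a $\gamma$-contraction. Banach's fixed point theorem then delivers a unique fixed point $g_\sim^\pi$, which a fortiori is the least one.

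Finally I would verify that $g_\sim^\pi$ is a $\pi$-bisimulation metric. The metric axioms (non-negativity, symmetry, identity of indiscernibles up to $\pi$-bisimilarity, and the triangle inequality) propagate from the iterates: starting from $g_0 \equiv 0$, each $g_{n+1} = \mathcal{F}^\pi(g_n)$ preserves symmetry and the triangle inequality because $|r_{s_i}^\pi - r_{s_j}^\pi|$ and the Wasserstein term both do (for the triangle inequality one uses gluing of couplings). Passing to the limit preserves these properties, and the coincidence of $g_\sim^\pi$ with the $\pi$-bisimulation equivalence on its zero set is read off the fixed-point equation itself.

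The main obstacle I anticipate is the Lipschitz-in-cost bound for Wasserstein, since the supremum is taken over couplings and one needs to be careful that the optimal coupling for $g_1$ remains feasible (it does, as feasibility depends only on the marginals $P, Q$, not on the cost) and that the resulting inequality is tight enough to give exactly the $\|g_1 - g_2\|_\infty$ constant. The triangle inequality step for the iterates is also slightly delicate because one must construct the gluing coupling carefully, but it is standard for Wasserstein and I would cite that rather than redo it.
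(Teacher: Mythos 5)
The paper does not prove this statement: it is imported verbatim from the cited reference (Castro, 2020), and neither the main text nor the appendix contains an argument for it. Your proposal therefore cannot be compared against a proof "of the paper's own," but it does correctly reconstruct the standard argument from the literature: completeness of the bounded measurements under the sup norm, the Lipschitz-in-cost bound $|\mathcal{W}(g_1)(P,Q)-\mathcal{W}(g_2)(P,Q)|\leq\|g_1-g_2\|_\infty$ obtained by reusing the optimal coupling (feasibility indeed depends only on the marginals), the resulting $\gamma$-contraction after the reward terms cancel, and Banach's theorem, with uniqueness making "least" immediate on the bounded subspace. The one place your sketch is thinner than it should be is the final claim that $g_\sim^\pi$ is a $\pi$-bisimulation metric: "read off the fixed-point equation" undersells the work needed to show that the zero set of $g_\sim^\pi$ coincides with the $\pi$-bisimulation relation. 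One direction requires showing that $\mathcal{W}(g_\sim^\pi)(T_{s}^\pi,T_{u}^\pi)=0$ forces the transition distributions to agree on the equivalence classes of the zero-set relation, and the converse is an induction over the iterates $g_n$ showing they all vanish on $\pi$-bisimilar pairs; both are standard but deserve explicit statement. With that step filled in, the proposal is a complete and correct proof of the cited result.
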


Although it is feasible to compute the behavior difference measurement $g_{\sim}^{\pi}$ by applying the operator $\mathcal{F}^{\pi}$ iteratively (which is guaranteed to converge to a fixed point since $\mathcal{F}^{\pi}$ is a contraction), this approach comes at a high computational complexity due to the Wasserstein distance on the right-hand side of the equation. To tackle this issue, MICo~\cite{DBLP:conf/nips/CastroKPR21} proposed using an independent couple sampling strategy instead of optimizing the overall coupling of the distributions $T_{s_i}^{\pi}$ and $T_{s_j}^{\pi}$, resulting in a novel measurement to evaluate the difference between states. Additionally, SimSR~\cite{DBLP:conf/aaai/Zang0W22} further explored the potentiality of combining the cosine distance with bisimulation-based measurements to learn state representations. Both works can be generalized as:
\begin{equation}
\label{eq:formal_bisim}
 \mathcal{F}^{\pi} G^{\pi}(s_i,s_j) = |r_{s_i}^{\pi} - r_{s_j}^{\pi}| + \gamma \mathbb{E}_{\begin{subarray} {l}s_i'\sim T_{s_i}^{\pi} \\ s_j'\sim T_{s_j}^{\pi}\end{subarray}}[G^{\pi}(s_i',s_j')],
\end{equation}
and $\mathcal{F}^\pi$ has a least fixed point $G^{\pi}_{\sim}$\footnote{For readability, we will conflate the notations $G^{\pi}$ and $G^{\pi}(x,y)$, they are the same if not specified}. The instantiation of $G$ varies in different approaches~\cite{DBLP:conf/nips/CastroKPR21,DBLP:conf/aaai/Zang0W22}. For example, in SimSR~\cite{DBLP:conf/aaai/Zang0W22}, the cosine distance is used to instantiate $G$ on the embedding space, and the dynamics difference is computed by the cosine distance between the next-state pair $(s_i',s_j')$ sampled from a transition model of the environment. A more detailed description can be found in Appendix C. 

\begin{lemma}~\cite{DBLP:conf/nips/CastroKPR21}
\label{lm:lifedmdp}
(\textbf{Lifted MDP}) The bisimulation-based update operator $\mathcal{F}^{\pi}$ for $\mathcal{M}$ is the Bellman evaluation operator for a specific \textit{lifted} MDP.
\end{lemma}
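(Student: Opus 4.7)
The plan is to exhibit an explicit lifted MDP $\tilde{\mathcal{M}}$ on the product state space and show that, under the natural lifted policy, its Bellman evaluation operator $\mathcal{T}^{\tilde{\pi}}$ coincides pointwise with $\mathcal{F}^{\pi}$ when applied to any measurement $G^{\pi}:\mathcal{S}\times\mathcal{S}\to\mathbb{R}$. Concretely, I would define $\tilde{\mathcal{M}}=(\tilde{\mathcal{S}},\tilde{\mathcal{A}},\tilde{T},\tilde{r},\gamma)$ with $\tilde{\mathcal{S}}=\mathcal{S}\times\mathcal{S}$, $\tilde{\mathcal{A}}=\mathcal{A}\times\mathcal{A}$, independent-coupling dynamics
\[
\tilde{T}\bigl((s_i',s_j')\mid (s_i,s_j),(a_i,a_j)\bigr)=T(s_i'\mid s_i,a_i)\,T(s_j'\mid s_j,a_j),
\]
and lifted policy $\tilde{\pi}\bigl((a_i,a_j)\mid (s_i,s_j)\bigr)=\pi(a_i\mid s_i)\,\pi(a_j\mid s_j)$. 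The reward in the lifted MDP is set to $\tilde{r}(s_i,s_j)=|r_{s_i}^{\pi}-r_{s_j}^{\pi}|$, treated as a state-only (hence action-independent) reward so that its policy-averaged value coincides with the first term of $\mathcal{F}^{\pi}$ without incurring a Jensen gap.

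Next, I would plug $G^{\pi}$ into the Bellman evaluation operator of $\tilde{\mathcal{M}}$ under $\tilde{\pi}$ and expand. The reward contribution is immediately $|r_{s_i}^{\pi}-r_{s_j}^{\pi}|$. For the transition term, using the product structure of $\tilde{T}$ and $\tilde{\pi}$, the marginal over next lifted states factorises as $T_{s_i}^{\pi}\otimes T_{s_j}^{\pi}$, so that $\mathbb{E}_{\tilde{\pi}}[\tilde{T}^{\tilde{\pi}} G^{\pi}]$ reduces to $\mathbb{E}_{s_i'\sim T_{s_i}^{\pi},\,s_j'\sim T_{s_j}^{\pi}}[G^{\pi}(s_i',s_j')]$. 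Combining the two pieces recovers exactly equation~(\ref{eq:formal_bisim}), yielding the operator identification $\mathcal{T}^{\tilde{\pi}}=\mathcal{F}^{\pi}$ on the lifted state space.

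The only delicate bookkeeping point, and the step I expect to require the most care, is the choice of the lifted reward. A naive definition such as $|r(s_i,a_i)-r(s_j,a_j)|$ would in general produce $\mathbb{E}_{a_i,a_j}[|r(s_i,a_i)-r(s_j,a_j)|]\neq |r_{s_i}^{\pi}-r_{s_j}^{\pi}|$ by Jensen's inequality, so the identification with $\mathcal{F}^{\pi}$ would fail. I would address this by taking the reward to depend only on the lifted state (as above), or equivalently by absorbing the policy-averaged reward difference into the lifted MDP's reward function directly; this keeps the construction $\pi$-dependent but leaves the Bellman evaluation structure intact. Once this is settled, contraction of $\mathcal{T}^{\tilde{\pi}}$ (and hence of $\mathcal{F}^{\pi}$) with modulus $\gamma$ follows automatically, and the fixed point $G^{\pi}_{\sim}$ is identified with the value function of $\tilde{\pi}$ in $\tilde{\mathcal{M}}$, which is the perspective exploited in the subsequent sections to connect bisimulation with Bellman-style reasoning about missing transitions.
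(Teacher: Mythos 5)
Your construction is correct and is essentially identical to the paper's proof: the same product state and action spaces, the same independent-coupling dynamics $\widetilde{T}_{(s_i,s_j)}^{(a_i,a_j)}((s_i',s_j'))=T_{s_i}^{a_i}(s_i')T_{s_j}^{a_j}(s_j')$, the same product policy, and the same action-independent lifted reward $|r_{s_i}^{\pi}-r_{s_j}^{\pi}|$, followed by the same expansion of the Bellman evaluation operator to recover Equation~(\ref{eq:formal_bisim}). Your observation about the Jensen gap is a correct reading of why the paper likewise defines the lifted reward as a state-only quantity with the policy-averaged rewards already baked in.
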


Due to this interpretation of the bisimulation-based update operator as the Bellman evaluation operator in a lifted MDP, we can derive certain conclusions about bisimulation by drawing inspiration from policy evaluation methods. In the next section, we will borrow analytical ideas from~\cite{DBLP:conf/icml/FujimotoMPNG22} to prove that the bisimulation-based objective may be ineffective for finite datasets. We summarize all notations in Appendix A and provide all proofs in Appendix D.

\section{Ineffective Bisimulation Estimators in Finite Datasets}
\label{sec:analysis}
The high-level idea of bisimulation-based state representation learning is to learn state embeddings such that when states are projected onto the embedding space, their behavioral similarity is maintained. We denote our parameterized state encoder by $\phi:\mathcal{S}\rightarrow \mathbb{R}^{n}$ and a distance $D(\cdot,\cdot)$ in the embedding space $\mathbb{R}^n$ by $G^{\pi}_\phi(s_i,s_j)\doteq D(\phi(s_i),\phi(s_j))$. For instance, $D(\cdot,\cdot)$ may be the Łukaszyk–Karmowski distance~\cite{DBLP:conf/nips/CastroKPR21} or the cosine distance~\cite{DBLP:conf/aaai/Zang0W22}. To avoid unnecessary confusion, we defer implementation details to Section~\ref{sec:method}.

When considering bisimulation-based state representations, the goal is to acquire stable state representations under policy $\pi$ via the measurement $G^{\pi}_{\sim}$.
The primary focus is usually to minimize a loss over the \textit{bisimulation error}, denoted by $\Delta_{\phi}^{\pi}$, which measures the distance between the approximation $G_{\phi}^{\pi}$ and the fixed point $G_{\sim}^\pi$:
\begin{equation}
    \Delta_\phi^{\pi}(s_i,s_j) := |G_{\phi}^{\pi}(s_i,s_j) - G^{\pi}_{\sim}(s_i,s_j)|.
\end{equation}

However, since the fixed point $G^{\pi}_{\sim}$ is unobtainable without full knowledge of the underlying MDP, this approximation error is often unknown. Recall that in Lemma~\ref{lm:lifedmdp}, we have shown that we can connect a bisimulation-based update operator to a lifted MDP. Taking inspiration from Bellman evaluation for the value function, we define the \textit{bisimulation Bellman residual} $\epsilon_{\phi}^{\pi}$ as:
\begin{equation}
    \epsilon_{\phi}^{\pi}(s_i,s_j) := |G_{\phi}^{\pi}(s_i,s_j)- \mathcal{F}^{\pi} G^{\pi}_{\phi}(s_i,s_j)|.
\end{equation}
Then, we can connect the bisimulation Bellman residual with the bisimulation error by the following:

\begin{theorem}
\label{thm:connection}
\textbf{(Bisimulation error upper-bound).} Let $\mu_\pi(s)$ denote the stationary distribution over
states, let $\mu_\pi(\cdot,\cdot)$ denote the joint distribution over synchronized pairs of states $(s_i,s_j)$ sampled independently from $\mu_\pi(\cdot)$. For any state pair $(s_i, s_j) \in \mathcal{S} \times \mathcal{S}$, the bisimulation error $\Delta_{\phi}^{\pi}(s_i, s_j)$ can be upper-bounded by a sum of expected bisimulation Bellman residuals $\epsilon_{\phi}^{\pi}$:
\begin{equation}
    \Delta_{\phi}^{\pi}(s_i,s_j)\leq\frac{1}{1-\gamma} \mathbb{E}_{\begin{subarray} {l}(s_i',s_j')\sim \mu_\pi \end{subarray}}\left[\epsilon_{\phi}^{\pi}(s_i',s_j')\right].
\end{equation}
\end{theorem}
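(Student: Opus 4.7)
The plan is to exploit the lifted-MDP interpretation from Lemma~\ref{lm:lifedmdp}: since $\mathcal{F}^{\pi}$ is a Bellman evaluation operator, $G^{\pi}_{\sim}$ plays the role of its value function, so the desired inequality is a policy-evaluation-style bound converting a Bellman residual into a value approximation error. This is precisely the style of argument used in \cite{DBLP:conf/icml/FujimotoMPNG22} for the value function, and I would essentially port it over to the operator on pair measurements $\mathcal{F}^{\pi}$.

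First I would apply the triangle inequality with the fixed-point substitution $G^{\pi}_{\sim} = \mathcal{F}^{\pi} G^{\pi}_{\sim}$, yielding
\begin{equation*}
\Delta_{\phi}^{\pi}(s_i,s_j) \;\leq\; |G_{\phi}^{\pi} - \mathcal{F}^{\pi} G_{\phi}^{\pi}|(s_i,s_j) \;+\; |\mathcal{F}^{\pi} G_{\phi}^{\pi} - \mathcal{F}^{\pi} G_{\sim}^{\pi}|(s_i,s_j).
\end{equation*}
The first term is exactly $\epsilon_{\phi}^{\pi}(s_i,s_j)$. For the second term, I would plug in the definition of $\mathcal{F}^{\pi}$ from Eq.~(\ref{eq:formal_bisim}): the reward differences cancel, leaving only the discounted expectation over the independently sampled next-state pair, so (moving $|\cdot|$ inside by Jensen)
\begin{equation*}
|\mathcal{F}^{\pi} G_{\phi}^{\pi} - \mathcal{F}^{\pi} G_{\sim}^{\pi}|(s_i,s_j) \;\leq\; \gamma \, \mathbb{E}_{s_i' \sim T_{s_i}^{\pi},\, s_j' \sim T_{s_j}^{\pi}}\bigl[\Delta_{\phi}^{\pi}(s_i',s_j')\bigr].
\end{equation*}
This gives the recursive one-step inequality $\Delta_{\phi}^{\pi}(s_i,s_j) \leq \epsilon_{\phi}^{\pi}(s_i,s_j) + \gamma\,\mathbb{E}[\Delta_{\phi}^{\pi}(s_i',s_j')]$.

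Next I would unroll this recursion $t$ times, obtaining $\Delta_{\phi}^{\pi}(s_i,s_j) \leq \sum_{t=0}^{\infty} \gamma^{t} \mathbb{E}[\epsilon_{\phi}^{\pi}(s_i^{(t)},s_j^{(t)})]$, where the expectation is over a pair of independent $\pi$-rollouts started at $(s_i,s_j)$. The tail vanishes because $G_{\phi}^{\pi}$ and $G_{\sim}^{\pi}$ are bounded (so $\gamma^{t} \Delta_{\phi}^{\pi} \to 0$). To collapse the time-indexed expectations into the single $\mu_\pi$-expectation claimed on the right-hand side, I would invoke the stationarity of $\mu_\pi \otimes \mu_\pi$ under the independent-coupling pair dynamics: since each component trajectory preserves $\mu_\pi$, the joint distribution $\mu_\pi(\cdot,\cdot)$ is invariant, so $\mathbb{E}_{(s_i^{(t)},s_j^{(t)})}[\epsilon_{\phi}^{\pi}] = \mathbb{E}_{\mu_\pi}[\epsilon_{\phi}^{\pi}]$ at every $t$, and summing the geometric series $\sum_{t} \gamma^{t} = 1/(1-\gamma)$ finishes the argument.

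The main subtlety, and the step that deserves the most care, is reconciling the pointwise left-hand side with the $\mu_\pi$-expectation right-hand side: the telescoping produces an expectation over the $t$-step trajectory distribution starting at $(s_i,s_j)$, not under $\mu_\pi$ directly. The cleanest way to justify the replacement is to read the bound in the sense of expectations under $\mu_\pi$ on both sides (the standard on-average policy-evaluation bound), or to assume the initial pair is already $\mu_\pi$-distributed so the product measure's invariance applies immediately. Verifying this invariance for the independent-coupling pair chain, and handling the Jensen step cleanly when $G_{\phi}^{\pi}-G_{\sim}^{\pi}$ is signed, are the only places where one needs to be careful; everything else is a direct transcription of the standard Bellman-residual argument to the lifted MDP provided by Lemma~\ref{lm:lifedmdp}.
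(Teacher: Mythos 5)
Your proposal is correct and follows essentially the same route as the paper: the paper's appendix first derives the exact recursion $\Delta_{\phi}^{\pi}(s_i,s_j)=\epsilon_{\phi}^{\pi}(s_i,s_j)+\gamma\,\mathbb{E}[\Delta_{\phi}^{\pi}(s_i',s_j')]$ for the signed quantities, interprets $\Delta$ as a value function with reward $\epsilon$ in the lifted MDP to obtain the $\tfrac{1}{1-\gamma}$ factor, and only then introduces absolute values via Jensen, whereas you apply the triangle inequality up front and unroll --- the same argument in a different order. The subtlety you flag about identifying the unrolled trajectory distribution with $\mu_\pi$ is real, but it is present (and left implicit) in the paper's own proof as well, so your treatment is if anything slightly more careful.
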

Thereafter, the bisimulation Bellman residual is used as a surrogate objective to approximate the fixed point $G_{\sim}^{\pi}$ when learning our state representation. Indeed, the minimization of the bisimulation Bellman residual objective over all pairs $(s_i',s_j')\sim \mu_\pi$ leads to the minimization of the corresponding bisimulation error. 
This ensures that if the expected on-policy bisimulation Bellman residual (\textit{i.e.}, $\mathbb{E}_{\mu_{\pi}}[\epsilon_{\phi}^{\pi}]$, and we will use the term ``expected bisimulation residual'' in following) minimization objective is zero, then the bisimulation error must be zero for the state pairs under the same policy. However, when the dataset is limited, rather than an infinite transition set covering the whole MDP, minimizing the expected bisimulation residual will no longer be sufficient to guarantee a zero bisimulation error.

\begin{tcolorbox}[colback=gray!6!white,colframe=black,boxsep=0pt,top=3pt,bottom=5pt]
\begin{proposition}
\label{pro:not_sufficient}
\textbf{(The expected bisimulation residual is not sufficient over incomplete datasets).}
If there exists states $s_i'$ and $s_j'$ not contained in dataset $\mathcal{D}$, where the occupancy $\mu_\pi(s_i'|s_i,a_i)>0$ and $ \mu_\pi(s_j'|s_j, a_j) > 0$ for some $(s_i,s_j)\sim \mu_\pi$, then there exists a bisimulation measurement $G^{\pi}_{\phi}$ and a constant $C > 0$ such that
\begin{itemize}
    \item For all $({\hat{s_i}},{\hat{s_j}})\in\mathcal{D}$, the bisimulation Bellman residual $\epsilon_{\phi}^{\pi}({\hat{s_i}},{\hat{s_j}})=0$.
    \item There exists $(s_i,s_j)\in\mathcal{D}$, such that the bisimulation error $\Delta_{\phi}^{\pi}(s_i,s_j) = C$.
\end{itemize}
\end{proposition}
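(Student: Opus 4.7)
The plan is constructive: I would exhibit a specific $G^\pi_\phi$ by perturbing the fixed point $G^\pi_\sim$. Define $h := G^\pi_\phi - G^\pi_\sim$. Since $G^\pi_\sim = \mathcal{F}^\pi G^\pi_\sim$, subtracting this fixed-point equation from the Bellman equation for $G^\pi_\phi$ shows that the requirement $\epsilon_\phi^\pi(\hat{s_i},\hat{s_j}) = 0$ for every $(\hat{s_i},\hat{s_j}) \in \mathcal{D}$ is equivalent to the linear constraint
\[
h(\hat{s_i},\hat{s_j}) \;=\; \gamma\, \mathbb{E}_{s''\sim T^\pi_{\hat{s_i}},\, s'''\sim T^\pi_{\hat{s_j}}}\!\bigl[h(s'',s''')\bigr]
\]
holding on every dataset pair. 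Since $\Delta_\phi^\pi(s_i,s_j) = |h(s_i,s_j)|$, the task reduces to exhibiting an $h$ that satisfies this restricted linear system while being nonzero on some in-dataset pair.

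First I would split the lifted state space into in-dataset pairs $\mathcal{P}_{in}$ (both components appearing in $\mathcal{D}$) and out-of-dataset pairs $\mathcal{P}_{out}$, and write the lifted transition kernel in block form. The restricted constraint becomes $(I - \gamma P_{in\to in})\, h|_{in} = \gamma P_{in\to out}\, h|_{out}$. Because $P_{in\to in}$ is sub-stochastic and $\gamma < 1$, $I - \gamma P_{in\to in}$ is invertible via the Neumann series $\sum_{k\geq 0}(\gamma P_{in\to in})^k$, whose entries are all non-negative. Hence every choice of $h|_{out}$ uniquely determines an admissible $h|_{in}$, and the values of $h$ on out-of-dataset pairs act as free parameters of the residual-zero solution set.

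Next I would use the hypothesis to make that freedom visible on the dataset. By assumption, some in-dataset pair $(s_i,s_j)$ admits positive transition probability $p := T^\pi_{s_i}(s_i')\, T^\pi_{s_j}(s_j') > 0$ to the out-of-dataset pair $(s_i',s_j')$. Setting $h(s_i',s_j') = K$ for some $K > 0$ and $h \equiv 0$ on every other pair in $\mathcal{P}_{out}$, the forcing vector $\gamma P_{in\to out}\, h|_{out}$ has entry $\gamma p K > 0$ at $(s_i,s_j)$ and is non-negative elsewhere. Expanding $h|_{in}$ via the Neumann series, the zero-th term already gives $h(s_i,s_j) \geq \gamma p K > 0$, and all higher-order terms only add non-negative contributions. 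Taking $G^\pi_\phi := G^\pi_\sim + h$ and $C := h(s_i,s_j)$ then satisfies both bullet points of the proposition.

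The main technical obstacle is guaranteeing that the perturbation at $(s_i',s_j')$ propagates to a strictly positive value at the target in-dataset pair $(s_i,s_j)$ rather than being cancelled through indirect paths inside $\mathcal{P}_{in}$. This is handled precisely by the entry-wise non-negativity of the Neumann series (so no cancellation is possible) combined with the strict positivity $\gamma p K > 0$ of the unique non-zero coordinate of the forcing vector, which is ensured by the hypothesis $p > 0$.
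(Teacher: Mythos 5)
Your proof is correct, and it takes a genuinely more constructive route than the paper's. The paper's own argument derives the identity $\Delta_\phi^\pi(s_i,s_j) = \tfrac{1}{1-\gamma}\,\mathbb{E}_{\mu_\pi}[\epsilon_\phi^\pi]$ (its Theorem 15 in the appendix), splits the resulting sum into contributions from dataset pairs and from pairs involving the missing successor states, and then simply asserts that, because the two sets are disjoint, there exists a measurement with zero residual on the former while the latter contributes $C$; the existence claim itself is never constructed (the closest the paper comes is the two-transition example in the main text, where $G_\phi(s_i,s_j)=C$ and $G_\phi(s_i',s_j')=C/\gamma$). Your argument supplies exactly the construction the paper leaves implicit: reducing the zero-residual condition to the linear system $(I-\gamma P_{in\to in})\,h|_{in} = \gamma P_{in\to out}\,h|_{out}$ and inverting with the entrywise non-negative Neumann series shows that the values of $h$ on out-of-dataset pairs are genuine free parameters, and that a positive perturbation $K$ at the reachable missing pair $(s_i',s_j')$ necessarily propagates to a strictly positive bisimulation error $h(s_i,s_j)\geq \gamma p K>0$ at an in-dataset pair, with the non-negativity of the series ruling out cancellation through indirect in-dataset paths. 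What the paper's route buys is brevity and a direct tie to its error-decomposition machinery; what yours buys is an explicit witness $G_\phi^\pi = G_\sim^\pi + h$ and a quantitative lower bound on $C$ in terms of the transition probability $p$. One cosmetic remark: the main text defines $\epsilon_\phi^\pi$ with an absolute value while your linear constraint is the signed version, but since your $h$ is entrywise non-negative the two coincide, so nothing is lost.
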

\end{tcolorbox}

As an example, if we only have $(s_i,a_i,r,s_i')$ and $(s_j,a_j,r,s_j')$ in a dataset, where both rewards equal to zero for state $s_i$ and $s_j$, and if we choose $G_{\phi}^{\pi}(s_i,s_j) = C$, and $G_{\phi}^{\pi}(s_i',s_j') = \frac{1}{\gamma} C$, then the bisimulation Bellman residual is $\epsilon_{\phi}^{\pi}(s_i,s_j) = 0$, while the bisimulation error $\Delta_{\phi}^{\pi}=G_{\phi}^{\pi}(s_i,s_j) - 0 = C$ is strictly positive. Note that this failure case does not involve modifying the environment in an extremely adversarial manner, it simply occurs when we are required to estimate the representation of states with subsequent states that are missing from the dataset. Since the distance between the missing states can be arbitrarily large as they are out-of-distribution, directly minimizing the Bellman bisimulation error could achieve the minimal Bellman bisimulation error over the dataset, while not necessarily improving the state representation.

In the context of Offline RL, since the dataset is finite, bisimulation-based representation learning ought to be conceptualized as a pretraining process over the behavior policy $\pi_{\beta}$ of the dataset $\mathcal{D}$. However, the failure case above indicates that applying the bisimulation operator $\mathcal{F}^{\pi_{\beta}}$ and minimizing the associated Bellman bisimulation error does not necessarily ensure the sufficiency of the learned representation for downstream tasks. Ideally, if we had access to the fixed-point measurement $G^{\pi_{\beta}}_{\sim}$, then we could directly minimize the error between the approximation $G$ and the fixed-point $G^{\pi_{\beta}}_{\sim}$. However, given the static and incomplete nature of the dataset, acquiring the fixed-point $G^{\pi_{\beta}}_{\sim}$ explicitly is not feasible. From another perspective, the failure stems from out-of-distribution estimation errors. Assuming we could estimate the bisimulation exclusively with \textit{in-sample learning}, this issue could be intuitively mitigated. As such, we resort to expectile regression as a regularizer, allowing us to circumvent the need for out-of-sample / unseen state pairs.

\section{Method}
\label{sec:method}
In this section, we describe how we adapt existing bisimulation-based representation approaches to offline RL. We use the expectile-based operator to learn state representations that optimize the behavior measurement over the dataset, while avoiding overfitting to the incomplete data.
In addition, we analyze the impact of reward scaling and propose as a consequence to normalize the reward difference in the bisimulation Bellman residual in order to 
satisfy the specific nature of different instantiations of the bisimulation measurement while keeping a lower value error. 
The pseudo-code of our method is shown in Algorithms in Appendix B.

\subsection{Expectile-based Bisimulation Operator}
The efficacy of expectile regression in achieving \textit{in-sample learning} has already been demonstrated in previous research~\cite{DBLP:conf/iclr/KostrikovNL22, DBLP:conf/iclr/MaYH0ZZLL22}. Consequently, we will first describe our proposed \textit{expectile}-based operator, and subsequently show how expectile regression can effectively address the aforementioned challenge.
Specifically, we consider the update operator as follows:
\begin{equation}
\label{eq:object_exp}
\begin{aligned}
    &\left(\mathcal{F}_\tau^{\pi_{\beta}} G_{\phi}^{\pi_{\beta}}\right)(s_i,s_j):=\underset{G_{\phi}^{\pi_{\beta}}}{\arg \min }\; \mathbb{E}_{a_i \sim {\pi_{\beta}}(\cdot \mid s_i),a_j\sim{\pi_{\beta}}(\cdot|s_j)}\left[\tau[\hat{\epsilon}]_{+}^2+(1-\tau)[-\hat{\epsilon}]_{+}^2\right],\\
    &\hat{\epsilon} = \mathbb{E}_{\begin{subarray} {l}s_i'\sim T_{s_i}^{\pi_\beta} \\ s_j'\sim T_{s_j}^{\pi_\beta}\end{subarray}} \big[\underbrace{|r(s_i,a_i)-r(s_j,a_j)|+\gamma G_{\bar{\phi}}^{\pi_{\beta}}(s_i',s_j')}_{\text{target $G$}}  - G_{\phi}^{\pi_{\beta}}(s_i,s_j)\big],
\end{aligned}
\end{equation}
where $\hat{\epsilon}$ is the estimated one-step bisimulation Bellman residual, $\pi_{\beta}$ is the behavior policy,  $G_{\bar{\phi}}$ is the target encoder, updated using an exponential moving average, and $[\cdot]_+ = \max(\cdot, 0)$. Since the expectile operator in Equation~\ref{eq:object_exp} does not have a closed-form solution, in practice, we minimize it through gradient descent steps:
\begin{equation}
G_{\phi}^{\pi_{\beta}}(s_i,s_j)\leftarrow G_{\phi}^{\pi_{\beta}}(s_i,s_j)-2 \alpha \mathbb{E}_{a_i \sim {\pi_{\beta}}(\cdot \mid s_i),a_j\sim{\pi_{\beta}}(\cdot|s_j)}\left[\tau[\hat{\epsilon}]_{+}+(1-\tau)[\hat{\epsilon}]_{-}\right]
\end{equation}
where $\alpha$ is the step size. 
The fixed-point of the measurement obtained using this expectile-based operator is denoted as $G_{\tau}$.
Although the utilization of the \textit{expectile} statistics is well established, its application for estimating bisimulation measurement is not particularly intuitive. In the following, we will show how expectile-based operator can be helpful in addressing the aforementioned issue.
First, it is worth noting that when $\tau = 1/2$, this operator becomes the bisimulation expectation of the behavior policy, \textit{i.e.}, $\mathbb{E}_{\mu_{\pi_{\beta}}}[\hat{\epsilon}]$. Next, we shall consider how this operator performs when $\tau \rightarrow 1$.
We show that under certain assumptions, our method indeed approximates an ``optimal'' measurement in terms of the given dataset. We first prove a technical lemma stating that the update operator is still a contraction, and then prove a lemma relating different expectiles, finally we derive our main result regarding the ``optimality'' of our method.

\begin{lemma}
\label{le:contraction}
    For any $\mathcal \tau \in$ [0, 1), $\mathcal{F}_\tau^\pi$ is a $\mathcal \gamma_\tau$-contraction, where $\mathcal \gamma_\tau = 1 - 2\alpha(1 - \gamma) \min \left\{\tau, 1 - \tau \right\}<1$.
\end{lemma}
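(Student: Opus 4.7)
The plan is to treat $\mathcal{F}_\tau^\pi$ as the explicit single-step gradient-update operator stated right after the expectile objective, i.e., a map of the form
$\mathcal{F}_\tau^\pi G(s_i,s_j) = G(s_i,s_j) + 2\alpha\,\mathbb{E}_{a_i,a_j}[f(\hat{\epsilon}(G))]$, where
$f(x) := \tau[x]_+ - (1-\tau)[-x]_+$
is the piecewise-linear function whose two branches have slopes $\tau$ and $1-\tau$. The proof will hinge on two elementary facts: $f$ is monotone non-decreasing with slope sandwiched in $[\min(\tau,1-\tau),\max(\tau,1-\tau)]$, and the reward term in $\hat{\epsilon}$ cancels when one takes the residual difference for two candidate measurements.

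Concretely, I would fix $G_1,G_2\in\mathbb{M}$, set $\Delta := G_1-G_2$ and $D := \|\Delta\|_\infty$, and first check by direct computation that
$\hat{\epsilon}(G_1)-\hat{\epsilon}(G_2) = \gamma\,\mathbb{E}_{s_i'\sim T_{s_i}^{\pi},\,s_j'\sim T_{s_j}^{\pi}}[\Delta(s_i',s_j')]-\Delta(s_i,s_j)$,
a quantity that happens to be independent of $(a_i,a_j)$ because the reward difference does not depend on $G$. I would then use the integral representation $f(u_1)-f(u_2)=\int_{u_2}^{u_1} f'(u)\,du$ to write $f(\hat{\epsilon}(G_1))-f(\hat{\epsilon}(G_2)) = \lambda\,(\hat{\epsilon}(G_1)-\hat{\epsilon}(G_2))$ for an effective slope $\lambda \in [\min(\tau,1-\tau),\max(\tau,1-\tau)]$; averaging over $(a_i,a_j)$ preserves that containment and yields some $\bar\lambda$ in the same range.

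Substituting back, the update difference reduces to $(1-2\alpha\bar\lambda)\,\Delta(s_i,s_j) + 2\alpha\bar\lambda\gamma\,\mathbb{E}[\Delta(s_i',s_j')]$. Under the mild step-size condition $\alpha \le 1/(2\max(\tau,1-\tau))$ so that the first coefficient is non-negative, the triangle inequality together with $|\Delta|\le D$ give the bound $(1-2\alpha\bar\lambda(1-\gamma))\,D$. Replacing $\bar\lambda$ by its lower bound $\min(\tau,1-\tau)$ and taking the supremum over $(s_i,s_j)$ produces exactly the claimed rate $\gamma_\tau = 1-2\alpha(1-\gamma)\min(\tau,1-\tau)$.

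The main obstacle is the piecewise-linear nature of $f$: the ``mean-value slope'' $\lambda$ is not a single constant but jumps between $\tau$ and $1-\tau$ depending on the sign of its argument, and the two residuals may straddle zero. The integral representation circumvents this cleanly, since $f'\in\{\tau,1-\tau\}$ almost everywhere and any weighted average therefore lies in $[\min(\tau,1-\tau),\max(\tau,1-\tau)]$. A secondary implicit ingredient is the step-size restriction needed to keep $1-2\alpha\bar\lambda$ non-negative before invoking the triangle inequality; the lemma statement leaves this tacit, but any choice such as $\alpha\le 1/2$ suffices.
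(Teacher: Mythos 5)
Your proof is correct, and it reaches the stated rate by a genuinely different route than the paper. The paper's proof first establishes the rate $(1-\alpha(1-\gamma))$ for $\tau=1/2$, then introduces two auxiliary operators $\mathcal{F}_+^\pi$ and $\mathcal{F}_-^\pi$ built from $[\delta]_+$ and $[\delta]_-$, proves each is a non-expansion via an explicit four-way case analysis on the signs of the two residuals, and finally writes $\mathcal{F}_\tau^\pi$ as the combination $(1-2\alpha)G + 2\alpha\tau\,\mathcal{F}_+^\pi G + 2\alpha(1-\tau)\,\mathcal{F}_-^\pi G$, treating $\tau>1/2$ and $\tau<1/2$ separately to extract $1-2\alpha(1-\gamma)\min\{\tau,1-\tau\}$. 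You instead observe that the whole asymmetric update is $G + 2\alpha\,\mathbb{E}[f(\hat\epsilon)]$ for the monotone piecewise-linear $f$ with slopes $\tau$ and $1-\tau$, note that $\hat\epsilon(G_1)-\hat\epsilon(G_2)=\gamma\,\mathbb{E}[\Delta(s')]-\Delta(s)$ is action-independent because the reward term cancels, and bound the mean-value slope of $f$ uniformly in $[\min(\tau,1-\tau),\max(\tau,1-\tau)]$; this collapses the paper's sign case analysis and its $\tau\gtrless 1/2$ split into a single computation. What the paper's decomposition buys is an explicit convex-combination structure (reused in the proof of Lemma 5 to compare $G_{\tau'}$ and $G_\tau$); what your argument buys is brevity, uniformity across sign configurations, and an explicit statement of the step-size restriction ($1-2\alpha\bar\lambda\ge 0$) that the paper's triangle-inequality step also needs but leaves tacit. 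One small caveat common to both arguments: at $\tau=0$ the bound degenerates to $\gamma_\tau=1$, so the strict inequality in the lemma statement implicitly requires $\tau\in(0,1)$ and $\alpha>0$; this is an issue with the statement, not with your proof.
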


\begin{lemma}
\label{le:geq}
For any $\mathcal \tau, \tau' \in [0, 1)$ with $\tau' \geq \tau$, and for all $s_i,s_j \in \mathcal{S}$ and any $\alpha$, we have $G_{\tau'} \geq  G_\tau$.
\end{lemma}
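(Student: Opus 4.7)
The plan is to establish $G_{\tau'} \geq G_\tau$ by a monotone fixed-point iteration argument that combines two monotonicity properties of the expectile-based operator with the contraction guarantee of Lemma~\ref{le:contraction}. Concretely, I would start from $G_\tau$ itself and iterate $\mathcal{F}_{\tau'}^\pi$, aiming to show that the resulting sequence is pointwise non-decreasing and converges to $G_{\tau'}$, so that the desired inequality follows in the limit.

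The first monotonicity is in the parameter $\tau$. Writing the per-sample update direction as $\phi_\tau(x) := \tau[x]_+ + (1-\tau)[x]_-$, one checks pointwise that $\phi_\tau(x)$ is non-decreasing in $\tau$: when $x \geq 0$ the coefficient $\tau$ multiplies a non-negative number and thus increases with $\tau$, and when $x < 0$ the coefficient $(1-\tau)$ multiplies a non-positive number, so increasing $\tau$ again raises the value. Plugging this inside the expectation defining the operator and evaluating at $G = G_\tau$ yields
\begin{equation*}
    \mathcal{F}_{\tau'}^\pi(G_\tau)(s_i, s_j) \;\geq\; \mathcal{F}_{\tau}^\pi(G_\tau)(s_i, s_j) \;=\; G_\tau(s_i, s_j),
\end{equation*}
where the final equality is the fixed-point identity for $G_\tau$.

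The second monotonicity is in the argument $G$: I would show that $G_1 \geq G_2$ pointwise implies $\mathcal{F}_{\tau'}^\pi(G_1) \geq \mathcal{F}_{\tau'}^\pi(G_2)$. The per-sample update has the form $G(s_i,s_j) \mapsto G(s_i,s_j) + 2\alpha\, \phi_{\tau'}\!\left(Y - G(s_i,s_j)\right)$, with $Y = |r_i - r_j| + \gamma G(s_i',s_j')$, and this map is non-decreasing in both $G(s_i,s_j)$ and $Y$ under the small-step-size condition $2\alpha\max(\tau',1-\tau') \leq 1$, consistent with the regime of Lemma~\ref{le:contraction}. Since raising $G$ pointwise also raises $Y$ sample-by-sample, the inequality is preserved after taking expectations. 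Combining this with the previous paragraph, the sequence $G_\tau,\; \mathcal{F}_{\tau'}^\pi(G_\tau),\; (\mathcal{F}_{\tau'}^\pi)^2(G_\tau),\ldots$ is pointwise non-decreasing, and by Lemma~\ref{le:contraction} it converges to $G_{\tau'}$; taking the limit delivers $G_\tau \leq G_{\tau'}$.

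The main obstacle I anticipate is cleanly verifying the $G$-monotonicity of the gradient-descent operator: changing $G$ alters both the current iterate and the bootstrap target simultaneously, so one must argue that, under the step-size regime implicit in Lemma~\ref{le:contraction}, these two effects reinforce rather than cancel. Once this monotonicity lemma is pinned down, everything else reduces to standard monotone convergence inside a contraction.
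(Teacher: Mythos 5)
Your proposal is correct, and its first step is exactly the paper's computation: the paper shows $\mathcal{F}_{\tau'}^\pi G - \mathcal{F}_{\tau}^\pi G = 2\alpha(\tau'-\tau)\,\mathbb{E}^\pi\bigl[[\delta]_+ - [\delta]_-\bigr] \geq 0$ for any fixed $G$, which is precisely your pointwise monotonicity of $\phi_\tau(x)=\tau[x]_+ + (1-\tau)[x]_-$ in $\tau$ (note $[\delta]_+ - [\delta]_- = |\delta|$). Where you differ is that the paper stops there and simply asserts ``therefore $G_{\tau'} \geq G_\tau$,'' whereas you supply the missing bridge from an ordering of the operators at a single $G$ to an ordering of their fixed points: namely, monotonicity of $\mathcal{F}_{\tau'}^\pi$ in its argument (valid under the step-size condition $2\alpha\max(\tau',1-\tau')\leq 1$, since $\partial H/\partial u = 1-2\alpha\phi_{\tau'}'(\cdot)\geq 0$ and the bootstrap target contributes $2\alpha\gamma\phi_{\tau'}'(\cdot)\geq 0$), followed by the monotone iteration $G_\tau = \mathcal{F}_\tau^\pi G_\tau \leq \mathcal{F}_{\tau'}^\pi G_\tau \leq (\mathcal{F}_{\tau'}^\pi)^2 G_\tau \leq \cdots \to G_{\tau'}$ using the contraction of Lemma~\ref{le:contraction}. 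This extra step is not optional --- without argument-monotonicity, $\mathcal{F}_{\tau'}^\pi G \geq \mathcal{F}_\tau^\pi G$ alone does not order the fixed points --- so your version is the more rigorous one; its only cost is the explicit small-step-size hypothesis, which is anyway implicit in the regime where Lemma~\ref{le:contraction} gives a meaningful contraction factor.
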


\begin{theorem}
\label{thm: tau_to_1}
In deterministic MDP and fixed finite dataset, we have:
\begin{equation}
\begin{aligned}
    \lim_{\tau\rightarrow1} G_\tau(s_i,s_j) = \max_{\substack{ a_i \in \mathcal{A}, a_j \in \mathcal{A} \\\text{s.t. }\pi_\beta(a_i|s_i) > 0, \pi_\beta(a_j|s_j) > 0}}G^*_{\sim}((s_i,a_i),(s_j,a_j)).
\end{aligned}
\end{equation}
where $G^*_{\sim}((s_i,a_i),(s_j,a_j))$ is a fixed-point measurement constrained to the dataset and defined on the state-action space $\mathcal{S}\times\mathcal{A}$ as
\begin{equation*}
\label{eq:optimal_G}
\begin{aligned}
\resizebox{\linewidth}{!}{$
G^*_{\sim}((s_i,a_i),(s_j,a_j)) = |r(s_i,a_i) - r(s_j,a_j)| + \gamma \mathbb{E}_{\begin{subarray} {l}s_i'\sim T_{s_i}^{\pi_{\beta}} \\ s_j'\sim T_{s_j}^{\pi_\beta}\end{subarray}}\left[\underset{\substack{ a_i' \in \mathcal{A}, a_j' \in \mathcal{A} \\\text{s.t. }\pi_\beta(a_i'|s_i') > 0, \pi_\beta(a_j'|s_j') > 0}}{\max}G^*_{\sim}((s_i',a_i'),(s_j',a_j'))\right].$
}
\end{aligned}
\end{equation*}
\end{theorem}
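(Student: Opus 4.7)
The plan is to combine the two preceding lemmas with the classical fact that the $\tau$-expectile of a bounded random variable converges to its essential supremum as $\tau \to 1$, and then identify the limit with the unique fixed point of a ``max'' version of the operator $\mathcal{F}_\tau^{\pi_\beta}$.

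First I would show that the pointwise limit $G_\infty(s_i,s_j) := \lim_{\tau\to 1} G_\tau(s_i,s_j)$ exists. By Lemma~\ref{le:contraction}, $G_\tau$ is the unique fixed point of $\mathcal{F}_\tau^{\pi_\beta}$ for each $\tau \in [0,1)$, and by Lemma~\ref{le:geq} the family $\{G_\tau\}$ is monotonically nondecreasing in $\tau$. Since rewards are bounded in $[R_{\min},R_{\max}]$, iterating $\mathcal{F}_\tau^{\pi_\beta}$ from the zero measurement produces the uniform bound $0 \le G_\tau \le (R_{\max}-R_{\min})/(1-\gamma)$, and monotone convergence delivers $G_\infty$. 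Because $\mathcal{D}$ is finite, only finitely many state pairs are relevant, so this pointwise convergence is in fact uniform.

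Second, I would recast $\mathcal{F}_\tau^{\pi_\beta}$ in the language of expectiles. For a fixed pair $(s_i,s_j)$ and a measurement $G$, set
\[
y_G(a_i,a_j) := |r(s_i,a_i) - r(s_j,a_j)| + \gamma\, G(T(s_i,a_i),\, T(s_j,a_j)),
\]
using that the MDP is deterministic. The inner $\arg\min$ in Eq.~(\ref{eq:object_exp}) is, by definition, the $\tau$-expectile of the random variable $y_G(A_i,A_j)$ with $(A_i,A_j)\sim \pi_\beta(\cdot|s_i)\otimes \pi_\beta(\cdot|s_j)$. Because this distribution has finite support (only $\pi_\beta$-supported actions appear), the classical identity $\lim_{\tau\to 1} e_\tau[Y]=\max\mathrm{supp}(Y)$ applies to any such bounded $Y$.

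Third, I would introduce the candidate limit operator $\mathcal{F}_\infty G(s_i,s_j) := \max_{a_i,a_j} y_G(a_i,a_j)$, where the max ranges over action pairs with $\pi_\beta(a_i|s_i)\pi_\beta(a_j|s_j)>0$. A standard sup-norm estimate shows $\mathcal{F}_\infty$ is a $\gamma$-contraction, so it admits a unique fixed point; comparing recursions, this fixed point equals $(s_i,s_j)\mapsto \max_{a_i,a_j} G^*_{\sim}((s_i,a_i),(s_j,a_j))$ as defined in the statement. I then pass to the limit in $G_\tau = \mathcal{F}_\tau^{\pi_\beta} G_\tau$: since the expectile is $1$-Lipschitz in the sup-norm of its argument (by monotonicity and translation equivariance of $e_\tau$), we have $\|\mathcal{F}_\tau^{\pi_\beta} G_\tau - \mathcal{F}_\tau^{\pi_\beta} G_\infty\|_\infty \le \gamma\|G_\tau - G_\infty\|_\infty \to 0$, while the pointwise expectile-to-max limit applied to the fixed random variable $y_{G_\infty}$ gives $\mathcal{F}_\tau^{\pi_\beta} G_\infty \to \mathcal{F}_\infty G_\infty$. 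Combining the two yields $G_\infty = \mathcal{F}_\infty G_\infty$, and uniqueness identifies $G_\infty$ with the claimed maximum.

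The main obstacle is precisely this last step, where the expectile is evaluated on a $\tau$-dependent random variable $y_{G_\tau}$ at the same time that $\tau \to 1$. A clean handling rests on (i) the $1$-Lipschitz stability of the expectile functional with respect to perturbations of its input, and (ii) the uniform convergence $G_\tau \uparrow G_\infty$ afforded by the finiteness of $\mathcal{D}$. With these in hand, identifying $G_\infty$ with the unique fixed point of $\mathcal{F}_\infty$ is routine and produces the stated formula.
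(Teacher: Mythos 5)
Your proof is correct, and it shares the paper's skeleton --- monotonicity of $G_\tau$ in $\tau$ (Lemma~\ref{le:geq}) plus the classical fact that $\tau$-expectiles of a bounded, finitely supported random variable converge to the supremum of the support --- but it is more complete than the paper's own argument in one important respect. The paper's proof establishes only two facts: an upper bound $G_\tau \le \max_{a_i,a_j} G^*_\sim$ for every $\tau$ (justified by ``a convex combination is smaller than the maximum''), and the expectile-to-supremum limit for a \emph{fixed} random variable; it then asserts that combining these yields the theorem. Strictly speaking, that combination only delivers $\lim_{\tau\to 1}G_\tau \le \max G^*_\sim$, since the expectile lemma is never explicitly applied to the coupled system in which the target $y_{G_\tau}$ itself moves with $\tau$. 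Your third and fourth steps supply exactly the missing half: you introduce the limiting max-operator $\mathcal{F}_\infty$, verify it is a $\gamma$-contraction whose unique fixed point is $\max_{a_i,a_j}G^*_\sim((\cdot,a_i),(\cdot,a_j))$, and then justify the interchange of limits via the $1$-Lipschitz stability of the expectile functional together with the uniform convergence $G_\tau\uparrow G_\infty$ on the finitely many relevant pairs. One small reconciliation worth noting: the paper's $\mathcal{F}_\tau^{\pi_\beta}$ is a gradient-step operator with contraction factor $1-2\alpha(1-\gamma)\min\{\tau,1-\tau\}$ (Lemma~\ref{le:contraction}), whereas your reformulation uses the exact expectile-assignment operator, which is a $\gamma$-contraction; the two have the same fixed points (the first-order condition of the asymmetric loss), so your identification of $G_\tau$ is consistent with the paper's, and the argument goes through.
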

Intuitively, $G_\sim((s_i,a_i),(s_j,a_j))$ can be interpreted as a state-action value function $Q(\tilde{s},\tilde{a})$ in a lifted MDP $\widetilde{M}$, and $G_\sim(s_i,s_j)$ as a state value function $V(\tilde{s})$. We defer the detailed explanation to Appendix E.

Theorem~\ref{thm: tau_to_1} illustrates that, as $\tau\rightarrow 1$, we are effectively approximating the maximum $G_\sim((s_i,a_i),(s_j,a_j))$ over actions $a_i',a_j'$ from the dataset. When we set $\tau=1$, the expectile-based bisimulation operator achieves fully in-sample learning: we only consider state pairs that have corresponding actions in the dataset. For instance, only when we have $(s_i',a_i')\in\mathcal{D}$ and $(s_j',a_j')\in\mathcal{D}$, can we apply the measurement of $G_\sim^*$. As such, by manipulating $\tau$, we balance a trade-off between minimizing the expected bisimulation residual (for $\tau = 0.5$) and evaluating $G_\sim^*((s_i,a_i),(s_j,a_j))$ solely on the dataset (for $\tau=1$), thereby sidestepping the failure case outlined in Proposition~\ref{pro:not_sufficient} in an implicit manner.

\subsection{Reward Scaling}

\label{sec:reward_scale}
Most previous works~\cite{DBLP:conf/aaai/Castro20, DBLP:conf/iclr/0001MCGL21,DBLP:conf/nips/CastroKPR21, DBLP:conf/aaai/Zang0W22} have overlooked the impact of reward scaling in the bisimulation operator. To demonstrate its importance, we investigate a more general form of the bisimulation operator in Equation~\ref{eq:formal_bisim}, given as:
\begin{equation}
    \label{eq:rew_scale_bisim}
 \mathcal{F}^{\pi} G(s_i,s_j) = c_r \cdot|r_{s_i}^{\pi} - r_{s_j}^{\pi}| + c_k \cdot \mathbb{E}_{s_i',s_j'}^\pi[G(s_i',s_j')].
\end{equation}
We then can derive the following: 
\begin{equation}
\label{eq:measurement_scale}
\begin{aligned}
    G^{\pi}_{\sim}(s_i,s_j) &= \mathcal{F}^{\pi}G^{\pi}_{\sim}(s_i,s_j)=c_r \cdot|r_{s_i}^{\pi} - r_{s_j}^{\pi}| + c_k \cdot \mathbb{E}_{s_i',s_j'}^\pi[G^{\pi}_{\sim}(s_i',s_j')]\\
    &\leq c_r\cdot (R_\text{max} - R_\text{min}) + c_k \cdot \mathbb{E}_{s_i',s_j'}^\pi[G^{\pi}_{\sim}(s_i',s_j')]\\
    &\leq c_r\cdot  (R_\text{max} - R_\text{min}) + c_k \cdot \max_{s_i',s_j'} G^{\pi}_{\sim}(s_i',s_j'). 
\end{aligned}
\end{equation}
Accordingly, we have $G^{\pi}_{\sim}(s_i,s_j)\leq \frac{c_r\cdot (R_\text{max} - R_\text{min})}{1-c_k}$. Adopting the conventional settings of $c_r=1$ and $c_k=\gamma$ as suggested in \cite{DBLP:conf/nips/CastroKPR21,DBLP:conf/aaai/Zang0W22}, could possibly result in a relatively large upper bound of $G^{\pi}_{\sim}$ between states. This is due to the common practice of setting $\gamma$ at $0.99$.
However, when bisimulation operators are instantiated with bounded distances, e.g., cosine distance, such a setting may be unsuitable. Therefore, it becomes important to tighten the upper bound.

\begin{figure}[t]
    \centering
    \subfigure{
        \includegraphics[width=0.4\textwidth]{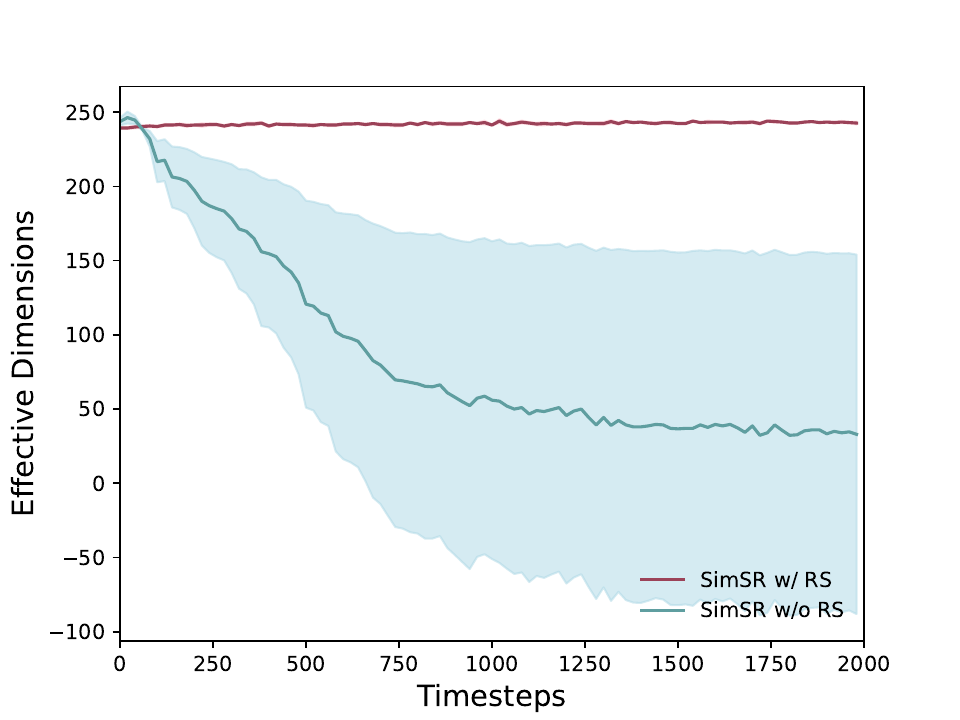}
    }
    \hspace{2mm}
    \subfigure{
        \includegraphics[width=0.4\textwidth]{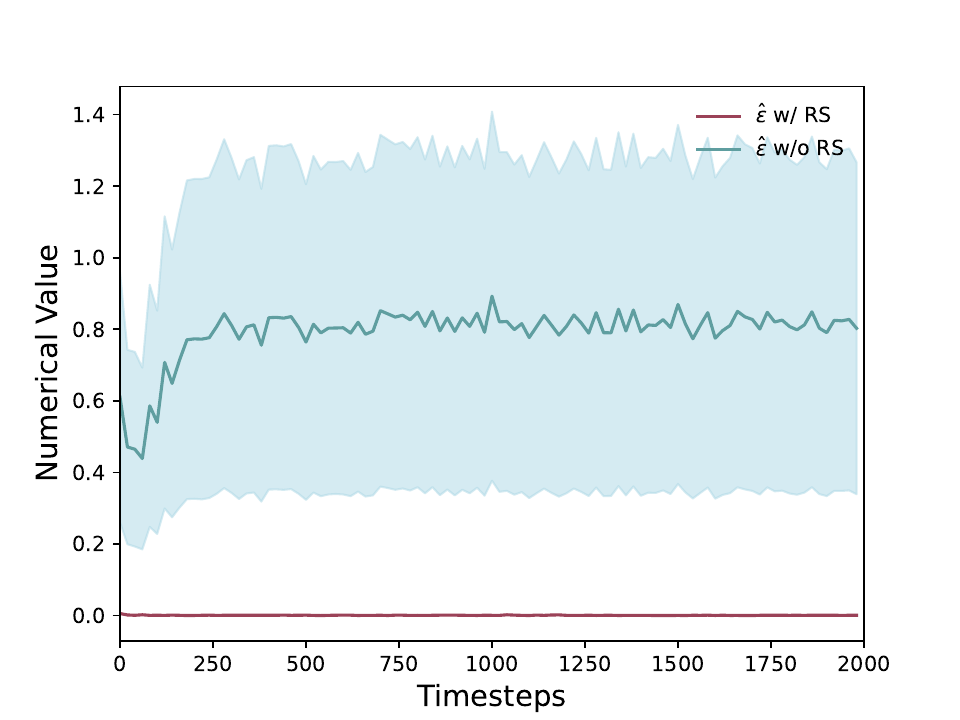}
    }
    \caption{The effectiveness of Reward Scaling (\textbf{\textit{RS}}) in SimSR on halfcheetah-medium-expert-v2, with results averaged on 3 random seeds. (\textbf{Left}) Effective Dimension~\cite{zang2023behavior} comparison: without \textbf{\textit{RS}}, there is a significant reduction in the effective dimension, accompanied by a marked increase in instability as training progresses. (\textbf{Right}) Numerical value comparison of estimated bisimulation Bellman residual: $\hat{\epsilon}$ is persistently greater than 0 in the absence of \textbf{\textit{RS}}, which indicates that $\text{target $G$}$ is invariably larger than $G_\phi$, suggesting that $G_\phi$ does not achieve steady convergence.}
    \label{fig:reward_scaling}
\end{figure}

Besides, we can also derive the value bound between the ground truth value function and the approximated value function:

\begin{theorem}
\label{thm:value_error}
    (Value bound based on on-policy bisimulation measurements in terms of approximation error). Given an MDP $\widetilde{\mathcal{M}}$ constructed by aggregating states in an $\omega$-neighborhood, and an encoder $\phi$ that maps from states in the original MDP $\mathcal{M}$ to these clusters, the value functions for the two MDPs are bounded as
    \begin{equation}
    \label{eq:value_error}
        \left|V^\pi\left(s\right)-\widetilde{V}^\pi\left(\phi\left(s\right)\right)\right| \leq \frac{2 \omega+\hat{\Delta}}{c_r(1-\gamma)}.
    \end{equation}
    where $\hat{\Delta}:=\|\hat{G}_{\sim}^{\pi}-\hat{G}_{\phi}^{\pi}\|_\infty$ is the approximation error.
\end{theorem}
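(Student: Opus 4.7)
The plan is to follow the classical ``reduction via bisimulation'' argument in the spirit of Ferns--Panangaden--Precup and Castro, but carefully tracking both the approximation slack $\hat{\Delta}$ and the reward-scale constants $c_r,c_k$ introduced in Equation~\ref{eq:rew_scale_bisim}. I will (i) translate the $\omega$-aggregation condition into a bound on the true fixed-point measurement $\hat{G}^\pi_\sim$ within each cluster, (ii) convert that into a pointwise value-difference bound by unrolling the bisimulation fixed-point equation, and (iii) close the induction through the Bellman operator on the aggregated MDP to pick up the $1/(1-\gamma)$ factor.

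Step~1 (intra-cluster bound on $\hat{G}^\pi_\sim$). For any $s_1,s_2$ with $\phi(s_1)=\phi(s_2)$, the $\omega$-neighborhood construction gives $\hat{G}^\pi_\phi(s_1,s_2)\leq 2\omega$ (two radius-$\omega$ balls around a common cluster representative). Combined with $\|\hat{G}^\pi_\sim-\hat{G}^\pi_\phi\|_\infty=\hat{\Delta}$, the triangle inequality in $\mathbb{R}$ yields $\hat{G}^\pi_\sim(s_1,s_2)\leq 2\omega+\hat{\Delta}$.

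Step~2 (bisimulation distance dominates value difference). Specializing Equation~\ref{eq:rew_scale_bisim} to $c_k=\gamma$ and iterating the fixed-point identity gives the unrolled form $\hat{G}^\pi_\sim(s_i,s_j)=c_r\,\mathbb{E}_\pi\bigl[\sum_{t\geq 0}\gamma^{t}\lvert r^\pi_{s_i^{(t)}}-r^\pi_{s_j^{(t)}}\rvert\bigr]$ along two independent trajectories starting at $s_i$ and $s_j$. Bringing the absolute value inside via Jensen/triangle shows $\lvert V^\pi(s_i)-V^\pi(s_j)\rvert\leq \hat{G}^\pi_\sim(s_i,s_j)/c_r$. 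In particular, for the two states of Step~1, $\lvert V^\pi(s_1)-V^\pi(s_2)\rvert\leq (2\omega+\hat{\Delta})/c_r$.

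Step~3 (propagating through $\widetilde{\mathcal{M}}$). Let $e(s):=V^\pi(s)-\widetilde{V}^\pi(\phi(s))$. Writing the two Bellman evaluation equations, one on $\mathcal{M}$ and one on $\widetilde{\mathcal{M}}$, and choosing for $\widetilde{\mathcal{M}}$ the natural averaged reward and lifted transition over each cluster, the single-step residual between $V^\pi$ and $\widetilde{V}^\pi\circ\phi$ is bounded by the intra-cluster value gap of Step~2, namely $(2\omega+\hat{\Delta})/c_r$. Applying $\gamma$-contraction of the policy-evaluation operator to $e$ (equivalently, telescoping the discounted residuals) gives $\|e\|_\infty\leq (2\omega+\hat{\Delta})/\bigl(c_r(1-\gamma)\bigr)$, which is the claim.

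Main obstacle. The delicate part is Step~3: I must argue that replacing $T$ by the lifted $\widetilde{T}$ in a single Bellman backup injects only an error of order (cluster diameter)$/c_r$ and not something larger. This is exactly the content the bisimulation-metric bound is designed to absorb, but it forces the ``right'' choice of aggregated reward and transition; under a clumsier aggregation one would pick up an extra factor. A secondary subtlety is that the clean $c_r(1-\gamma)$ denominator implicitly relies on $c_k=\gamma$; any other $c_k$ in Equation~\ref{eq:rew_scale_bisim} would leave a different constant in Step~2 and change the final bound.
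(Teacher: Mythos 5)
Your proposal is correct and follows essentially the same route as the paper's proof: bound $\hat{G}^\pi_\sim$ within a cluster by $2\omega+\hat{\Delta}$ via the triangle inequality, use the fact that the (independently coupled, $c_k=\gamma$) fixed point dominates $c_r$ times the value difference, and close a Bellman recursion on the aggregated MDP to obtain the $1/(1-\gamma)$ factor. The only difference is presentational: the paper imports the one-step decomposition from Lemma~8 of Kemertas and Aumentado-Armstrong rather than deriving the recursion for $e(s)=V^\pi(s)-\widetilde{V}^\pi(\phi(s))$ directly as you do, and your Step~2 makes explicit the value-Lipschitz property that the paper uses implicitly when bounding the transition term by $G^\pi_\sim$.
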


In essence, Equation~\ref{eq:measurement_scale} and Theorem~\ref{thm:value_error} reveal that: (i) there is a positive correlation between the reward scale $c_r$ and the upper bound of the fixed-point $G_{\sim}^{\pi}$, and (ii) a larger reward scale $c_r$ facilitates a more accurate approximation of the value function $\widetilde{V}^{\pi}(\phi(s))$ to its ground-truth value $V^{\pi}(s)$. It is important to note that $c_r$ also impacts the value of $\hat{\Delta}$, as depicted in Figure~\ref{fig:reward_scaling}(Right)\footnote{Despite Figure~\ref{fig:reward_scaling}(Right) depicting the approximate residual $\hat{\epsilon}$, we have drawn a connection between $\epsilon^{\pi}_{\phi}$ and $\Delta_{\phi}^{\pi}$ in the Appendix, which can reflect the possible situations for $\hat{\Delta}$.}.  Therefore, it is crucial to first ensure the alignment with the instantiation of the bisimulation measurement, and then choose the largest possible $c_r$ to minimize the value error.
For instance, as the SimSR operator~\cite{DBLP:conf/aaai/Zang0W22} uses the cosine distance, $c_k=\gamma$ is predetermined. We should thus set $c_r\in [0,1-\gamma]$, and apply min-max normalization to the reward function. This can make $G_{\sim}^{\pi}\leq 1$ and therefore be consistent with the maximum value of 1 of the cosine distance. To achieve a tighter bound in Equation\ref{eq:value_error}, we should then maximize the reward scale, setting $c_r$ to $1-\gamma$. 
Figure~\ref{fig:reward_scaling} illustrates the effectiveness of this reward scaling.

\section{Experiments}
\label{sec:experiments}
\subsection{Performance Comparison in D4RL Benchmark}

\begin{figure*}[th]
\centering
\includegraphics[width=.9\textwidth]{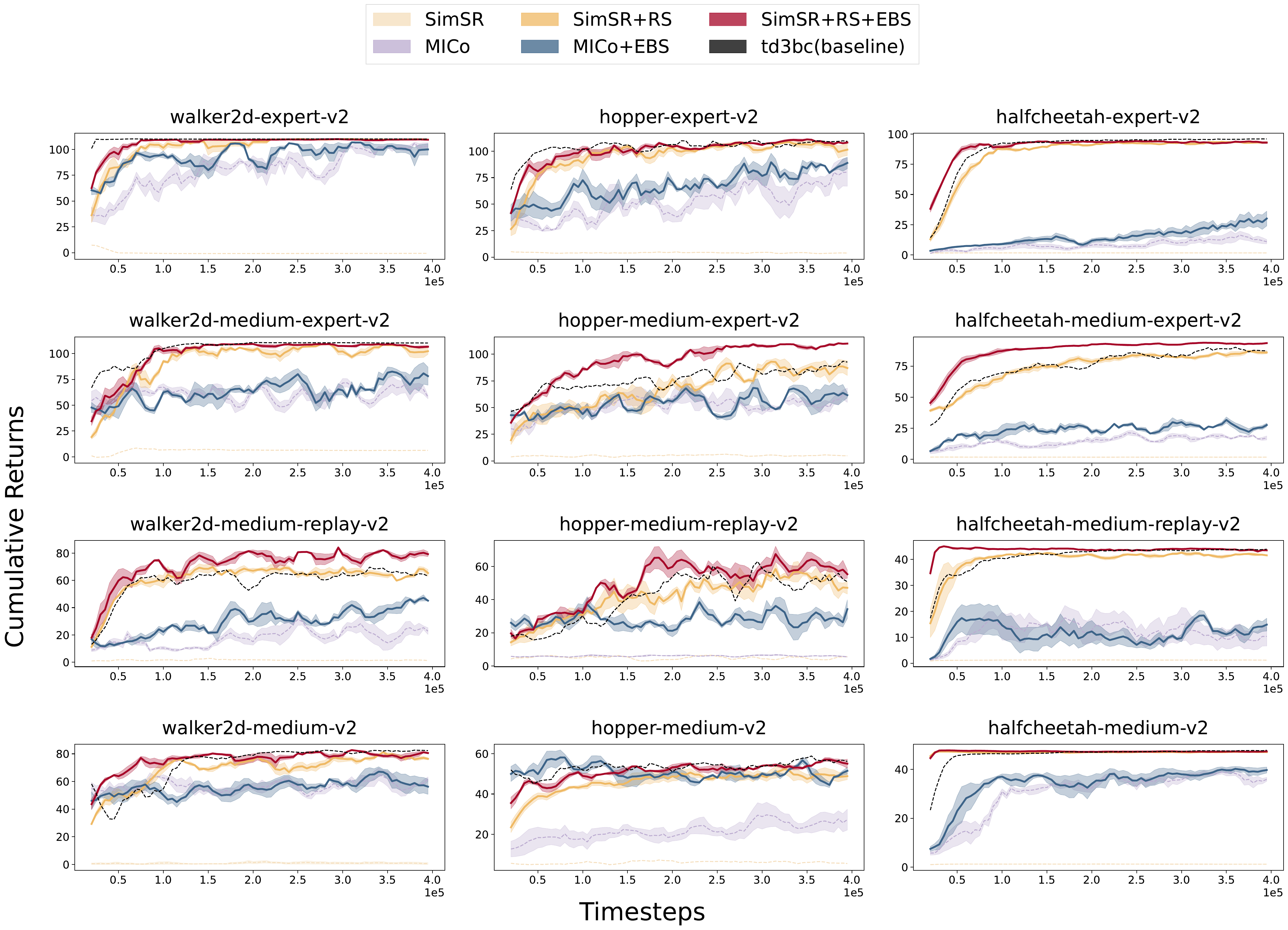}
\caption{Performance comparison on 12 D4RL tasks over 10 seeds with one standard error shaded in the default setting. For every seed, the average return is computed every 10,000 training steps, averaging over 10 episodes. The horizontal axis indicates the number of transitions trained on. The vertical axis indicates the normalized average return.}
\label{fig:standard_curve}
\end{figure*}

\paragraph{Implementation Details} We analyze our proposed method on the D4RL benchmark~\cite{DBLP:journals/corr/abs-2004-07219} of OpenAI gym MuJoCo tasks~\cite{DBLP:conf/iros/TodorovET12} which includes a variety of datasets that have been commonly used in the Offline RL community. To illustrate the effectiveness of our method, we implement it on top of two bisimulation-based approaches, \textbf{MICo}~\cite{DBLP:conf/nips/CastroKPR21} and \textbf{SimSR}~\cite{DBLP:conf/aaai/Zang0W22}. It is worth noting that there are two versions of SimSR depending on its use of a latent dynamics model: SimSR\_basic follows the dynamics that the environment provides, and SimSR\_full constructs latent dynamics for sampling successive latent states. We opt for SimSR\_basic as our backbone, as it exhibits superior and more stable performance in the D4RL benchmark tasks compared to SimSR\_full. Additionally, to explore the impact of bisimulation-based representation learning on the downstream performance of policy learning, we build these approaches on top of the Offline RL method \textbf{TD3BC}~\cite{DBLP:conf/nips/FujimotoG21}. We examine three environments: halfcheetah, hopper, and walker2d, with four datasets per task: expert, medium-expert, medium-replay, and medium. 
We first pretrain the encoder during $100k$ timesteps, then freeze it, pass the raw state through the frozen encoder to obtain the representations that serve as input for the Offline RL algorithm. Further details on the experiment setup are included in Appendix F.

\begin{figure}[t]
    \centering
    \subfigure{
        \includegraphics[width=0.4\textwidth]{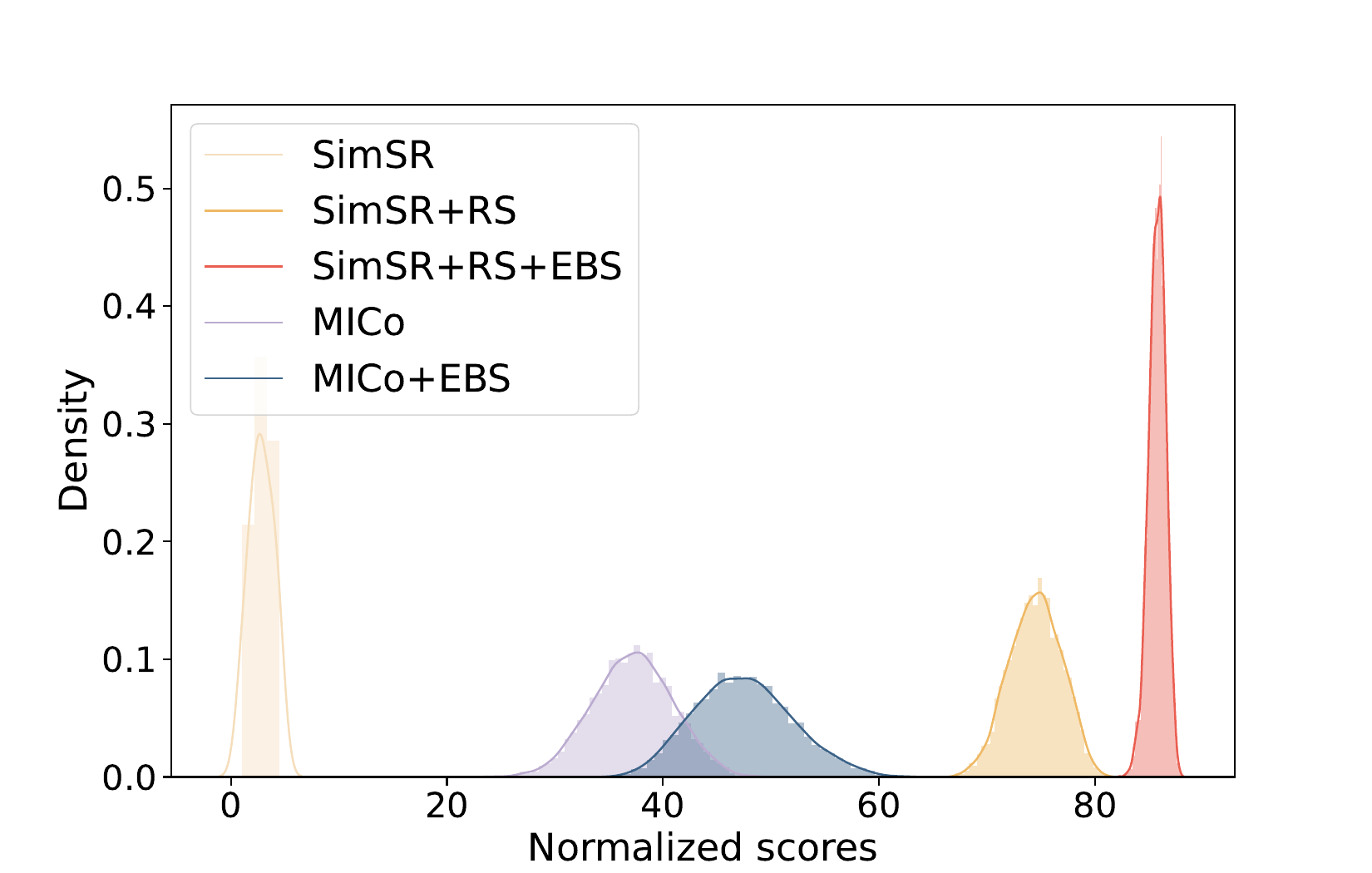}
    }
    \hspace{2mm}
    \subfigure{
        \includegraphics[width=0.4\textwidth]{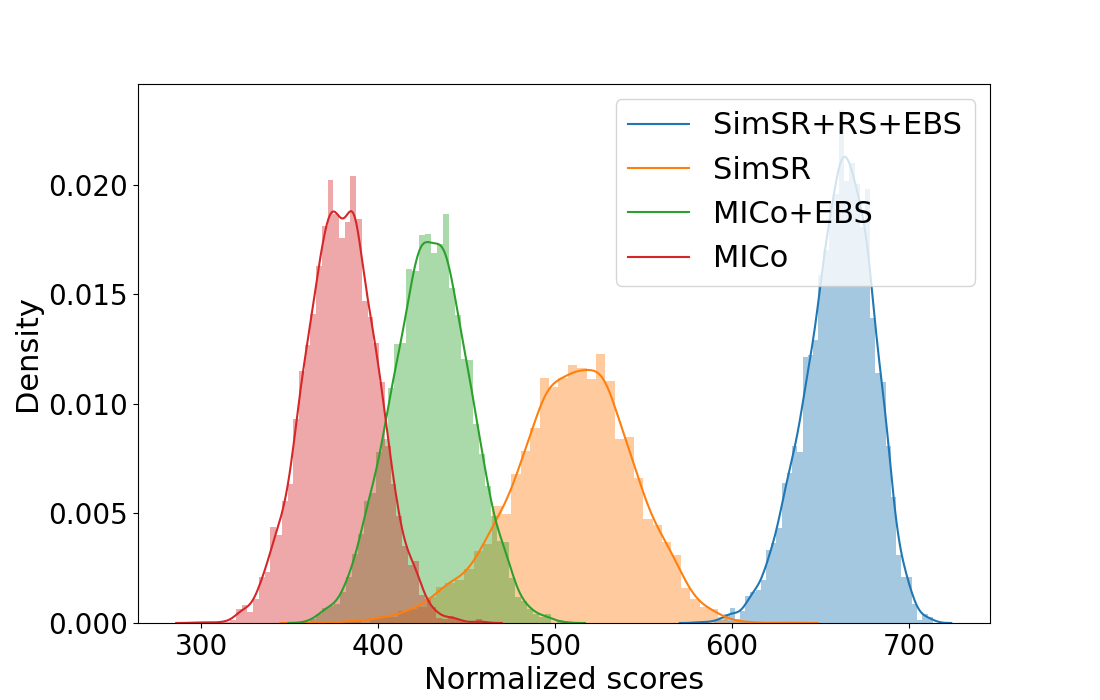}
    }
    \caption{Bootstrapping distributions for uncertainty in IQM (\textit{i.e.} inter-quartile mean) measurement on D4RL tasks (left) and visual D4RL tasks (right), following from the performance criterion in~\cite{DBLP:conf/nips/AgarwalSCCB21}.}
    \label{fig:IQM}
\end{figure}

\paragraph{Analysis}
Figure~\ref{fig:standard_curve} illustrates the performance of two approaches and their variants in the D4RL tasks. 
We use \textbf{\textit{EBS}} to represent the scheme of employing the expectile-based operator, while \textbf{\textit{RS}} denotes the reward scaling scheme. The latter includes both min-max reward normalization and penalization coefficient with $(1-\gamma)$ in the bisimulation operator. 
As discussed in Section~\ref{sec:reward_scale}, the role of reward scaling varies depending on the specific instantiation of $G$\footnote{Since MICo does not necessitate a particular upper bound, RS may  be harmful to its performance. Our experiments have substantiated this observation, leading us to exclude the MICo+RS results from Figure~\ref{fig:standard_curve}.}. We observe that without \textbf{\textit{RS}}, SimSR almost fails in every dataset, which aligns with our understanding of the critical role reward scaling plays. The results also illustrate that \textbf{\textit{EBS}} effectively enhances the downstream performance of the policy for both SimSR and MICo. It is noteworthy that in this experiment, we set $\tau=0.6$ for the expectile in SimSR and $\tau=0.7$ in MICo across all datasets, demonstrating the robustness of this hyperparameter. Regarding SimSR, when \textbf{\textit{RS}} is applied (\textbf{\textit{SimSR+RS}}), the performance is comparable to the TD3BC baseline, while the incorporation of the expectile-based operator (\textbf{\textit{SimSR+RS+EBS}}) further enhances final performance and sample efficiency. Besides, we additionally present the IQM normalized return of all variants in Figure~\ref{fig:IQM}, illustrating our performance gains over the backbones.
Further, we have also constructed an ablation study to investigate the impact of different settings of $\tau$, the results show that a suitable expectile $\tau$ is crucial for control tasks. We present the corresponding results in Appendix E.

\subsection{Performance Comparison in V-D4RL Benchmark} 

\begin{wrapfigure}{rt}{0.5\textwidth}
  \begin{center}
    \includegraphics[width=\linewidth]{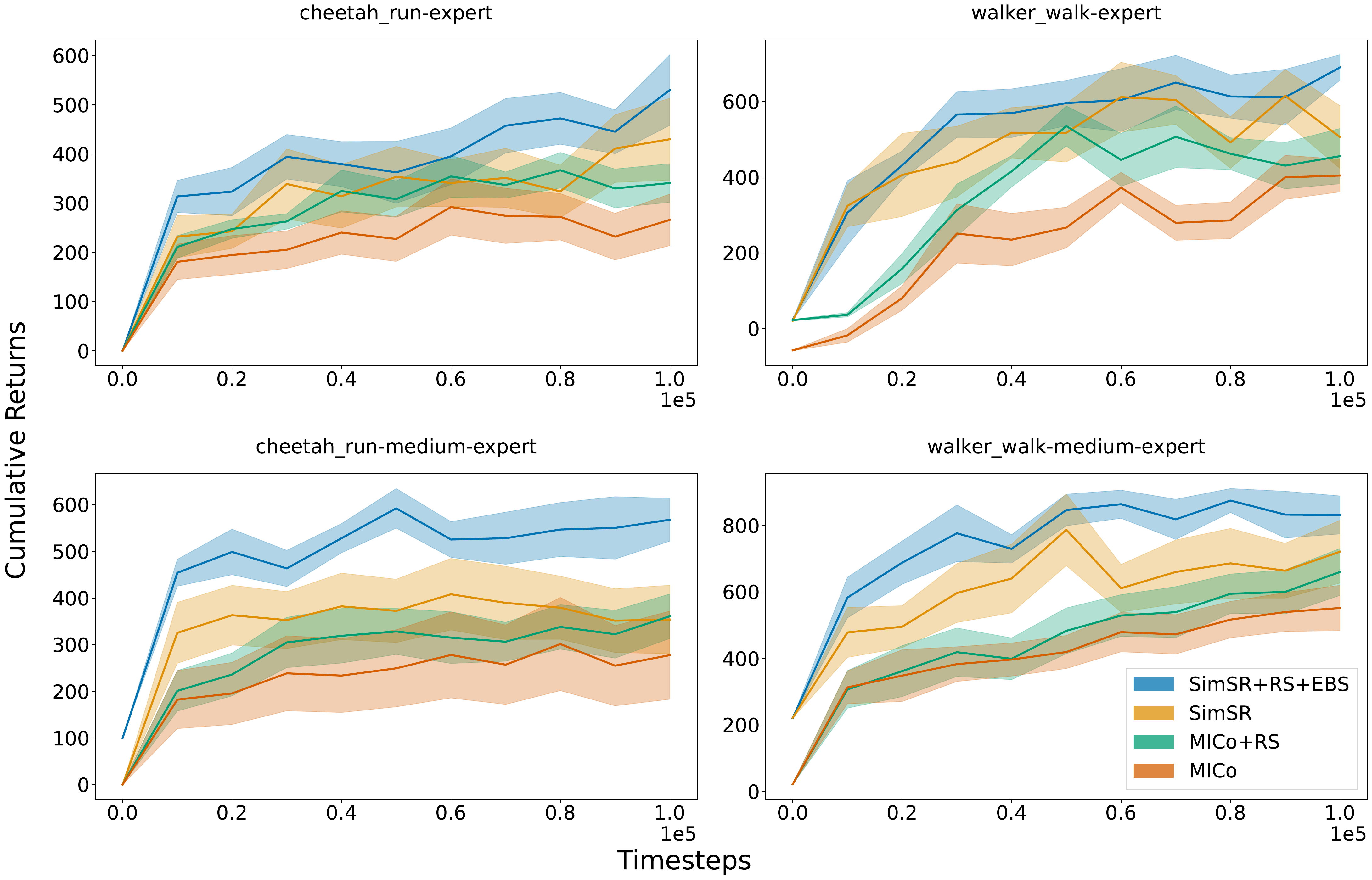}
  \end{center}
  \caption{Performance comparison on V-D4RL benchmark, averaged over 10 random seeds.}
\label{fig:perf_on_vd4rl}
\end{wrapfigure}
\paragraph{Implementation details} We also evaluate our method on a visual observation setting of DMControl suite (DMC) tasks, V-D4RL benchmark ~\citep{DBLP:journals/corr/abs-2206-04779}.
Similar to the previous experiment, we add the proposed schemes on top of MICo and SimSR. In the experiments, we notice that the latent dynamics modeling can help to boost performance for the visual setting, hence we use SimSR\_full as the backbone. 
Additionally, we also notice that MICo often gives really poor performance in the V-D4RL benchmark, while adding latent dynamics alleviates the issue. Therefore, we boost MICo with explicit dynamics modeling for a fair comparison. To compare the performance with the other representation approaches, we include 4 competitive representation learning approaches for Offline RL, including DRIML~\citep{DBLP:conf/nips/MazoureCDBH20}, HOMER~\citep{DBLP:conf/icml/MisraHK020}, CURL~\citep{DBLP:conf/icml/LaskinSA20}, and Inverse model~\citep{DBLP:conf/icml/PathakAED17}. Detailed descriptions of these approaches can be found in Appendix G.

\paragraph{Analysis} 
We evaluate all aforementioned approaches by integrating the pre-trained encoder from each into an Offline RL method DrQ+BC~\citep{DBLP:journals/corr/abs-2206-04779}, which combines data augmentation techniques with TD3BC. The results in Table~\ref{tab:vd4rl} and Figure~\ref{fig:perf_on_vd4rl} illustrate the effectiveness of our proposed method, the numerical improvements are underlined with red upward arrows. Compared to the other baselines, while \textbf{\textit{SimSR+RS+EBS}} does not achieve the highest score in all datasets, it achieves the best overall performance. Besides, our modifications on MICo and SimSR consistently show significant improvements. This indicates that our proposed method is not only applicable to raw-state inputs but also compatible with pixel-based observations.

\begin{table}[t]\centering
\caption{Performance comparison with several other baselines on V-D4RL benchmark, averaged on 3 random seeds.}
\label{tab:vd4rl}
\resizebox{1\textwidth}{!}{%
\begin{tabular}{l||rrrr||r||r}
\hline
Dataset &CURL &DRIMLC &HOMER &ICM &MICo $\rightarrow$ MICo+EBS &SimSR $\rightarrow$ SimSR+RS+EBS \\\hline
cheetah-run-medium &392 &\textbf{524} &475 &365 & 177 $\rightarrow$ 449 (\textcolor{red}{$\nearrow$} 272) &391 $\rightarrow$ 491(\textcolor{red}{$\nearrow$} 100)\\
walker-walk-medium &452 &425 &439 &358 &450 $\rightarrow$ 447 (\textcolor{red}{---}) &443 $\rightarrow$ \textbf{480}(\textcolor{red}{$\nearrow$} 37)\\
cheetah-run-medium-replay &271 &395 &306 &251  &335 $\rightarrow$357 (\textcolor{red}{$\nearrow$} 22)&374 $\rightarrow$ \textbf{462}(\textcolor{red}{$\nearrow$} 88)\\
walker-walk-medium-replay &265 &235 &\textbf{283} &167 &207 $\rightarrow$ 240 (\textcolor{red}{$\nearrow$} 33) &197 $\rightarrow$ 240(\textcolor{red}{$\nearrow$} 43)\\
cheetah-run-medium-expert &348 &403 &383 &280 &282 $\rightarrow$ 341 (\textcolor{red}{$\nearrow$} 59)&360 $\rightarrow$ \textbf{547}(\textcolor{red}{$\nearrow$} 187)\\
walker-walk-medium-expert &729 &399 &781 &606&586 $\rightarrow$ 635(\textcolor{red}{$\nearrow$} 49) &755 $\rightarrow$ \textbf{845}(\textcolor{red}{$\nearrow$} 90)\\ 
cheetah-run-expert &200 &310 &218 &237 &308 $\rightarrow$ 331(\textcolor{red}{$\nearrow$} 23)  &409 $\rightarrow$ \textbf{454}(\textcolor{red}{$\nearrow$} 45)\\
walker-walk-expert &769 &427 &686 &\textbf{850} &370 $\rightarrow$ 447 (\textcolor{red}{$\nearrow$} 77) &578 $\rightarrow$ 580 (\textcolor{red}{---})\\ \hline\hline
total &3426 &3118 &3571 &3114 &2715 $\rightarrow$ 3253 (\textcolor{red}{$\nearrow$} 538)  &3507 $\rightarrow$ \textbf{4043} (\textcolor{red}{$\nearrow$} 536)\\
\hline
\end{tabular}}

\end{table}

\section{Discussion}
\paragraph{Limitations and Future Work} 

While $\tau$ remains constant in our D4RL experiments, optimal performance may arise under different $\tau$ settings, contingent on the specific attributes of the dataset. Therefore, to yield the best outcomes, one might need to set various $\tau$ to identify the most suitable value. However, this process could consume substantial computational resources. 
Another area of potential study involves evaluating the effectiveness of our approach in off-policy settings, given that off-policy settings may also lead to similar failure cases.
\paragraph{Conclusion} 
In this work, we highlight the effectiveness of the bisimulation operator over incomplete datasets and emphasize the crucial role of reward scaling in Offline settings. By employing the expectile operator in bisimulation, we manage to strike a balance between behavior measurement and greedy assignment of the measurement over datasets. We also propose a reward scaling strategy to reduce the risk of representation collapse in specific bisimulation-based measurements. Empirical studies show the effectiveness of our proposed modifications.

\section*{Acknowledgments}
This work was partially supported by the NSFC under Grants  92270125 and 62276024, as well as the National Key R\&D Program of China under Grant No.2022YFC3302101.
\clearpage

\bibliographystyle{plain}
\bibliography{references}

\clearpage

\appendix

\begin{center}
\textbf{\Large Appendix}
\end{center}
\addtocontents{toc}{\protect\setcounter{tocdepth}{2}}
\tableofcontents
\clearpage
\section{Notation}
Table \ref{app:tab:symb} summarizes our notation.

\begin{table}[hbp]
\centering
\caption{Table of Notation. }
\resizebox{1\textwidth}{!}{%
\begin{tabular}{cccc}
\toprule
Notation                    & Meaning                                  & Notation                    & Meaning                            \\ \midrule
$\mathcal{M}$ & MDP  & $\widetilde{\mathcal{M}}$ & Lifted MDP (auxiliary MDP) \\\midrule
$\mathcal{S}$ & state space & $\mathcal{A}$ & action space \\\midrule
$T$ & transition function & $r$ & reward function\\\midrule
$\gamma$ & discount factor & $\pi$ & policy of the agent\\\midrule
$V^{\pi}(s)$ & state value function given policy $\pi$ & $\mathcal{D}$ & dataset \\\midrule
$\pi_\beta$ & behavior policy & $\mu_\beta(s)$ & state occupancy of the dataset \\\midrule
$\mathcal{F}^{\pi}$ & on-policy bisimulation operator & $g_\sim^{\pi}$ & $\pi$-bisimulation metric \\\midrule
$D(\cdot,\cdot)$ & a specific distance & $G_{\sim}^{\pi}$ & fixed point of MICo and SimSR\\\midrule
$\phi$ & state encoder & $G_\phi^{\pi}(s_i,s_j)$ & parameterized bisimulation measurement \\\midrule
$\Delta_{\phi}^{\pi}$ & bisimulation error & $\epsilon_{\phi}^{\pi}$ & bisimulation Bellman residual \\\midrule
$\mu_\pi(s)$ & stationary distribution over states on policy $\pi$ & $\mu_\pi$ & the distribution over pairs of states \\\midrule
$\mathbb{E}_{\mu_\pi}[\epsilon_\phi^\pi]$ & expected on-policy bisimulation Bellman residual & $\mathcal{F}^{\pi_\beta}$ & behavior bisimulation operator \\\midrule
$\tau$ & expectile term & $\gamma_\tau$ & discount factor with expectile \\\midrule
$\mathcal{F}^{\pi_\beta}_{\tau}$ & behavior bisimulation operator with expectile & $\hat{\epsilon}$ & estimated one-step residual \\\midrule
$G_{\bar{\phi}}$ & bisimulation measurement parameterized by target encoder &  $G_{\sim}(s_i,a_i,s_j,a_j)$ & a measurement on state-action space \\\midrule $G_{\sim}^*(s_i,a_i,s_j,a_j)$ & maximum measurement constrained to dataset & $c_r$ & scale term of reward in bisimulation \\\midrule $c_k$ & scale  term of transition in bisimulation &
$\widetilde{V}^{\pi}(\phi(s))$ & value function based on state encoder \\\midrule $\omega$ & distance bound of aggregating neighbor & $\hat{\Delta}$ & approximation error of bisimulation measurement
\\ \bottomrule
\end{tabular}
}
\label{app:tab:symb}
\end{table}

\section{Algorithm}

We provide the algorithm in Algorithm~\ref{app:algo:a1}, and a pytorch-like implementation build on top of SimSR in Algorithm~\ref{app:algo:a2}.

\begin{algorithm}[H]
    \small
    \begin{algorithmic}[1]
    \floatname{algorithm}{Procedure}

    \STATE{\textbf{Stage 1 Preprocessing:} }
        \STATE {Min-Max reward normalization: $\bar{r}=\frac{r-r_\text{min}}{r_\text{max}-r_\text{min}}$}
    \STATE{\textbf{Stage 2 Pretraining the encoder:}}
    \STATE{Initialize encoder parameter $\phi$, expectile $\tau$, learning rate $\alpha$, discount factor $\gamma$.}
    \FOR {each gradient step} 
        \STATE{Apply reward scaling when computing $\hat{\epsilon}$:
        \begin{equation}
            \hat{\epsilon} = (1-\gamma) |\bar{r}(s_i,a_i)-\bar{r}(s_j,a_j)|+\gamma G_{\bar{\phi}}^{\pi_{\beta}}(s_i',s_j')  - G_{\phi}^{\pi_{\beta}}(s_i,s_j)
        \end{equation}
        }
        \STATE{Update encoder $\phi$:
        \begin{equation}
            \phi\leftarrow \phi-2 \alpha \mathbb{E}_{a_i \sim {\pi_{\beta}}(\cdot \mid s_i),a_j\sim{\pi_{\beta}}(\cdot|s_j)}\left[\tau[\hat{\epsilon}]_{+}+(1-\tau)[\hat{\epsilon}]_{-}\right]
        \end{equation}
        }
    \ENDFOR
    \STATE{\textbf{Stage 3 Training value function and policy network:}}
    \STATE{Initialize value function parameter $\psi$, policy network parameter $\theta$, learning rate $\lambda_V$ and $\lambda_\pi$.}
    \FOR {each gradient step} 
        \STATE{Sample tuple $(s,a,s',\bar{r})$ from dataset $\mathcal{D}$}
        
        \STATE{Encode the states to representation space: $z = \phi(s), z'=\phi(s')$}
        
        \STATE{Update value function with $(z,a,z',\bar{r})$:
        \begin{equation}
            \psi = \psi - \lambda_V\nabla_\psi\mathcal{L}_V(\psi).
        \end{equation}
        }
        \STATE{Update policy network with $(z,a,z',\bar{r})$:
        \begin{equation}
            \theta = \theta - \lambda_\pi\nabla_\theta\mathcal{L}_\pi(\theta).
        \end{equation}
        }
    \ENDFOR
    \end{algorithmic}
    \caption{Proposed Implementation}
    \label{app:algo:a1}
\end{algorithm}

\begin{algorithm}[]
\caption{SimSR+RS+EBS Pseudocode, PyTorch-like}
\label{app:algo:a2}
\definecolor{codeblue}{rgb}{0.25,0.5,0.5}
\definecolor{codekw}{rgb}{0.85, 0.18, 0.50}
\lstset{
  backgroundcolor=\color{white},
  basicstyle=\fontsize{7.5pt}{7.5pt}\ttfamily\selectfont,
  columns=fullflexible,
  breaklines=true,
  captionpos=b,
  commentstyle=\fontsize{7.5pt}{7.5pt}\color{codeblue},
  keywordstyle=\fontsize{7.5pt}{7.5pt}\color{codekw},
}
\begin{lstlisting}[language=python]
class ReplayBuffer(object):
    def __init__(self):
        ...
        self.reward_normalization()
        ...
    ...
    def reward_normalization(self):
        r_max = self.reward.max()
        r_min = self.reward.min()
        self.reward = (self.reward - r_min) / (r_max - r_min)
        
def compute_distance(features_a, features_b):
    similarity_matrix = torch.matmul(features_a, features_b.T)
    dis = 1-similarity_matrix
    return dis
    
def expectile_loss(diff, expectile):
    weight = torch.where(diff > 0, expectile, (1 - expectile))
    return weight * (diff ** 2)
    
# encoder: mlp, encoder network, the output is l2-normalized
# target_encoder: mlp, same as encoder, updated by EMA
# discount: discount factor $\gamma$
# slope: expectile $\tau$
def compute_ebs_loss(encoder, target_encoder, replay_buffer, batch_size, discount, slope):
    observation, action, reward, discount, next_observation = replay_buffer.sample(batch_size) # sample a batch of tuples from replay buffer
    latent_state = encoder(observation)
    latent_next_state = target_encoder(next_observation)
    r_diff = (1 - discount) * torch.abs(reward.T - reward)
    next_diff = compute_distance(latent_next_state, latent_next_state)
    z_diff = compute_distance(latent_state, latent_state)
    bisimilarity = r_diff + discount * next_diff
    
    encoder_loss = expectile_loss(bisimilarity.detach() - z_diff, slope)
    encoder_loss = encoder_loss.mean()
    return encoder_loss
    
\end{lstlisting}
\end{algorithm}

\section{Technical backgrounds}

\subsection{Bisimulation metric}
Bisimulation measures equivalence relations on MDPs with a recursive form: two states are deemed equivalent if they share the equivalent distributions over the next equivalent states and they have the same immediate reward~\cite{DBLP:conf/popl/LarsenS89, DBLP:journals/ai/GivanDG03}. However, since bisimulation considers equivalence for all actions, including bad ones, it commonly results in “pessimistic” outcomes. Instead, \cite{DBLP:conf/aaai/Castro20} developed $\pi$-bisimulation which removes the requirement of considering each action and only needs to consider the actions induced by a policy $\pi$.

\setcounter{definition}{8}
\begin{definition}\cite{DBLP:conf/aaai/Castro20}
Given an MDP $\mathcal{M}$, an equivalence relation $E^{\pi} \subseteq \mathcal{S} \times \mathcal{S}$ is a $\pi$-bisimulation relation if whenever $(\mathbf{s},\mathbf{u})\in E^{\pi}$ the following properties hold:
\begin{enumerate}
    \item $r(s,{\pi})=r(u,{\pi})$
    \item $\forall C \in \mathcal{S}_{E^{\pi}}, T(C|s,{\pi})=T(C|u,{\pi})$
\end{enumerate}
where $\mathcal{S}_{E^{\pi}}$ is the state space $\mathcal{S}$ partitioned into equivalence classes defined by $E^{\pi}$. Two states $s,u \in S$ are $\pi$-bisimilar if there exists a $\pi$-bisimulation relation $E^{\pi}$ such that $(s,u) \in E^{\pi}$. 
\end{definition}
However, $\pi$-bisimulation is still too stringent to be applied at scale as $\pi$-bisimulation relation emphasizes the equivalence is a binary property: either two states are equivalent or not, thus becoming too sensitive to perturbations in the numerical values of the model parameters. The problem becomes even more prominent when deep frameworks are applied. 

Thereafter, they proposed a $\pi$-bisimulation metric to leverage the absolute value between the immediate rewards w.r.t. two states and the $1$-Wasserstein distance ($\mathcal{W}_1$) between the transition distributions conditioned on the two states and the policy $\pi$ to formulate such measurement:

\setcounter{theorem}{9}
\begin{theorem}
    Define $\mathcal{F}^{\pi}: \mathbb{M}\rightarrow\mathbb{M}$ by $\mathcal{F}^{\pi}(d)(s, u) = |R_s^\pi - R_u^\pi| + \gamma \mathcal{W}_{1}(d)(T_s^\pi,T_u^\pi)$, then $\mathcal{F}^{\pi}$ has a least fixed point $d_\sim^\pi$, and $d_\sim^\pi$ is a $\pi$-bisimulation metric.
\end{theorem}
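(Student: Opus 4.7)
The plan is to apply a fixed-point theorem on the space $\mathbb{M}$ of bounded pseudometrics on $\mathcal{S}$, equipped with the sup-norm $\|d - d'\|_\infty = \sup_{s,u}|d(s,u) - d'(s,u)|$. Because rewards lie in $[R_\text{min}, R_\text{max}]$ and $\gamma < 1$, iterates of $\mathcal{F}^\pi$ starting from $0$ remain in the closed ball of radius $(R_\text{max}-R_\text{min})/(1-\gamma)$, which is complete. Existence will come from Banach's contraction mapping theorem; the ``least'' qualifier will come from monotonicity of $\mathcal{F}^\pi$ in the pointwise order (Knaster--Tarski on the complete lattice of bounded pseudometrics, using $0$ as the bottom element and noting $\mathcal{F}^\pi(0) \geq 0$).

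First I would establish the contraction. The reward term cancels in $\mathcal{F}^\pi(d) - \mathcal{F}^\pi(d')$, so the question reduces to bounding $|\mathcal{W}_1(d)(P,Q) - \mathcal{W}_1(d')(P,Q)|$. Using the Kantorovich primal $\mathcal{W}_1(d)(P,Q) = \inf_\lambda \int d(x,y)\,d\lambda$ over couplings $\lambda$ of $(P,Q)$, plugging the $d'$-optimal coupling into the $d$-functional yields $\mathcal{W}_1(d)(P,Q) \leq \mathcal{W}_1(d')(P,Q) + \|d-d'\|_\infty$; the symmetric bound gives $|\mathcal{W}_1(d) - \mathcal{W}_1(d')| \leq \|d - d'\|_\infty$. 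Therefore $\|\mathcal{F}^\pi(d) - \mathcal{F}^\pi(d')\|_\infty \leq \gamma \|d - d'\|_\infty$, and Banach yields a unique fixed point $d_\sim^\pi$.

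Next I would verify that $d_\sim^\pi$ is a $\pi$-bisimulation metric. Non-negativity and symmetry are preserved by $\mathcal{F}^\pi$ directly; the triangle inequality requires the gluing lemma for couplings to conclude $\mathcal{W}_1(d)(T_s^\pi, T_u^\pi) \leq \mathcal{W}_1(d)(T_s^\pi, T_v^\pi) + \mathcal{W}_1(d)(T_v^\pi, T_u^\pi)$ when $d$ itself satisfies the triangle inequality. Since sup-norm limits of pseudometrics are pseudometrics, $d_\sim^\pi$ inherits these properties. For the bisimulation characterization, one direction uses monotonicity: for any $\pi$-bisimulation relation $E$, construct the diameter-bounded pseudometric $d_E$ that is $0$ on $E$ and $(R_\text{max}-R_\text{min})/(1-\gamma)$ off it, verify $\mathcal{F}^\pi(d_E) \leq d_E$, and iterate to get $d_\sim^\pi \leq d_E$, so $d_\sim^\pi(s,u) = 0$ whenever $s,u$ are $\pi$-bisimilar. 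The converse uses the fixed-point equation: $d_\sim^\pi(s,u) = 0$ forces $R_s^\pi = R_u^\pi$ and $\mathcal{W}_1(d_\sim^\pi)(T_s^\pi, T_u^\pi) = 0$, and the zero-set $E_\sim := \{(s,u): d_\sim^\pi(s,u)=0\}$ can then be checked to form a $\pi$-bisimulation.

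The hard part will be this last converse step: showing that vanishing Wasserstein distance under $d_\sim^\pi$ implies $T_s^\pi$ and $T_u^\pi$ assign equal mass to every $E_\sim$-equivalence class. Intuitively, any optimal coupling must be supported on $\{(x,y) : d_\sim^\pi(x,y) = 0\}$, so its marginals must coincide on $E_\sim$-saturated sets. For countable $\mathcal{S}$ this reduces to a counting argument on classes, but in general it requires measurability of the quotient projection and Kantorovich--Rubinstein duality with $1$-Lipschitz class indicators. Once that is in hand, $E_\sim$ satisfies both clauses of the $\pi$-bisimulation definition, and the ``least fixed point'' statement follows by combining Banach uniqueness with the monotone upper bounds $d_E$ coming from every bisimulation relation $E$.
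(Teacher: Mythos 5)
This statement is quoted background from Castro (2020): the paper restates it in Appendix C.1 (and, in slightly different notation, as Theorem 1 of the main text) with a citation and no proof, so there is no in-paper argument to compare yours against. Your proposal correctly reconstructs the standard proof from the cited literature (Ferns et al.\ 2004; Castro 2020): the $\gamma$-contraction on bounded pseudometrics under the sup-norm, obtained from the $1$-Lipschitz dependence of $\mathcal{W}_1$ on its cost function; Banach for existence and uniqueness; the gluing lemma to keep the iterates inside the space of pseudometrics (so the sup-norm limit is one too); and the two-sided identification of the kernel of $d_\sim^\pi$ with $\pi$-bisimilarity, with the ``least'' qualifier recovered from monotonicity of $\mathcal{F}^\pi$ together with the invariant upper bounds $d_E$. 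The step you flag as hard --- that $\mathcal{W}_1(d_\sim^\pi)(T_s^\pi,T_u^\pi)=0$ forces $T_s^\pi$ and $T_u^\pi$ to agree on every $E_\sim$-class --- is indeed where the content lies, but in the finite-state setting in which the cited theorem is proved it follows immediately from your own observation: an optimal coupling attaining cost $0$ is supported on the zero set of $d_\sim^\pi$, so its two marginals assign equal mass to every $E_\sim$-saturated set; no duality or measurability machinery is needed. The only cosmetic caution is that when the infimum over couplings is not attained one should argue with $\varepsilon$-optimal couplings in the Lipschitz bound, which changes nothing. In short, the proposal is correct and matches the proof in the reference the paper defers to.
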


Although the Wasserstein distance is a powerful metric to calculate the distance between two probability distributions, it requires to enumerate all states which is impossible in RL tasks of continuous state space. Various extensions have been proposed \cite{DBLP:conf/iclr/0001MCGL21, DBLP:conf/nips/CastroKPR21, DBLP:conf/aaai/Zang0W22} to reduce the computational complexity. DBC~\cite{DBLP:conf/iclr/0001MCGL21} extend bisimulation metrics to learn state representation, via minimizing the $\ell_1$-norm distance of representations and the bisimulation metrics, meanwhile modeling the latent dynamics as Gaussian and utilizing $W_2$ distance to compute it, which can be formulated as a closed-form result. However, DBC has several issues like loss function mismatch and specific requirements for Gaussian modeling, which limits its application and performance.

\subsection{MICo distance}

MICo distance \cite{DBLP:conf/nips/CastroKPR21}, tackles the above issue by restricting the coupling class to the independent coupling to avoid intractable Wasserstein distance computation. The MICo operator and its associated theoretical guarantee are given as: 
\begin{theorem}\cite{DBLP:conf/nips/CastroKPR21}
Given a policy $\pi$, MICo distance $\mathcal{F}^{\pi}$ is defined as:
\begin{equation}
\begin{aligned}
    \mathcal{F}^{\pi}U(s_i,s_j)=|r_{s_i}^{\pi}-r_{s_j}^{\pi}|+\gamma \mathbb{E}_{s_i'\sim T_{s}^{\pi},s_j'\sim T_{s_j}^{\pi}}[U(s_i',s_j')]
\end{aligned}
\end{equation}
has a fixed point $U^{\pi}$.
\end{theorem}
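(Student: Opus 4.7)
The plan is to apply the Banach fixed-point theorem, which is the standard tool for establishing existence (and uniqueness) of fixed points for discounted Bellman-type operators. Since $\mathcal{F}^{\pi}$ has the structure of a reward term plus a $\gamma$-discounted expectation of $U$ at successor state pairs, it should behave like a Bellman evaluation operator on the product state space $\mathcal{S} \times \mathcal{S}$, inheriting the $\gamma$-contraction property in the sup norm. This is consistent with the ``lifted MDP'' viewpoint invoked earlier in the paper (Lemma~\ref{lm:lifedmdp}).

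First, I would set up the space. Let $\mathbb{M}$ be the set of bounded real-valued functions on $\mathcal{S}\times\mathcal{S}$, equipped with the sup norm $\|U\|_\infty = \sup_{(s_i,s_j)} |U(s_i,s_j)|$; since $r$ is bounded in $[R_{\min},R_{\max}]$, the image $\mathcal{F}^{\pi}U$ is bounded whenever $U$ is, so $\mathcal{F}^{\pi}:\mathbb{M}\to\mathbb{M}$ is well-defined, and $(\mathbb{M},\|\cdot\|_\infty)$ is a Banach space. Then for any $U,V \in \mathbb{M}$ and any $(s_i,s_j)$, the reward terms cancel, giving
\begin{equation*}
|\mathcal{F}^{\pi}U(s_i,s_j) - \mathcal{F}^{\pi}V(s_i,s_j)| = \gamma\,\bigl|\mathbb{E}_{s_i'\sim T_{s_i}^{\pi},\, s_j'\sim T_{s_j}^{\pi}}[U(s_i',s_j') - V(s_i',s_j')]\bigr|.
\end{equation*}
Pulling the absolute value inside the expectation by Jensen and bounding by the sup norm, I obtain $|\mathcal{F}^{\pi}U(s_i,s_j) - \mathcal{F}^{\pi}V(s_i,s_j)| \leq \gamma \|U - V\|_\infty$; taking the sup on the left yields $\|\mathcal{F}^{\pi}U - \mathcal{F}^{\pi}V\|_\infty \leq \gamma \|U-V\|_\infty$. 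Since $\gamma \in [0,1)$, $\mathcal{F}^{\pi}$ is a strict contraction, and Banach's theorem delivers a unique fixed point $U^\pi \in \mathbb{M}$.

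There is not really a serious obstacle here; the argument is mechanical and mirrors the classical proof that the Bellman evaluation operator contracts. The one subtle point I would be careful about is the measurability/integrability needed to write $\mathbb{E}_{s_i',s_j'}[U(s_i',s_j')]$ meaningfully in continuous $\mathcal{S}$, which is why restricting $\mathbb{M}$ to bounded (Borel-)measurable functions matters; alternatively, one can first verify closedness of $\mathbb{M}$ under the operator. The contraction constant can also be read off directly as $\gamma$, which is a small but useful bonus beyond mere existence.
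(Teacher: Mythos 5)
Your proof is correct and follows essentially the same route the paper relies on: the paper states this result as a cited background theorem without reproving it, but its justification is precisely the lifted-MDP viewpoint of Lemma~\ref{lm:lifedmdp} (proved in the appendix), under which $\mathcal{F}^{\pi}$ is a Bellman evaluation operator on $\mathcal{S}\times\mathcal{S}$ and hence a $\gamma$-contraction in the sup norm, yielding the fixed point by Banach's theorem exactly as you argue. Your remark about restricting to bounded measurable functions in the continuous-state case is a reasonable extra precaution that the paper glosses over.
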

By considering the Wasserstein distance in the definition of bisimulation metrics can be upper-bounded by taking a restricted class of couplings of the transition distributions, MICo restricts the coupling class precisely to the singleton containing the independent coupling, utilizing the Independent Couple sampling strategy to bypass the computation of the Wasserstein distance.  However, MICo distance $U$ requires to be a Łukaszyk-Karmowski metric, which does not satisfy the identity of indiscernibles. As a result, the approximated distance on the learned embedding
space based on the MICo distance, which involves a Łukaszyk-Karmowski metric to measure the distance
between dynamics, may suffer from the violation issue of the identity of indiscernibles.

\subsection{SimSR operator}
To avoid the potential representation collapse, SimSR~\cite{DBLP:conf/aaai/Zang0W22} develop a more concise update
operator to learn state representation more effectively. Coupling with cosine distance, SimSR defines its operator as:
\begin{theorem}~\cite{DBLP:conf/aaai/Zang0W22}
Given a policy $\pi$, Simple State Representation (SimSR) is updated as:
\begin{equation}
\begin{aligned}
    \mathcal{F}^{\pi}\overline{\text{cos}}_{\phi}(s_i,s_j)=|r_{s_i}^{\pi}-r_{s_j}^{\pi}|+\gamma \mathbb{E} _{s_i'\sim T_{s_i}^{\pi},s_j'\sim T_{s_j}^{\pi}}[\overline{\text{cos}}_{\phi}(s_i',s_j')]
\end{aligned}
\end{equation}
has the same fixed point as MICo.
\end{theorem}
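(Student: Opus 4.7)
The plan is to observe that the MICo update operator and the SimSR update operator, when regarded as mappings on the space $\mathbb{M}$ of bounded measurements $g: \mathcal{S}\times\mathcal{S}\to\mathbb{R}$, share the \emph{identical} functional form
\[
\mathcal{F}^\pi g(s_i,s_j) = |r_{s_i}^\pi - r_{s_j}^\pi| + \gamma\,\mathbb{E}_{s_i'\sim T_{s_i}^\pi,\, s_j'\sim T_{s_j}^\pi}\!\left[g(s_i',s_j')\right].
\]
The distinction between the two methods lies entirely in how the fixed-point measurement is parametrized in embedding space (Łukaszyk--Karmowski for MICo, cosine distance for SimSR), not in the operator acting on $\mathbb{M}$. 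Hence the claim reduces to a Banach fixed-point argument for this common operator.

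First I would endow $\mathbb{M}$ with the sup-norm $\|g\|_\infty = \sup_{s_i,s_j}|g(s_i,s_j)|$, under which it is a complete metric space. I would then check that $\mathcal{F}^\pi$ stabilizes the ball of radius $(R_{\max}-R_{\min})/(1-\gamma)$: since $r$ is bounded by $[R_{\min},R_{\max}]$ and $g$ is bounded, the output of $\mathcal{F}^\pi g$ is finite and in fact bounded by $(R_{\max}-R_{\min}) + \gamma\|g\|_\infty$, so $\mathcal{F}^\pi$ is a well-defined self-map.

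The core step is the $\gamma$-contraction estimate. For any $g_1,g_2 \in \mathbb{M}$ and any pair $(s_i,s_j)$, the reward terms cancel, leaving
\[
\bigl|\mathcal{F}^\pi g_1(s_i,s_j) - \mathcal{F}^\pi g_2(s_i,s_j)\bigr| = \gamma\,\bigl|\mathbb{E}_{s_i',s_j'}[g_1(s_i',s_j') - g_2(s_i',s_j')]\bigr| \le \gamma\,\|g_1 - g_2\|_\infty,
\]
using Jensen's inequality and the independence of the marginal samples. Taking the supremum over $(s_i,s_j)$ gives $\|\mathcal{F}^\pi g_1 - \mathcal{F}^\pi g_2\|_\infty \le \gamma\|g_1-g_2\|_\infty$. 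Banach's fixed-point theorem then yields a unique fixed point $G_\sim^\pi \in \mathbb{M}$, which is precisely the $U^\pi$ identified in the MICo theorem above. Because the SimSR operator is the same mapping on $\mathbb{M}$, its fixed point coincides with $U^\pi$.

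The main subtlety I would flag is the distinction between the operator's fixed point (a function on $\mathcal{S}\times\mathcal{S}$) and its parametric realization in embedding space: SimSR approximates $G_\sim^\pi$ by $\overline{\cos}_\phi$ and MICo by a Łukaszyk--Karmowski form, so the theorem is a statement about the shared target of those approximations rather than about exact realizability. A clean way to package the argument is to prove contraction once for the abstract operator and then instantiate the conclusion for the two parametrizations; the realizability question (whether some $\phi$ achieves $\overline{\cos}_\phi = G_\sim^\pi$) is orthogonal to the fixed-point claim itself.
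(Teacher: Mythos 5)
Your argument is correct: the paper states this theorem only as a cited background result (from the SimSR reference) and gives no proof of its own, but the route you take --- observing that the SimSR and MICo updates are the \emph{same} operator on the space $\mathbb{M}$ of bounded measurements, establishing the $\gamma$-contraction in sup-norm after the reward terms cancel, and invoking Banach's fixed-point theorem to conclude the unique fixed point coincides with $U^\pi$ --- is exactly the standard argument underlying the cited result. Your closing remark correctly separates the operator-level fixed-point claim from the realizability of that fixed point by a cosine parametrization $\overline{\cos}_\phi$, which is indeed the right subtlety to flag; the only cosmetic point is that the independence of the marginal samples is not actually needed for the bound $\gamma\,\lvert\mathbb{E}[g_1-g_2]\rvert \le \gamma\lVert g_1-g_2\rVert_\infty$.
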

Further, considering the latent dynamics can be beneficial to representation learning, they additionally develop a form of operator including dynamics modeling:
\begin{theorem}~\cite{DBLP:conf/aaai/Zang0W22}
Given a policy $\pi$, and a latent dynamics model $\hat{T}$, SimSR is updated as
\begin{equation}
\begin{aligned}
    \mathcal{F}^{\pi}\overline{\text{cos}}_{\phi}(s_i,s_j)=&|r_{s_i}^{\pi}-r_{s_j}^{\pi}|+\gamma \mathbb{E} _{z_i'\sim \hat{T}_{\phi(s_i)}^{\pi},z_j'\sim \hat{T}_{\phi(s_j)}^{\pi}}[\overline{\text{cos}}(z_i',z_j')].
\end{aligned}
\end{equation}
If latent dynamics are specified, $\mathcal{F}^{\pi}$ has a fixed point.
\end{theorem}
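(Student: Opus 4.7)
The plan is to invoke the Banach fixed-point theorem on the space of bounded measurements on $\mathcal{S}\times\mathcal{S}$ equipped with the supremum norm, which is the standard recipe for showing that a Bellman-style operator admits a (unique) fixed point. The crucial observation is that once the latent dynamics $\hat{T}$ and encoder $\phi$ are held fixed, the right-hand side depends on the iterate $\overline{\cos}_\phi$ only through an expectation under a fixed measure; the reward-difference term $|r^\pi_{s_i}-r^\pi_{s_j}|$ is an input-independent additive offset. Replacing $\overline{\cos}_\phi$ by a generic bounded measurement $g:\mathcal{S}\times\mathcal{S}\to\mathbb{R}$ makes $\mathcal{F}^\pi$ an operator on a complete metric space, and then contractivity follows from standard manipulations.

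First I would set up the ambient space. Let $\mathbb{M}$ be the set of measurements $g:\mathcal{S}\times\mathcal{S}\to[0,M]$ with $M$ chosen to contain the range of $\overline{\cos}$ (for unit-norm embeddings, $M=2$ suffices), and endow it with $\|g\|_\infty = \sup_{s_i,s_j}|g(s_i,s_j)|$. Standard arguments show $(\mathbb{M},\|\cdot\|_\infty)$ is complete. I would then verify that $\mathcal{F}^\pi$ stabilizes this space: since $|r^\pi_{s_i}-r^\pi_{s_j}|\le R_{\max}-R_{\min}$ and $\mathbb{E}[g(z_i',z_j')]\le \|g\|_\infty$, the choice of $M$ can be adjusted (e.g.\ $M \ge (R_{\max}-R_{\min})/(1-\gamma)$) so that $\mathcal{F}^\pi g \in \mathbb{M}$ whenever $g\in\mathbb{M}$.

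Next I would establish the contraction. For any $g_1,g_2\in\mathbb{M}$ and any pair $(s_i,s_j)$, the reward-difference terms cancel, leaving
\begin{equation*}
\bigl|\mathcal{F}^\pi g_1(s_i,s_j)-\mathcal{F}^\pi g_2(s_i,s_j)\bigr|
= \gamma\bigl|\mathbb{E}_{z_i'\sim \hat{T}^\pi_{\phi(s_i)},\,z_j'\sim \hat{T}^\pi_{\phi(s_j)}}[g_1(z_i',z_j')-g_2(z_i',z_j')]\bigr|
\le \gamma \|g_1-g_2\|_\infty,
\end{equation*}
where the last step uses Jensen and the definition of the sup norm. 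Taking sup over $(s_i,s_j)$ gives $\|\mathcal{F}^\pi g_1 - \mathcal{F}^\pi g_2\|_\infty \le \gamma\|g_1-g_2\|_\infty$, so $\mathcal{F}^\pi$ is a $\gamma$-contraction with $\gamma\in[0,1)$. Banach's theorem then delivers a unique fixed point in $\mathbb{M}$.

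The proof is essentially a routine contraction argument, so there is no deep obstacle; the only subtlety worth flagging is that ``specifying'' the latent dynamics is what turns the right-hand side into a genuine operator on measurements rather than a coupled problem where $\hat{T}$ and $\phi$ evolve jointly. I would make this explicit in the proof, noting that the guarantee is about the measurement fixed point for a frozen $(\hat{T},\phi)$, and that the expectation is well-defined because $\hat{T}^\pi_{\phi(s)}$ is a fixed distribution over latent states independent of the iterate $g$. I would also remark that the same argument recovers the previous SimSR theorem as a special case where $\hat{T}$ is replaced by the pushforward of the environment transition through $\phi$.
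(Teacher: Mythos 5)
Your proposal is correct and takes essentially the standard route: the paper states this theorem only as cited background from the SimSR paper without reproducing a proof, and the original argument is exactly your Banach fixed-point scheme --- the reward-difference terms cancel, the expectation under the frozen latent dynamics is nonexpansive in the sup norm, so $\mathcal{F}^{\pi}$ is a $\gamma$-contraction on a complete space of bounded measurements, which is also the same contraction machinery this paper reuses in its own Lemma 4 proof. The one slip to repair is a type mismatch: you declare the generic measurement $g$ on $\mathcal{S}\times\mathcal{S}$ but then evaluate it at latent pairs $(z_i',z_j')$ sampled from $\hat{T}^{\pi}_{\phi(\cdot)}$, so the measurement space should instead consist of bounded functions on pairs of latent states (e.g., on the unit sphere in $\mathbb{R}^n$ for $\ell_2$-normalized embeddings, with the state-indexed reward term entering through the fixed encoder), after which your self-map bound $M \geq (R_{\max}-R_{\min})/(1-\gamma)$ and contraction estimate go through verbatim; note also that Banach gives the fixed point in this unconstrained space, which is all the theorem claims --- realizability of that fixed point as $\overline{\cos}(\phi(\cdot),\phi(\cdot))$ for some encoder is a separate approximation question, as you correctly flag.
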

When considering MICo distance and the basic version of SimSR, we can notice that they have a similar recursive iteration formulation. And therefore both works can be generalized under:
\begin{equation}
\label{app:eq:formal_bisim}
 \mathcal{F}^{\pi} G^{\pi}(s_i,s_j) = |r_{s_i}^{\pi} - r_{s_j}^{\pi}| + \gamma \mathbb{E}_{\begin{subarray} {l}s_i'\sim T_{s_i}^{\pi} \\ s_j'\sim T_{s_j}^{\pi}\end{subarray}}[G^{\pi}(s_i',s_j')],
\end{equation}
while the instantiation of $G$ varies in these two approaches.

\subsection{Lifted MDP}
\label{app:sec:lifted_mdp}
The connection between bisimulation-based operators and lifted MDP can be referred to ~\cite{DBLP:conf/nips/CastroKPR21}. We provide the corresponding Lemma here for reference.

\setcounter{theorem}{1}
\begin{lemma}
(\textbf{Lifted MDP}) The bisimulation-based update operator $\mathcal{F}^{\pi}$ for $\mathcal{M}$, is the Bellman evaluation operator for a specific lifted MDP.
\end{lemma}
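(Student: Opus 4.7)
The plan is to explicitly construct a lifted MDP $\widetilde{\mathcal{M}}^\pi$ whose Bellman evaluation operator coincides term-for-term with $\mathcal{F}^\pi$. I would take the lifted state space to be $\widetilde{\mathcal{S}} = \mathcal{S} \times \mathcal{S}$, equip it with the state-level reward $\tilde{r}^\pi(s_i,s_j) = |r_{s_i}^\pi - r_{s_j}^\pi|$ and the independent-coupling transition kernel $\widetilde{T}^\pi((s_i',s_j') \mid (s_i,s_j)) = T_{s_i}^\pi(s_i')\, T_{s_j}^\pi(s_j')$, while retaining the original discount $\gamma$.

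Because $\pi$ is fixed throughout, $\widetilde{\mathcal{M}}^\pi$ can equivalently be viewed either as a Markov reward process or as a full MDP on the lifted action set $\mathcal{A} \times \mathcal{A}$ equipped with the product policy $\tilde{\pi}((a_i,a_j) \mid (s_i,s_j)) = \pi(a_i \mid s_i)\,\pi(a_j \mid s_j)$. The second viewpoint is slightly more informative, since it exposes a lifted $Q$-function interpretation that is reused in the discussion following Theorem~4.

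The verification then reduces to writing out the Bellman evaluation operator of $\widetilde{\mathcal{M}}^\pi$ applied to an arbitrary $V : \widetilde{\mathcal{S}} \to \mathbb{R}$, substituting the definitions of $\tilde{r}^\pi$ and $\widetilde{T}^\pi$, and matching the result term by term with Equation~\eqref{app:eq:formal_bisim}. Identifying $V(s_i,s_j)$ with $G^\pi(s_i,s_j)$ yields $\mathcal{F}^\pi$ exactly, and the familiar $\gamma$-contraction of the Bellman evaluation operator in $\|\cdot\|_\infty$ transfers automatically, recovering the existence of the fixed point $G_\sim^\pi$ already invoked earlier.

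The one genuine subtlety, and the step I would treat as the main obstacle, is the placement of the absolute value in the reward. If one naively assigns the lifted reward at the state-action level as $|r(s_i,a_i) - r(s_j,a_j)|$ and then averages under $\tilde{\pi}$, one obtains $\mathbb{E}_{a_i,a_j}[\,|r(s_i,a_i) - r(s_j,a_j)|\,]$ rather than $|r_{s_i}^\pi - r_{s_j}^\pi|$, because absolute value does not commute with expectation. To avoid this mismatch I would define the lifted reward directly at the (lifted) state level, after the policy expectation has been taken, which is legitimate precisely because $\pi$ is held fixed during evaluation. Once this bookkeeping choice is made, the remainder of the argument is a mechanical substitution and the lemma follows.
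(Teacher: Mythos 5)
Your construction is exactly the one the paper uses: lifted state space $\mathcal{S}^2$, independent product transitions, product policy, and the action-independent lifted reward $|r^{\pi}_{s_i}-r^{\pi}_{s_j}|$ defined at the state level, followed by a term-by-term match with the Bellman evaluation operator. The subtlety you flag about placing the absolute value after the policy expectation is precisely how the paper resolves it (by declaring the reward action-independent), so your proposal is correct and essentially identical to the paper's proof.
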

\begin{proof}
Given the MDP specified by the tuple $(\mathcal{S}, \mathcal{A}, T, R)$, we construct a lifted MDP $(\widetilde{\mathcal{S}}, \widetilde{\mathcal{A}}, \widetilde{T}, \widetilde{R})$, by taking the
	state space to be $\widetilde{\mathcal{S}} = \mathcal{S}^2$, the action space
	to be $\widetilde{\mathcal{A}} = \mathcal{A}^2$, the transition dynamics to be
    given by $\widetilde{T}_{\tilde{s}}^{\tilde{a}}(\tilde{s}')=\widetilde{T}_{(s_i, s_j)}^{(a_i, a_j)}((s_i',s_j')) = T_{s_i}^{a_i}(s_i')T_{s_j}^{a_j}(s_j')$ for
    all $(s_i,s_j), (s_i',s_j') \in \mathcal{S}^2$, $a_i,a_j \in \mathcal{A}$, and the action-independent rewards to be $\widetilde{R}_{\tilde{s}}=\widetilde{R}_{(s_i,s_j)} = |r^\pi_{s_i} - r^\pi_{s_j}|$ for all $s_i, s_j \in \mathcal{S}$. The Bellman evaluation operator $\widetilde{\mathcal{F}}^{\tilde{\pi}}$ for this lifted MDP at discount rate $\gamma$ under the policy $\tilde{\pi}(\tilde{a}|\tilde{s})=\tilde{\pi}(a_i,a_j|s_i,s_j) = \pi(a_i|s_i) \pi(a_j|s_j)$ is given by (for all $G^{\pi} \in \mathbb{R}^{\mathcal{S}\times\mathcal{S}}$ and $(s_i, s_j) \in \mathcal{S} \times\mathcal{S}$):
    \begin{equation*}
        \begin{aligned}
            (\widetilde{\mathcal{F}}^{\tilde{\pi}}\tilde{G}^{\pi})(\tilde{s}) & = \widetilde{R}_{\tilde{s}}+\gamma\sum_{\tilde{s}'\in\widetilde{\mathcal{S}}}\widetilde{T}_{\tilde{s}}^{\tilde{a}}(\tilde{s}')\tilde{\pi}(\tilde{a}|\tilde{s})\tilde{G}^{\pi}(\tilde{s}')
            \\(\widetilde{\mathcal{F}}^{\tilde{\pi}}G^{\pi})(s_i,s_j)&=\widetilde{R}_{(s_i,s_j)}\! +\! \gamma\!\!\!\! \sum_{(s_i^\prime, s_j^\prime) \in \mathcal{S}^2} \!\!\!\!\!\! \widetilde{T}_{(s_i, s_j)}^{(a_i, a_j)}((s_i^\prime, s_j^\prime)) \tilde{\pi}(a_i,a_j|s_i,s_j) G^{\pi}(s_i^\prime, s_j^\prime) \\
    & = |r^\pi_{s_i} - r^\pi_{s_j}| + \gamma\!\!\!\! \sum_{(s_i^\prime, s_j^\prime) \in \mathcal{S}^2}  T^\pi_{s_i}(s_i^\prime)T_{s_j}^\pi(s_j^\prime)  G^{\pi}(s_i^\prime, s_j^\prime) = (\mathcal{F}^\pi_MG^{\pi})(s_i, s_j) \, . \qedhere
        \end{aligned}
    \end{equation*}
\end{proof}

\subsection{Expectile Regression}

\begin{wrapfigure}{rt}{0.5\textwidth}
  \vspace{-2em}
\includegraphics[width=.4\textwidth]{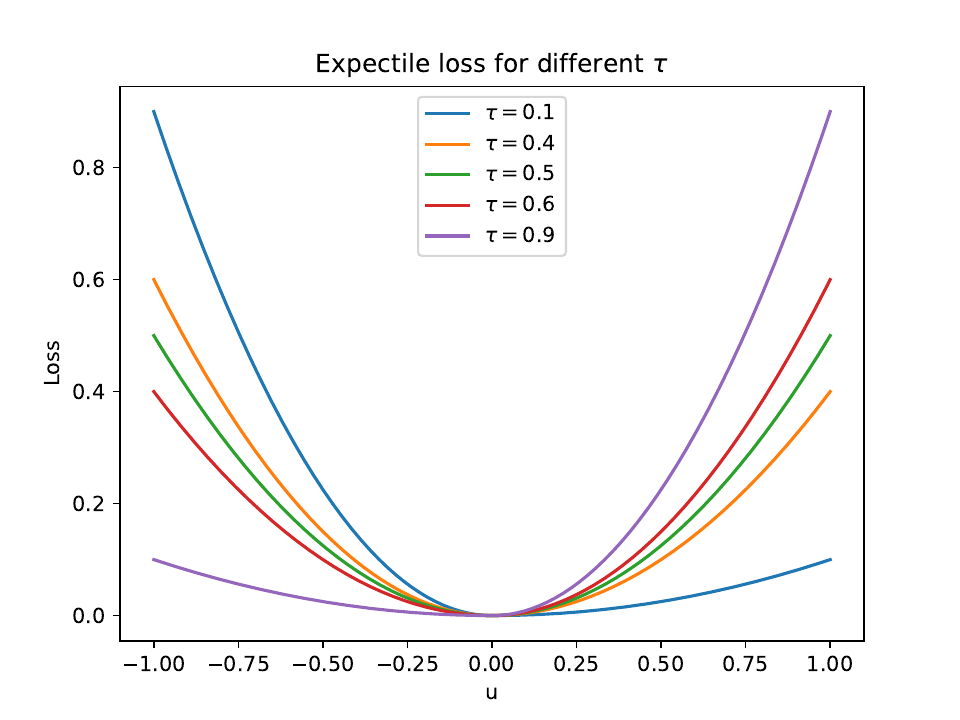}
\caption{The asymmetric squared loss used for expectile regression. Larger $\tau$ gives more weight to positive differences.}
\label{app:fig:expectile}
\end{wrapfigure}
Expectile regression, a method in statistics, is an extension of quantile regression that provides a more detailed analysis of a distribution's tail. This technique aims to estimate the expectiles of a conditional distribution, which are like percentiles but with respect to the mean, not the median. In essence, expectile regression can help capture the structure of data variability and analyze extreme observations in a more precise manner than quantile regression. The $\tau \in (0, 1)$ expectile of some random variable $X$ is defined as a solution to the asymmetric least squares problem:
\begin{equation}
    \underset{m_\tau}{\arg \min } \mathbb{E}_{x \sim X}\left[L_2^\tau\left(x-m_\tau\right)\right],
\end{equation}
where $L_2^\tau(u)=|\tau-\mathbbm{1}(u<0)| u^2$. That is, for $\tau > 0.5$, this asymmetric loss function downweights the contributions of $x$ values smaller than $m_\tau$ while giving more weights to larger values. Figure~\ref{app:fig:expectile} shows the illustration of this asymmetric loss. More detailed descriptions can be found in~\cite{DBLP:conf/iclr/KostrikovNL22, DBLP:conf/iclr/MaYH0ZZLL22}.

\section{Proof}

\subsection{Connection between bisimulation error and bisimulation Bellman residual}
\label{app:sec:bisim_Error_and_residual}
In this section, we will revise some definitions a bit for obtaining the equivalence between bisimulation error and bisimulation Bellman residual. We first define \textit{bisimulation error} $\Delta_{\phi}^{\pi}$ that measure the distance of the approximation $G_{\phi}^{\pi}$ to the fixed point $G_{\sim}^\pi$ as: 
\begin{equation}
    \Delta_\phi^{\pi} := G_{\phi}^{\pi}(s_i,s_j) - G^{\pi}_{\sim}(s_i,s_j).
\end{equation}
And define \textit{bisimulation Bellman residual} $\epsilon_{\phi}^{\pi}$ as:
\begin{equation}
    \epsilon_{\phi}^{\pi} := G_{\phi}^{\pi}(s_i,s_j)- \mathcal{F}^{\pi} G^{\pi}_{\phi}(s_i,s_j).
\end{equation}
Notably, this is slightly different from the notation in Section~\ref{sec:analysis} given the fact that we do not apply absolute value here. Then, we can have the following theorems.

\setcounter{theorem}{13}
\begin{theorem}
\label{app:thm:bisim_error_app_error}
\textbf{(The bisimulation Bellman residual can be defined as a function of the bisimulation error)}
\begin{equation}
    \epsilon_{\phi}^{\pi}(s_i,s_j) = \Delta_{\phi}^{\pi}(s_i,s_j) - \gamma \mathbb{E}_{\begin{subarray} {l}s_i'\sim T_{s_i}^{\pi} \\ s_j'\sim T_{s_j}^{\pi}\end{subarray}}[\Delta_{\phi}^{\pi}(s_i',s_j')],
\end{equation}
\end{theorem}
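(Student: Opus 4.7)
The plan is to exploit two facts: that $G^{\pi}_{\sim}$ is by definition a fixed point of the operator $\mathcal{F}^{\pi}$, and that $\mathcal{F}^{\pi}$ acts affinely on its argument (a reward-difference term that does not depend on $G$, plus a $\gamma$-scaled linear expectation in $G$). So the proof should be a direct algebraic manipulation: insert $G^{\pi}_{\sim}$ strategically, cancel the reward terms, and read off the result.

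More precisely, first I would write out the definition of $\epsilon_{\phi}^{\pi}(s_i,s_j)$ and substitute the explicit form of $\mathcal{F}^{\pi} G^{\pi}_{\phi}(s_i,s_j)$ from Equation~\ref{eq:formal_bisim}. Then, using the fixed-point property $\mathcal{F}^{\pi} G^{\pi}_{\sim} = G^{\pi}_{\sim}$, I would substitute for $|r_{s_i}^{\pi} - r_{s_j}^{\pi}|$ the expression $G^{\pi}_{\sim}(s_i,s_j) - \gamma \mathbb{E}_{s_i',s_j'}[G^{\pi}_{\sim}(s_i',s_j')]$. After this substitution, the reward-difference term disappears and the remaining expression naturally groups into $(G_{\phi}^{\pi}(s_i,s_j) - G^{\pi}_{\sim}(s_i,s_j)) - \gamma\,\mathbb{E}_{s_i',s_j'}\!\left[G_{\phi}^{\pi}(s_i',s_j') - G^{\pi}_{\sim}(s_i',s_j')\right]$, which is exactly $\Delta_{\phi}^{\pi}(s_i,s_j) - \gamma \mathbb{E}[\Delta_{\phi}^{\pi}(s_i',s_j')]$ by the (sign-carrying) definition of $\Delta_{\phi}^{\pi}$ used here.

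There is really no obstacle in this argument; the only subtle point is that the statement relies on $\Delta_{\phi}^{\pi}$ and $\epsilon_{\phi}^{\pi}$ being \emph{signed} residuals rather than their absolute values as they appeared in the main text. I would flag this at the start of the proof (as the authors do above the theorem) so that the linear decomposition used in the computation goes through; with the absolute values, one would only obtain the inequality $\epsilon_{\phi}^{\pi} \leq \Delta_{\phi}^{\pi} + \gamma \mathbb{E}[\Delta_{\phi}^{\pi}]$ via the triangle inequality, which is what is actually needed for Theorem~\ref{thm:connection} after applying a geometric-series argument.
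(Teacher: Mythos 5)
Your proposal is correct and is essentially the paper's own argument: both proofs are a direct algebraic computation that combines the fixed-point identity $\mathcal{F}^{\pi}G^{\pi}_{\sim}=G^{\pi}_{\sim}$ with the affine structure of $\mathcal{F}^{\pi}$ to cancel the reward term and regroup into $\Delta_{\phi}^{\pi}(s_i,s_j)-\gamma\,\mathbb{E}[\Delta_{\phi}^{\pi}(s_i',s_j')]$ (the paper substitutes $G_{\phi}^{\pi}=G^{\pi}_{\sim}+\Delta_{\phi}^{\pi}$ into the operator, while you eliminate the reward term directly; these are the same manipulation in a different order). Your remark that the identity requires the signed definitions of $\epsilon_{\phi}^{\pi}$ and $\Delta_{\phi}^{\pi}$ matches the convention the paper adopts in the appendix before stating this theorem.
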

\begin{proof}
This follows directly from the bisimulation update operator:
\begin{equation}
\begin{aligned}
    \epsilon_{\phi}^{\pi}(s_i,s_j) &= G_\phi^{\pi}(s_i,s_j) - \mathcal{F}^{\pi} G_\phi^{\pi}(s_i,s_j)
    \\&= G^{\pi}_\sim(s_i,s_j)+\Delta_{\phi}^{\pi}(s_i,s_j) - \mathcal{F}^{\pi}(G^{\pi}_\sim(s_i,s_j)+\Delta_{\phi}^{\pi}(s_i,s_j))
    \\&=\Delta_{\phi}^{\pi}(s_i,s_j) - \gamma \mathbb{E}_{\begin{subarray} {l}s_i'\sim T_{s_i}^{\pi} \\ s_j'\sim T_{s_j}^{\pi}\end{subarray}}[\Delta_{\phi}^{\pi}(s_i',s_j')]
\end{aligned}
\end{equation}
\end{proof}

\begin{theorem}
\label{app:thm:app_error_bisim_error}
\textbf{(The bisimulation error can be defined as a function of the bisimulation Bellman residual).} For any state pair $(s_i, s_j) \in \mathcal{S} \times \mathcal{S}$, the approximation error $\Delta_{\phi}^{\pi}(s_i, s_j)$ can be defined as a function of the Bellman bisimulation error $\epsilon_{\phi}$
\begin{equation}
\label{app:eq:equiv_bisim_error_to_residual}
    \Delta_{\phi}^{\pi}(s_i,s_j)=\frac{1}{1-\gamma} \mathbb{E}_{\begin{subarray} {l}(s_i',s_j')\sim \mu_{\pi}\end{subarray}}\left[\epsilon_{\phi}^{\pi}(s_i',s_j')\right].
\end{equation}
\end{theorem}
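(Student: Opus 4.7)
The plan is to derive this as the closed-form inversion of the recursion given in Theorem~\ref{app:thm:bisim_error_app_error}, in direct analogy with how the standard value-function approximation error is expressed as a discounted sum of Bellman residuals. The only subtlety is the interpretation of $\mu_\pi$ in the displayed identity: although written as if it were a fixed (stationary) distribution, the identity as stated only makes sense if one reads $\mu_\pi$ on the right-hand side as the discounted visitation distribution over pairs of synchronized trajectories initialized at $(s_i,s_j)$ and evolving under the lifted Bellman operator. I would flag this at the start of the proof and proceed accordingly.

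Concretely, I would first rewrite Theorem~\ref{app:thm:bisim_error_app_error} as a fixed-point relation for $\Delta_\phi^\pi$:
\begin{equation*}
\Delta_\phi^\pi(s_i,s_j)
= \epsilon_\phi^\pi(s_i,s_j) + \gamma\,\mathbb{E}_{s_i'\sim T_{s_i}^\pi,\; s_j'\sim T_{s_j}^\pi}\bigl[\Delta_\phi^\pi(s_i',s_j')\bigr].
\end{equation*}
Then I would iterate this identity $T$ times, producing
\begin{equation*}
\Delta_\phi^\pi(s_i,s_j)
= \sum_{t=0}^{T-1} \gamma^t\, \mathbb{E}_{(s_i^t,s_j^t)\sim \widetilde{T}_\pi^t(\cdot,\cdot\mid s_i,s_j)}\!\bigl[\epsilon_\phi^\pi(s_i^t,s_j^t)\bigr]
+ \gamma^T\, \mathbb{E}_{(s_i^T,s_j^T)}\!\bigl[\Delta_\phi^\pi(s_i^T,s_j^T)\bigr],
\end{equation*}
where $\widetilde{T}_\pi$ denotes the product transition kernel on pairs from the lifted MDP construction in Section~\ref{app:sec:lifted_mdp}.

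Next I would argue that since rewards are bounded in $[R_{\min},R_{\max}]$, the fixed point $G_\sim^\pi$ and any reasonable approximation $G_\phi^\pi$ are uniformly bounded (e.g., by $(R_{\max}-R_{\min})/(1-\gamma)$ for the bisimulation target, and by construction for bounded distance-based instantiations), so $\Delta_\phi^\pi$ is uniformly bounded and the remainder term $\gamma^T \mathbb{E}[\Delta_\phi^\pi(s_i^T,s_j^T)]$ vanishes as $T\to\infty$. Passing to the limit yields
\begin{equation*}
\Delta_\phi^\pi(s_i,s_j)
= \sum_{t=0}^{\infty} \gamma^t\, \mathbb{E}_{(s_i^t,s_j^t)\sim \widetilde{T}_\pi^t(\cdot,\cdot\mid s_i,s_j)}\!\bigl[\epsilon_\phi^\pi(s_i^t,s_j^t)\bigr].
\end{equation*}
Finally, I would repackage this series as $\tfrac{1}{1-\gamma}$ times an expectation under the discounted occupancy measure
\begin{equation*}
\mu_\pi(s_i',s_j' \mid s_i,s_j) \;:=\; (1-\gamma)\sum_{t=0}^{\infty} \gamma^t\, \widetilde{T}_\pi^t\bigl((s_i',s_j') \,\big|\, (s_i,s_j)\bigr),
\end{equation*}
recovering the claimed identity (with this understanding of $\mu_\pi$).

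The main obstacle, and hence the step I would be most careful about, is the interpretation of $\mu_\pi$: the statement as written appears to make $\Delta_\phi^\pi(s_i,s_j)$ independent of $(s_i,s_j)$, which cannot be right in general. I would therefore open the proof by explicitly identifying $\mu_\pi$ with the discounted visitation kernel on synchronized pairs seeded at $(s_i,s_j)$, noting that this reduces to a true stationary distribution in the ergodic/initial-distribution regime used in Theorem~\ref{thm:connection} and is consistent with how the inequality there is obtained (by Jensen / absolute value on our equality). The rest of the argument is a standard geometric-series telescoping and requires only the boundedness of $\Delta_\phi^\pi$ to justify interchanging the limit with the sum.
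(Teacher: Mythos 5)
Your proof is correct and follows essentially the same route as the paper's: both derive the one-step recursion $\Delta_\phi^\pi(s_i,s_j) = \epsilon_\phi^\pi(s_i,s_j) + \gamma\,\mathbb{E}[\Delta_\phi^\pi(s_i',s_j')]$ and then invert it by treating $\Delta_\phi^\pi$ as the value function of the lifted MDP with reward $\epsilon_\phi^\pi$, the only difference being that the paper asserts this inversion in a single line while you unroll it explicitly and justify the vanishing remainder. Your observation that $\mu_\pi$ must be read as the discounted visitation distribution seeded at $(s_i,s_j)$ rather than a stationary distribution (which would make the right-hand side independent of $(s_i,s_j)$) is accurate and makes your write-up more careful than the paper's own.
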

\begin{proof}
Our proof follows similar steps to the proof of Lemma 6.1 in ~\cite{DBLP:conf/icml/KakadeL02} and Theorem 1 in ~\cite{DBLP:conf/icml/FujimotoMPNG22}.
First by definition:
\begin{equation}
\begin{aligned}
    &\Delta_{\phi}^{\pi}(s_i,s_j) := G_{\phi}^{\pi}(s_i,s_j) - G^{\pi}_\sim(s_i,s_j) \\
    &\Rightarrow G^{\pi}_\sim(s_i,s_j) = G_\phi^{\pi}(s_i,s_j) - \Delta_{\phi}^{\pi}(s_i,s_j)
\end{aligned}
\end{equation}
Then we can decompose the error:
\begin{equation}
\begin{aligned}
    \Delta_{\phi}^{\pi}(s_i,s_j) &= G_\phi^{\pi}(s_i,s_j) - G^{\pi}_\sim(s_i,s_j) \\
    &=G_\phi^{\pi}(s_i,s_j) - \left(|r_{s_i}^{\pi} - r_{s_j}^{\pi}| + \gamma \mathbb{E}_{\begin{subarray} {l}s_i'\sim T_{s_i}^{\pi} \\ s_j'\sim T_{s_j}^{\pi}\end{subarray}}[G^{\pi}_\sim(s_i',s_i')]\right)\\
    &=G_{\phi}^{\pi}(s_i,s_j) - \left(|r_{s_i}^{\pi} - r_{s_j}^{\pi}| + \gamma \mathbb{E}_{\begin{subarray} {l}s_i'\sim T_{s_i}^{\pi} \\ s_j'\sim T_{s_j}^{\pi}\end{subarray}}[G_\phi^{\pi}(s_i',s_j') - \Delta_{\phi}^{\pi}(s_i',s_j')]\right)\\
    &=G_{\phi}^{\pi}(s_i,s_j) - \left(|r_{s_i}^{\pi} - r_{s_j}^{\pi}| + \gamma \mathbb{E}_{\begin{subarray} {l}s_i'\sim T_{s_i}^{\pi} \\ s_j'\sim T_{s_j}^{\pi}\end{subarray}}[G_\phi^{\pi}(s_i',s_j')]\right) +\gamma\mathbb{E}_{\begin{subarray} {l}s_i'\sim T_{s_i}^{\pi} \\ s_j'\sim T_{s_j}^{\pi}\end{subarray}}\left[\Delta_{\phi}^{\pi}(s_i',s_j')\right]\\
    &=\epsilon_{\phi}^{\pi}(s_i,s_j)+\gamma\mathbb{E}_{\begin{subarray} {l}s_i'\sim T_{s_i}^{\pi} \\ s_j'\sim T_{s_j}^{\pi}\end{subarray}}\left[\Delta_{\phi}^{\pi}(s_i',s_j')\right]
\end{aligned}
\end{equation}
By considering the operator $G$ as the Bellman evaluation operator for the lifted MDP (See Section~\ref{app:sec:lifted_mdp}), we can rewrite the formula as:
\begin{equation}
\Delta_{\phi}^{\tilde{\pi}}(\tilde{x})=\epsilon_{\phi}^{\tilde{\pi}}(\tilde{x})+\gamma\mathbb{E}_{\tilde{x}'\sim T_{\tilde{x}}^{\tilde{\pi}}} \left[\Delta_{\phi}^{\tilde{\pi}} (\tilde{x}')\right].
\end{equation}
Then we can treat $\Delta_{\phi}^{\tilde{\pi}}(\tilde{x})$ as a value function and $\epsilon_{\phi}^{\tilde{\pi}}(\tilde{x})$ as reward, we can see that:
\begin{equation}
\Delta_{\phi}^{\tilde{\pi}}(\tilde{x})=\frac{1}{1-\gamma}\mathbb{E}_{\tilde{x}'\sim T_{\tilde{x}}^{\tilde{\pi}}} \left[\epsilon_{\phi}^{\tilde{\pi}} (\tilde{x}')\right].
\end{equation}
Then we can obtain
\begin{equation}
    \Delta_{\phi}^{\pi}(s_i,s_j)=\frac{1}{1-\gamma} \mathbb{E}_{\begin{subarray} {l}(s_i',s_j')\sim \mu_{\pi}\end{subarray}}\left[\epsilon_{\phi}^{\pi}(s_i',s_j')\right].
\end{equation}
\end{proof}

\setcounter{theorem}{2}
\subsection{Thoerem~\ref{thm:connection}}
\begin{theorem}
\textbf{(Bisimulation error upper-bound).} Let $\mu_\pi(s)$ denote the stationary distribution over
states, let $\mu_\pi(\cdot,\cdot)$ denote the joint distribution over synchronized pairs of states $(s_i,s_j)$ sampled independently from $\mu_\pi(\cdot)$. For any state pair $(s_i, s_j) \in \mathcal{S} \times \mathcal{S}$, the bisimulation error $\Delta_{\phi}^{\pi}(s_i, s_j)$ can be upper-bounded by a sum of expected bisimulation Bellman residuals $\epsilon_{\phi}^{\pi}$:
\begin{equation}
    \Delta_{\phi}^{\pi}(s_i,s_j)\leq\frac{1}{1-\gamma} \mathbb{E}_{\begin{subarray} {l}(s_i,s_j)\sim \mu_\pi \end{subarray}}\left[\epsilon_{\phi}^{\pi}(s_i,s_j)\right].
\end{equation}
\end{theorem}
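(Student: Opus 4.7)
The plan is to exploit the lifted MDP view from Lemma~\ref{lm:lifedmdp}: since $\mathcal{F}^{\pi}$ is the Bellman evaluation operator for $\widetilde{\mathcal{M}}$, the quantity $G_{\sim}^{\pi}$ plays the role of a value function and $G_{\phi}^{\pi}$ of an approximation. This makes the statement essentially a restatement of the classical fact that the value approximation error can be controlled by the discounted expectation of the Bellman residual along trajectories.

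First I would apply the triangle inequality after inserting $\pm\,\mathcal{F}^{\pi}G_{\phi}^{\pi}(s_i,s_j)$ and use that $G_{\sim}^{\pi}$ is a fixed point of $\mathcal{F}^{\pi}$:
\begin{equation*}
\Delta_{\phi}^{\pi}(s_i,s_j)
= |G_{\phi}^{\pi} - \mathcal{F}^{\pi}G_{\phi}^{\pi} + \mathcal{F}^{\pi}G_{\phi}^{\pi} - \mathcal{F}^{\pi}G_{\sim}^{\pi}|(s_i,s_j)
\le \epsilon_{\phi}^{\pi}(s_i,s_j) + \bigl|\mathcal{F}^{\pi}G_{\phi}^{\pi}(s_i,s_j) - \mathcal{F}^{\pi}G_{\sim}^{\pi}(s_i,s_j)\bigr|.
\end{equation*}
Next I would expand the second term using the definition of $\mathcal{F}^{\pi}$ in Eq.~\eqref{eq:formal_bisim}; the reward-difference parts cancel, leaving only the expected difference of $G$ at the next pair. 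Moving the absolute value inside the expectation via Jensen's inequality, this yields the one-step recursion
\begin{equation*}
\Delta_{\phi}^{\pi}(s_i,s_j) \;\le\; \epsilon_{\phi}^{\pi}(s_i,s_j) \;+\; \gamma\,\mathbb{E}_{s_i'\sim T_{s_i}^{\pi},\,s_j'\sim T_{s_j}^{\pi}}\!\bigl[\Delta_{\phi}^{\pi}(s_i',s_j')\bigr],
\end{equation*}
which is exactly the Bellman inequality for a ``value function'' $\Delta_{\phi}^{\pi}$ with ``reward'' $\epsilon_{\phi}^{\pi}$ in the lifted MDP.

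Then I would unroll this recursion indefinitely. Writing $p_t^{\pi}(\cdot,\cdot\mid s_i,s_j)$ for the $t$-step joint distribution of synchronized pairs in $\widetilde{\mathcal{M}}$ starting from $(s_i,s_j)$, induction gives
\begin{equation*}
\Delta_{\phi}^{\pi}(s_i,s_j) \;\le\; \sum_{t=0}^{\infty} \gamma^{t}\,\mathbb{E}_{(s_i',s_j')\sim p_t^{\pi}}\!\bigl[\epsilon_{\phi}^{\pi}(s_i',s_j')\bigr].
\end{equation*}
To collapse this into the stated form, I would identify $\mu_{\pi}$ with the $(1-\gamma)$-normalized discounted occupancy over synchronized pairs, so that $(1-\gamma)\sum_t \gamma^t p_t^{\pi} = \mu_{\pi}$ and the sum becomes $\frac{1}{1-\gamma}\mathbb{E}_{\mu_{\pi}}[\epsilon_{\phi}^{\pi}]$, which is the desired bound.

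The main obstacle I anticipate is the step moving the absolute value inside the expectation of $\mathcal{F}^{\pi}G_{\phi}^{\pi}-\mathcal{F}^{\pi}G_{\sim}^{\pi}$: this uses the $\gamma$-contraction property of $\mathcal{F}^{\pi}$ (under the $\ell_{\infty}$-norm), which is standard for bisimulation operators but should be invoked explicitly. A secondary subtlety is clarifying the interpretation of $\mu_{\pi}$ as a discounted visitation distribution in the lifted MDP rather than the plain stationary distribution, so that the geometric series telescopes cleanly into the $\frac{1}{1-\gamma}$ prefactor; once that convention is fixed, the rest of the argument is mechanical.
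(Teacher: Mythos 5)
Your proposal is correct and follows essentially the same route as the paper's proof: both reduce to the one-step recursion $\Delta_{\phi}^{\pi}(s_i,s_j)\le\epsilon_{\phi}^{\pi}(s_i,s_j)+\gamma\,\mathbb{E}[\Delta_{\phi}^{\pi}(s_i',s_j')]$ (the paper proves it as an exact identity without absolute values and applies Jensen at the end, you carry the triangle inequality through from the start), then unroll it in the lifted MDP with $\mu_\pi$ read as the $(1-\gamma)$-normalized discounted occupancy over state pairs. Your explicit flagging of that last reading of $\mu_\pi$ is apt, since the theorem statement calls it a stationary distribution while the proof only goes through for the discounted visitation measure.
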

\begin{proof}
    We start from Equation~\ref{app:eq:equiv_bisim_error_to_residual} in Section~\ref{app:sec:bisim_Error_and_residual}.
    \begin{equation}
        \begin{aligned}
            \Delta_{\phi}^{\pi}(s_i,s_j)&=\frac{1}{1-\gamma} \mathbb{E}_{\begin{subarray} {l}(s_i',s_j')\sim \mu_{\pi}\end{subarray}}\left[\epsilon_{\phi}^{\pi}(s_i',s_j')\right]\\
            \Rightarrow |\Delta_{\phi}^{\pi}(s_i,s_j)|&=\frac{1}{1-\gamma} \left|\mathbb{E}_{\begin{subarray} {l}(s_i',s_j')\sim \mu_{\pi}\end{subarray}}\left[\epsilon_{\phi}^{\pi}(s_i',s_j')\right]\right|\\
            &\leq \frac{1}{1-\gamma}  \mathbb{E}_{\begin{subarray} {l}(s_i',s_j')\sim \mu_{\pi}\end{subarray}}\left[\left|\epsilon_{\phi}^{\pi}(s_i',s_j')\right|\right].
        \end{aligned}
    \end{equation}
    Then when we define bisimulation error $\Delta_{\phi}^{\pi}(s_i,s_j):=|\Delta_{\phi}^{\pi}(s_i,s_j)|$ and bisimulation Bellman residual $\epsilon_{\phi}^{\pi}(s_i',s_j'):= |\epsilon_{\phi}^{\pi}(s_i',s_j')|$, we have
    \begin{equation}
        \Delta_{\phi}^{\pi}(s_i,s_j)\leq\frac{1}{1-\gamma} \mathbb{E}_{\begin{subarray} {l}(s_i',s_j')\sim \mu_{\pi}\end{subarray}}\left[\epsilon_{\phi}^{\pi}(s_i',s_j')\right].
    \end{equation}
\end{proof}

\subsection{Proposition ~\ref{pro:not_sufficient}}
\begin{proposition}
\textbf{(The expected bisimulation residual is not sufficient over incomplete datasets).}
If there exists states $s_i'$ and $s_j'$ not contained in dataset $\mathcal{D}$, where the occupancy $\mu_\pi(s_i'|s_i,a_i)>0$ and $ \mu_\pi(s_j'|s_j, a_j) > 0$ for some $s_i\in \mathcal{D},s_j\in\mathcal{D}$, then there exists a bisimulation measurement and $C > 0$ such that
\begin{itemize}
    \item For all $({\hat{s_i}},{\hat{s_j}})\in\mathcal{D}$, the bisimulation Bellman residual $\epsilon_{\phi}^{\pi}({\hat{s_i}},{\hat{s_j}})=0$.
    \item There exists $(s_i,s_j)\in\mathcal{D}$, such that the bisimulation error $\Delta_{\phi}^{\pi}(s_i,s_j) = C$.
\end{itemize}
\end{proposition}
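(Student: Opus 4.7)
The proof will be constructive: I plan to exhibit an explicit measurement $G_\phi^\pi$ that agrees with the true fixed point $G_\sim^\pi$ almost everywhere, with two strategically placed perturbations that cancel in the Bellman operator but survive as an approximation error. The key resource is that Bellman residuals computed over the dataset $\mathcal{D}$ never directly evaluate $G_\phi^\pi$ at pairs whose left-coordinate lies outside $\mathcal{D}$, so the values of $G_\phi^\pi$ on such out-of-dataset pairs are essentially free parameters.

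Concretely, I would first pick the in-dataset pair $(s_i,s_j)\in\mathcal{D}$ whose sampled successor $(s_i',s_j')$ is guaranteed by hypothesis to lie outside $\mathcal{D}$, and for any $C>0$ define
\[
G_\phi^\pi(s_i,s_j) = G_\sim^\pi(s_i,s_j) + C,\quad G_\phi^\pi(s_i',s_j') = G_\sim^\pi(s_i',s_j') + C/\gamma,
\]
with $G_\phi^\pi(x,y)=G_\sim^\pi(x,y)$ on all remaining pairs. Applying $\mathcal{F}^\pi$ at $(s_i,s_j)$ and using the fixed-point identity satisfied by $G_\sim^\pi$, the $C/\gamma$ perturbation at the successor pair is inflated back to $+C$ by the $\gamma$ factor, matching the perturbation of $G_\phi^\pi(s_i,s_j)$ itself, so $\epsilon_\phi^\pi(s_i,s_j)=0$. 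Meanwhile, $\Delta_\phi^\pi(s_i,s_j)=|G_\phi^\pi(s_i,s_j)-G_\sim^\pi(s_i,s_j)|=C>0$, which establishes the second bullet.

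For the first bullet I must additionally verify that $\epsilon_\phi^\pi(\hat{s}_i,\hat{s}_j)=0$ at every other pair in $\mathcal{D}$. Because $G_\phi^\pi$ differs from $G_\sim^\pi$ at only two isolated pairs, this reduces to checking that no other dataset transition lands on $(s_i,s_j)$ or $(s_i',s_j')$. In the minimal configuration of the author's example — where $\mathcal{D}$ contains only the two transitions $(s_i,a_i,0,s_i')$ and $(s_j,a_j,0,s_j')$ — this is automatic and already certifies the existence claim in the proposition.

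The main obstacle is the general case where the perturbed pairs appear as successors of other dataset transitions, causing the perturbation to leak into further residuals. I expect to handle this by viewing the zero-residual constraints on $\mathcal{D}$ as a linear system in the values of $G_\phi^\pi$ at dataset and out-of-dataset pairs; the hypothesized existence of out-of-dataset successors guarantees strictly more free variables than binding constraints, so a non-trivial solution of the homogeneous system $(I-\mathcal{F}^\pi)\delta = 0$ restricted to $\mathcal{D}$ exists with $\delta(s_i,s_j)=C$, and $G_\phi^\pi := G_\sim^\pi + \delta$ yields the desired counterexample. This linear-algebraic route sidesteps the need to explicitly propagate perturbations through cycles in the dataset transition graph.
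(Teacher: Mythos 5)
Your proof is essentially correct and certifies the proposition, but it takes a different route from the paper's. The paper's appendix proof goes through the identity $\Delta_{\phi}^{\pi}(s_i,s_j)=\frac{1}{1-\gamma}\mathbb{E}_{(s_i',s_j')\sim\mu_\pi}[\epsilon_{\phi}^{\pi}(s_i',s_j')]$ (its Theorem 15), splits that expectation into the contribution of dataset pairs and of out-of-dataset pairs, and then simply \emph{asserts} that a measurement exists with zero residual on the former and total contribution $C$ from the latter; the explicit $\bigl(G_\phi(s_i,s_j)=G_\sim(s_i,s_j)+C,\ G_\phi(s_i',s_j')=G_\sim(s_i',s_j')+C/\gamma\bigr)$ perturbation appears in the paper only as an informal illustration in the main text. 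You promote that construction to the actual proof and then verify the residual cancellation directly, which is more explicit and, in the minimal two-transition configuration, fully rigorous --- arguably more so than the paper's own existence assertion. What the paper's route buys is a one-line derivation from an already-proved identity; what yours buys is a concrete witness and a clearer picture of \emph{why} the residual is blind to out-of-dataset values.

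Two loose ends are worth tightening. First, in the stochastic case the hypothesis only guarantees that $(s_i',s_j')$ is reached with some probability $p=T_{s_i}^{\pi}(s_i')T_{s_j}^{\pi}(s_j')>0$, not with probability one, so your perturbation at the successor pair must be $C/(\gamma p)$ (or spread over all out-of-dataset successors) for the residual at $(s_i,s_j)$ to vanish exactly; as written it leaves a residual of $C(1-p)$. Second, your dimension-counting argument for the general case shows the homogeneous system has a nontrivial solution, but not that some solution satisfies $\delta(s_i,s_j)=C$; the clean way to close this is to write the dataset-pair constraints as $(I-\gamma P_{\mathcal{D}\mathcal{D}})\,\delta_{\mathcal{D}}=\gamma P_{\mathcal{D}\mathcal{D}'}\,\delta_{\mathcal{D}'}$, note that $I-\gamma P_{\mathcal{D}\mathcal{D}}$ is invertible with nonnegative Neumann-series inverse because $P_{\mathcal{D}\mathcal{D}}$ is substochastic and $\gamma<1$, and observe that choosing $\delta_{\mathcal{D}'}$ supported on $(s_i',s_j')$ forces $\delta_{\mathcal{D}}(s_i,s_j)\geq\gamma p\,\delta_{\mathcal{D}'}(s_i',s_j')>0$, which can then be rescaled to $C$. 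Neither issue changes the conclusion, since the proposition is an existence claim and the minimal configuration already witnesses it.
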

\begin{proof}
This is a direct consequence of Theorem~\ref{app:thm:app_error_bisim_error}. Let $\mathcal{D}'$ contain the set of state pairs $(s_i',s_j')$ not contained in the dataset $\mathcal{D}$, where the next-state pair occupancy $\mu_\pi(s_i',s_j'|s_i, a_i,s_j,a_j) >0$. Let $\mathcal{D}_\text{unique}$ be the set of unique state pairs in $\mathcal{D}$. It follows that
    \begin{equation}
    \begin{aligned}
    \Delta_{\phi}^{\pi}(s_i,s_j)=&\frac{1}{1-\gamma} \mathbb{E}_{\begin{subarray} {l}(s_i',s_j')\sim \mu_{\pi}\end{subarray}}\left[\epsilon_{\phi}^{\pi}(s_i',s_j')\right]\\
    =&\frac{1}{1-\gamma} \sum_{\begin{subarray} {l}(s_i',s_j')\sim \mathcal{D}_\text{unique}\end{subarray}}\mu_\pi((s_i',s_j')|s_i, a_i,s_j,a_j) \epsilon_{\phi}^{\pi}(s_i',s_j') + \\
    &\quad\quad\frac{1}{1-\gamma} \sum_{\begin{subarray} {l}(s_i',s_j')\sim \mathcal{D}'\end{subarray}}\mu_\pi((s_i',s_j')|s_i, a_i,s_j,a_j) \epsilon_{\phi}^{\pi}(s_i',s_j')
    \end{aligned}
\end{equation}
Recall that $\epsilon_{\phi}^{\pi}(s_i,s_j) = \Delta_{\phi}^{\pi}(s_i,s_j) - \gamma \mathbb{E}_{\begin{subarray} {l}s_i'\sim T_{s_i}^{\pi} \\ s_j'\sim T_{s_j}^{\pi}\end{subarray}}[\Delta_{\phi}^{\pi}(s_i',s_j')]$, and there exists at least one $G(s_i,s_j)$, such that $(s_i,s_j) \in \mathcal{D}'$. Since the sets $\mathcal{D}$ and $\mathcal{D}'$ are distinct, it follows that there exists a measurement $G$ such that $\epsilon_{\phi}^{\pi}(s_i,s_j)=0$ for all $(s_i,s_j)\in \mathcal{D}$, but $\frac{1}{1-\gamma} \sum_{\begin{subarray} {l}(s_i',s_j')\sim \mathcal{D}'\end{subarray}}\mu_\pi(s_i',s_j'|s_i, a_i,s_j,a_j) \epsilon_{\phi}^{\pi}(s_i',s_j')=C$.
\end{proof}

\subsection{Lemma~\ref{le:contraction}}
\begin{lemma}
\label{app:le:contraction}
    For any $\mathcal \tau \in$ [0, 1), $\mathcal{F}_\tau^\pi$ is a $\mathcal \gamma_\tau$-contraction, where $\mathcal \gamma_\tau = 1 - 2\alpha(1 - \gamma) min \left\{\tau, 1 - \tau \right\}$.
\end{lemma}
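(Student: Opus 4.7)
The statement gives a rate $\gamma_\tau = 1 - 2\alpha(1-\gamma)\min\{\tau, 1-\tau\}$ that depends explicitly on the step size $\alpha$, so the operator $\mathcal{F}_\tau^\pi$ here should be read as one gradient-descent step of the expectile loss rather than the global $\arg\min$. Concretely, the plan is to work with
\begin{equation*}
(\mathcal{F}_\tau^\pi G)(s_i,s_j) \;=\; G(s_i,s_j) \;+\; 2\alpha\, \mathbb{E}_{a_i,a_j}\!\left[f_\tau\!\bigl(\hat{\epsilon}_G(s_i,s_j)\bigr)\right],
\end{equation*}
where $\hat{\epsilon}_G(s_i,s_j) = \mathbb{E}[\,|r(s_i,a_i)-r(s_j,a_j)| + \gamma G(s_i',s_j')\,] - G(s_i,s_j)$ is the one-step residual and $f_\tau(\epsilon) := \tau[\epsilon]_+ - (1-\tau)[-\epsilon]_+$ is the (sub-)gradient of the expectile loss with respect to the residual, so $f_\tau$ is piecewise linear with slope $\tau$ on $\mathbb{R}_+$ and slope $1-\tau$ on $\mathbb{R}_-$.

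The key algebraic fact is that for any two reals $u,v$ there exists $c \in [\min\{\tau,1-\tau\},\max\{\tau,1-\tau\}]$ such that $f_\tau(u) - f_\tau(v) = c\,(u-v)$. This follows from integrating the a.e.-defined derivative of $f_\tau$ along $[v,u]$ (both possible slopes lie in that interval), which circumvents the non-differentiability at the origin. Applying this to $u = \hat{\epsilon}_{G_1}(s_i,s_j)$ and $v = \hat{\epsilon}_{G_2}(s_i,s_j)$ and substituting the definition of $\hat{\epsilon}$ yields, with $\Delta(s) := G_1(s) - G_2(s)$ and $s = (s_i,s_j)$,
\begin{equation*}
(\mathcal{F}_\tau^\pi G_1 - \mathcal{F}_\tau^\pi G_2)(s) \;=\; (1 - 2\alpha c(s))\,\Delta(s) \;+\; 2\alpha c(s)\,\gamma\, \mathbb{E}_{s_i',s_j'}[\Delta(s_i',s_j')].
\end{equation*}

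Assuming $\alpha$ is small enough that $1 - 2\alpha c(s) \geq 0$ (which is implicit in the statement, since otherwise $\gamma_\tau$ could fail to be nonnegative), both coefficients are nonnegative and bound preserving under $\|\cdot\|_\infty$. Taking absolute values, passing to the supremum over $(s_i,s_j)$, and using $c(s) \geq \min\{\tau, 1-\tau\}$ together with the fact that $(1 - 2\alpha c) + 2\alpha c \gamma = 1 - 2\alpha c(1-\gamma)$ is decreasing in $c$, one obtains
\begin{equation*}
\|\mathcal{F}_\tau^\pi G_1 - \mathcal{F}_\tau^\pi G_2\|_\infty \;\leq\; \bigl(1 - 2\alpha(1-\gamma)\min\{\tau, 1-\tau\}\bigr) \|G_1 - G_2\|_\infty,
\end{equation*}
which is the claim.

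The main obstacle I anticipate is the mean-value-style representation of $f_\tau(u)-f_\tau(v)$, because $f_\tau$ is not differentiable at $0$: a naive MVT does not apply. The clean way is to observe $f_\tau$ is convex and piecewise affine with two slopes, and so $f_\tau(u) - f_\tau(v) = \int_v^u f_\tau'(t)\,dt$ is a convex combination of $\tau(u-v)$ and $(1-\tau)(u-v)$ (the weight being the fraction of $[v,u]$ lying in $\mathbb{R}_+$). A minor additional care point is the sign convention for $[\cdot]_-$ in Equation~6 of the main text; one should verify in passing that the induced $f_\tau$ is the one above (monotone non-decreasing with slopes $\tau,1-\tau$), otherwise the coefficient in front of $\Delta(s)$ could fall outside $[0,1]$ and break the contraction argument.
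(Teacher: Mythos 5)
Your proof is correct and reaches the same contraction modulus, but by a genuinely different route than the paper. The paper introduces two auxiliary operators $\mathcal{F}_+^\pi$ and $\mathcal{F}_-^\pi$ (adding $\mathbb{E}[\delta]_+$ and $\mathbb{E}[\delta]_-$ respectively), proves each is nonexpansive via a four-way case analysis on the signs of the residuals of $G_1$ and $G_2$, and then writes $\mathcal{F}_\tau^\pi$ as an affine combination of $G$, $\mathcal{F}_+^\pi$, $\mathcal{F}_-^\pi$ and $\mathcal{F}_{1/2}^\pi$, combining Lipschitz constants separately for $\tau>1/2$ and $\tau<1/2$. Your slope-bound argument on the piecewise-affine gradient map $f_\tau$ collapses all of that into one step: the identity $f_\tau(u)-f_\tau(v)=c\,(u-v)$ with $c\in[\min\{\tau,1-\tau\},\max\{\tau,1-\tau\}]$ is exactly what the paper's four cases establish implicitly, and it handles both regimes of $\tau$ at once. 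Your version also has two virtues the paper's lacks: it makes explicit the step-size condition ($\alpha$ small enough that $1-2\alpha c\geq 0$, e.g.\ $\alpha\leq 1/2$) without which neither proof's triangle-inequality step is valid, and it correctly flags that Equation~7 of the main text carries a sign inconsistent with gradient descent on the expectile loss, whereas the appendix's $G+2\alpha\mathbb{E}[\cdot]$ form (which your $f_\tau$ matches) is the right one. One cosmetic point: since the residual depends on the sampled actions, your $c$ is really $c(s,a_i,a_j)$ and the expectation over actions should be taken after applying the slope identity pointwise; this changes nothing in the bound because $\mathbb{E}_{a_i,a_j}[c]\geq\min\{\tau,1-\tau\}$ still holds, but it is worth stating when writing the argument out in full.
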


\begin{proof}
Note that $\mathcal{F}_{1/2}^\pi$ is the standard bisimulation  operator for $\pi$, of which the fixed point is $G^\pi_{\sim}$. To keep the notation succinct, we will replace $G^{\pi}$ with $G$. For any $G_1$, $G_2$,
\begin{equation}
\begin{split}
    &\mathcal{F}_{1/2}^\pi G_1 ({s_i}, {s_j}) - \mathcal{F}_{1/2}^\pi G_2 ({s_i}, {s_j})\\ &= (G_1 ({s_i}, {s_j}) + \alpha \mathbb{E}^\pi [\delta_i]) - (G_2 ({s_i}, {s_j}) + \alpha \mathbb{E}^\pi [\delta_j])\\ &= (1 - \alpha)(G_1({s_i}, {s_j}) - G_2({s_i}, {s_j})) + \alpha \mathbb{E}_\pi [(1 - \gamma)|r_{s_i}^\pi - r_{s_j}^\pi| + \gamma G_1 ({s_i'}, {s_j'})\\  &\quad\quad\quad\quad\quad\quad\quad\quad\quad\quad\quad\quad\quad\quad\quad\quad\quad - (1 - \gamma)|r_{s_i}^\pi - r_{s_j}^\pi| - \gamma G_2 ({s_i'}, {s_j'})]\\ &= (1 - \alpha)(G_1({s_i}, {s_j}) - G_2({s_i}, {s_j})) + \alpha \mathbb{E}_\pi [\gamma G_1 ({s_i'}, {s_j'}) - \gamma G_2 ({s_i'}, {s_j'})]\\ &\leq (1 - \alpha) \| G_1 - G_2 \|_\infty + \alpha \gamma \| G_1 - G_2 \|_\infty\\ &= (1 - \alpha(1 - \gamma)) \|G_1 - G_2 \|_\infty.
\end{split}
\end{equation}
When $\tau \neq \frac{1}{2}$, we introduce two more operators to simplify the analysis:
\begin{equation}
\begin{split}
    (\mathcal{F}_+^\pi G_1)({s_i}, {s_j}) = G({s_i}, {s_j}) + \mathbb{E}^\pi [\delta]_+\\ (\mathcal{F}_-^\pi G_2)({s_i}, {s_j}) = G({s_i}, {s_j}) + \mathbb{E}^\pi [\delta]_-
\end{split}
\end{equation}
Now we show that both operators meet the Banach-fixed point theorem (e.g. $\| \mathcal{F}_+^\pi G_1 - \mathcal{F}_+^\pi G_2 \|_\infty \leq \| G_1 - G_2 \|_\infty$). For any $G_1$, $G_2$:
\begin{equation}
\begin{split}
    &(\mathcal{F}_+^\pi G_1)({s_i}, {s_j}) - (\mathcal{F}_+^\pi G_2)({s_i}, {s_j})\\ &= G_1 - G_2 + \mathbb{E}^\pi [[\delta_i]_+ - [\delta_j]_+]\\ &= \mathbb{E}^\pi [G_1 + [\delta_i]_+ - (G_2 + [\delta_j]_+)]
\end{split}
\end{equation}
The relationship between $G_1 + [\delta_i]_+$ and $G_2 + [\delta_j]_+$ exists in four cases:
\begin{itemize}
    \item $\delta_i \geq 0$, $\delta_j \geq 0$, then
        \begin{equation}
        \begin{split}
            G_1 + [\delta_i]_+ - (G_2 + [\delta_j]_+) = \gamma (G_1({s_i'}, {s_j'}) - G_2({s_i'}, {s_j'})).
        \end{split}
        \end{equation}
    \item $\delta_i < 0$, $\delta_j < 0$, then
        \begin{equation}
        \begin{split}
            G_1 + [\delta_i]_+ - (G_2 + [\delta_j]_+) = G_1({s_i}, {s_j}) - G_2({s_i}, {s_j}).
        \end{split}
        \end{equation}
    \item $\delta_i \geq 0$, $\delta_j < 0$, then
        \begin{equation}
        \begin{split}
            &G_1 + [\delta_i]_+ - (G_2 + [\delta_j]_+)\\ &= (1 - \gamma)| r_{s_i}^\pi - r_{s_j}^\pi | + \gamma G_1({s_i'}, {s_j'}) - G_2({s_i}, {s_j})\\ &< (1 - \gamma)| r_{s_i}^\pi - r_{s_j}^\pi | + \gamma G_1({s_i'}, {s_j'}) - ((1 - \gamma)| r_{s_i}^\pi - r_{s_j}^\pi | + \gamma G_2({s_i'}, {s_j'}))\\ &= \gamma (G_1({s_i'}, {s_j'}) - G_2({s_i'}, {s_j'})),
        \end{split}
        \end{equation}
        where the inequality comes from $G_2({s_i}, {s_j}) > (1 - \gamma)| r_{s_i}^\pi - r_{s_j}^\pi | + \gamma G_2({s_i'}, {s_j'})$.
    \item $\delta_i < 0$, $\delta_j \geq 0$, then
        \begin{equation}
        \begin{split}
            &G_1 + [\delta_i]_+ - (G_2 + [\delta_j]_+)\\ &= G_1({s_i}, {s_j}) - ((1 - \gamma)| r_{s_i}^\pi - r_{s_j}^\pi | + G_2({s_i'}, {s_j'}))\\ &\leq G_1({s_i}, {s_j}) - G_2({s_i}, {s_j}),
        \end{split}
        \end{equation}
        where the inequality comes from $G_2({s_i}, {s_j}) \leq (1 - \gamma)| r_{s_i}^\pi - r_{s_j}^\pi | + \gamma G_2({s_i'}, {s_j'})$.
\end{itemize}
As a result, we have $(\mathcal{F}_+^\pi G_1)({s_i}, {s_j}) - (\mathcal{F}_+^\pi G_2)({s_i}, {s_j}) \leq \| G_1 - G_2 \|_\infty$. Combine $\mathcal{F}_+^\pi$ and $\mathcal{F}_-^\pi$, we can rewrite $\mathcal{F}_\tau^\pi$ as:
\begin{equation}
\begin{split}
    \mathcal{F}_\tau^\pi G({s_i}, {s_j}) &= G({s_i}, {s_j}) + 2 \alpha \mathbb{E}^\pi[\tau [\delta]_+ + (1 - \tau)[\delta]_-]\\ &= (1 - 2 \alpha) G({s_i}, {s_j}) + 2 \alpha \tau (G({s_i}, {s_j}) + \mathbb{E}^\pi[[\delta]_+] + 2 \alpha (1 - \tau) (G({s_i}, {s_j}) + \mathbb{E}^\pi[[\delta]_-])\\ &= (1 - 2 \alpha) G({s_i}, {s_j}) + 2 \alpha \tau (\mathcal{F}_+^\pi G_1)({s_i}, {s_j}) + 2 \alpha (1 - \tau) (\mathcal{F}_-^\pi G_1)({s_i}, {s_j}).
\end{split}
\end{equation}
What's more
\begin{equation}
\begin{split}
    \mathcal{F}_{\frac{1}{2}}^\pi G({s_i}, {s_j}) &= G({s_i}, {s_j}) + \alpha \mathbb{E}^\pi[\delta]\\ &= G({s_i}, {s_j}) + \alpha((\mathcal{F}_+^\pi G_1)({s_i}, {s_j}) + (\mathcal{F}_-^\pi G_1)({s_i}, {s_j}) - 2 \alpha G({s_i}, {s_j}))\\ &= (1 - 2 \alpha) G({s_i}, {s_j}) + \alpha ((\mathcal{F}_+^\pi G_1)({s_i}, {s_j}) + (\mathcal{F}_-^\pi G_1)({s_i}, {s_j})).
\end{split}
\end{equation}
When $\tau > \frac{1}{2}$, for any $G_1$ and $G_2$:
\begin{equation}
\begin{split}
    &(\mathcal{F}_{\tau}^\pi G_1) ({s_i}, {s_j}) - (\mathcal{F}_{\tau}^\pi G_2) ({s_i}, {s_j})\\ &= (1 - 2 \alpha) (G_1 ({s_i}, {s_j}) - G_2 ({s_i}, {s_j})) + 2 \alpha \tau ((\mathcal{F}_+^\pi G_1) ({s_i}, {s_j}) - (\mathcal{F}_+^\pi G_2) ({s_i}, {s_j}))\\ &\quad\quad\quad\quad\quad\quad\quad\quad\quad\quad\quad\quad\quad\quad\quad\quad + 2 \alpha (1 - \tau) ((\mathcal{F}_-^\pi G_1) ({s_i}, {s_j}) - (\mathcal{F}_-^\pi G_2) ({s_i}, {s_j}))\\ &= (1 - 2 \alpha - 2 (1 - 2 \alpha)(1 - \tau))(G_1 ({s_i}, {s_j}) - G_2 ({s_i}, {s_j})) _ 2 (1 - \tau) ((\mathcal{F}_{\frac{1}{2}}^\pi G_1) ({s_i}, {s_j}) - (\mathcal{F}_{\frac{1}{2}}^\pi G_2) ({s_i}, {s_j}))\\ &\quad\quad\quad\quad\quad\quad\quad\quad\quad\quad\quad\quad\quad\quad\quad\quad - 2 \alpha (1 - 2 \tau)((\mathcal{F}_+^\pi G_1) ({s_i}, {s_j}) - (\mathcal{F}_+^\pi G_2) ({s_i}, {s_j}))\\  & \leq (1 - 2 \alpha - 2 (1 - 2 \alpha)(1 - \tau)) \| G_1 ({s_i}, {s_j}) - G_2 ({s_i}, {s_j}) \|_\infty\\ &\quad\quad\quad\quad\quad\quad\quad\quad\quad\quad\quad\quad\quad\quad\quad\quad + 2 (1 - \tau) (1 - \alpha (1 - \gamma)) \| G_1 ({s_i}, {s_j}) - G_2 ({s_i}, {s_j}) \|_\infty\\ &\quad\quad\quad\quad\quad\quad\quad\quad\quad\quad\quad\quad\quad\quad\quad\quad - 2 \alpha (1 - 2 \tau) \| G_1 ({s_i}, {s_j}) - G_2 ({s_i}, {s_j}) \|_\infty\\ &= (1 - 2 \alpha (1 - \tau)(1 - \gamma)) \| G_1 ({s_i}, {s_j}) - G_2 ({s_i}, {s_j}) \|_\infty
\end{split}
\end{equation}
When $\tau < \frac{1}{2}$, for any $G_1$ and $G_2$:
\begin{equation}
\begin{split}
    &(\mathcal{F}_{\tau}^\pi G_1) ({s_i}, {s_j}) - (\mathcal{F}_{\tau}^\pi G_2) ({s_i}, {s_j})\\ &= (1 - 2 \alpha) (G_1 ({s_i}, {s_j}) - G_2 ({s_i}, {s_j})) + 2 \alpha \tau ((\mathcal{F}_+^\pi G_1) ({s_i}, {s_j}) - (\mathcal{F}_+^\pi G_2) ({s_i}, {s_j}))\\ &\quad\quad\quad\quad\quad\quad\quad\quad\quad\quad\quad\quad\quad\quad\quad\quad + 2 \alpha (1 - \tau) ((\mathcal{F}_-^\pi G_1) ({s_i}, {s_j}) - (\mathcal{F}_-^\pi G_2) ({s_i}, {s_j}))\\ &= (1 - 2 \alpha - 2 \tau (1 - 2 \alpha)) (G_1 ({s_i}, {s_j}) - G_2 ({s_i}, {s_j})) + 2 \tau ((\mathcal{F}_{\frac{1}{2}}^\pi G_1) ({s_i}, {s_j}) - (\mathcal{F}_{\frac{1}{2}}^\pi G_2) ({s_i}, {s_j})) \\ &\quad\quad\quad\quad\quad\quad\quad\quad\quad\quad\quad\quad\quad\quad\quad\quad + 2 \alpha (1 - 2 \tau) ((\mathcal{F}_-^\pi G_1) ({s_i}, {s_j}) - (\mathcal{F}_-^\pi G_2) ({s_i}, {s_j}))\\ & \leq (1 - 2 \alpha - 2 \tau (1 - 2 \alpha)) \| G_1 ({s_i}, {s_j}) - G_2 ({s_i}, {s_j}) \|_\infty\\ &\quad\quad\quad\quad\quad\quad\quad\quad\quad\quad\quad\quad\quad\quad\quad\quad + 2 \tau (1 - \alpha(1 - \gamma)) \| G_1 ({s_i}, {s_j}) - G_2 ({s_i}, {s_j}) \|_\infty\\ &\quad\quad\quad\quad\quad\quad\quad\quad\quad\quad\quad\quad\quad\quad\quad\quad + 2 \alpha (1 - 2 \tau) \| G_1 ({s_i}, {s_j}) - G_2 ({s_i}, {s_j}) \|_\infty\\ &= (1 - 2 \alpha \tau (1 - \gamma)) \| G_1 ({s_i}, {s_j}) - G_2 ({s_i}, {s_j}) \|_\infty.
\end{split}
\end{equation}

\end{proof}

\subsection{Lemma~\ref{le:geq}}
\begin{lemma}
\label{app:le:geq}
For any $\mathcal \tau, \tau' \in [0, 1)$ with $\tau' \geq \tau$, and for all $s_i,s_j \in \mathcal{S}$ and any $\alpha$, we have $G_{\tau'} \geq  G_\tau$.
\end{lemma}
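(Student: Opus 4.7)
The plan is to reduce the comparison $G_{\tau'} \geq G_\tau$ to two monotonicity statements about $\mathcal{F}_\tau^\pi$ and then pass to the limit in a value iteration starting at $G_\tau$. Concretely, I will establish (a) for any fixed $G$ and any $\tau' \geq \tau$, $\mathcal{F}_{\tau'}^\pi G \geq \mathcal{F}_\tau^\pi G$ pointwise, and (b) for any fixed $\tau$, $\mathcal{F}_\tau^\pi$ preserves pointwise inequalities in its argument $G$. Combined with the contraction supplied by Lemma~\ref{le:contraction}, these two properties transport monotonicity from the Picard iterates to the fixed point.

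Property (a) is a direct calculation. Writing $\delta = |r_{s_i}^\pi - r_{s_j}^\pi| + \gamma G(s_i',s_j') - G(s_i,s_j)$ for the one-step residual, the explicit gradient form of $\mathcal{F}_\tau^\pi$ gives
\begin{equation*}
\mathcal{F}_{\tau'}^\pi G - \mathcal{F}_\tau^\pi G \;=\; 2\alpha(\tau'-\tau)\,\mathbb{E}^\pi\!\bigl[[\delta]_+ - [\delta]_-\bigr] \;=\; 2\alpha(\tau'-\tau)\,\mathbb{E}^\pi\bigl[|\delta|\bigr] \;\geq\; 0.
\end{equation*}
Specialising to $G = G_\tau$ and using $\mathcal{F}_\tau^\pi G_\tau = G_\tau$ gives the base inequality $\mathcal{F}_{\tau'}^\pi G_\tau \geq G_\tau$.

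Property (b) is the main obstacle. I would argue it via the convex decomposition already appearing in the proof of Lemma~\ref{le:contraction},
\begin{equation*}
\mathcal{F}_\tau^\pi G \;=\; (1-2\alpha)\,G \;+\; 2\alpha\tau\,\mathcal{F}_+^\pi G \;+\; 2\alpha(1-\tau)\,\mathcal{F}_-^\pi G,
\end{equation*}
together with the rewritings $\mathcal{F}_+^\pi G(s_i,s_j) = \mathbb{E}^\pi\!\left[\max\bigl(G(s_i,s_j),\ |r_{s_i}^\pi - r_{s_j}^\pi| + \gamma G(s_i',s_j')\bigr)\right]$ and the analogous $\min$-expression for $\mathcal{F}_-^\pi$. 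Each of these is a pointwise maximum (or minimum) of two affine functions of $G$ whose coefficients are all non-negative, so both $\mathcal{F}_+^\pi$ and $\mathcal{F}_-^\pi$ are monotone in $G$. Under the restriction $\alpha \leq 1/2$ implicit in Lemma~\ref{le:contraction}, the weights $(1-2\alpha)$, $2\alpha\tau$ and $2\alpha(1-\tau)$ are non-negative, so $\mathcal{F}_\tau^\pi$ is a non-negative combination of the identity and two monotone operators, and is itself monotone.

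With (a) and (b) in hand, define the iterates $G^{(n)} := (\mathcal{F}_{\tau'}^\pi)^n G_\tau$. Property (a) supplies the base case $G^{(1)} \geq G^{(0)} = G_\tau$, and property (b) propagates it inductively: $G^{(n+1)} = \mathcal{F}_{\tau'}^\pi G^{(n)} \geq \mathcal{F}_{\tau'}^\pi G^{(n-1)} = G^{(n)}$. Since $\mathcal{F}_{\tau'}^\pi$ is a $\gamma_{\tau'}$-contraction, $G^{(n)} \to G_{\tau'}$ uniformly, and the pointwise inequality $G^{(n)} \geq G_\tau$ survives this limit, yielding $G_{\tau'} \geq G_\tau$.
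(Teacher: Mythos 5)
Your proposal is correct, and its core computation --- the identity $\mathcal{F}_{\tau'}^\pi G - \mathcal{F}_\tau^\pi G = 2\alpha(\tau'-\tau)\,\mathbb{E}^\pi\bigl[[\delta]_+ - [\delta]_-\bigr] \geq 0$, obtained from the convex decomposition of $\mathcal{F}_\tau^\pi$ into $(1-2\alpha)\,\mathrm{Id} + 2\alpha\tau\,\mathcal{F}_+^\pi + 2\alpha(1-\tau)\,\mathcal{F}_-^\pi$ --- is exactly the paper's argument, your property (a). Where you go beyond the paper is instructive: the paper stops at this operator-level inequality and immediately concludes ``Therefore $G_{\tau'} \geq G_\tau$,'' leaving implicit the passage from $\mathcal{F}_{\tau'}^\pi G \geq \mathcal{F}_\tau^\pi G$ for every fixed $G$ to the corresponding inequality between the two \emph{fixed points}. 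That passage is not automatic; it is precisely your property (b) --- monotonicity of $\mathcal{F}_{\tau'}^\pi$ in its argument, obtained by writing $\mathcal{F}_\pm^\pi$ as expectations of pointwise maxima and minima of affine maps of $G$ with non-negative coefficients --- combined with the monotone Picard iteration $G^{(n)} = (\mathcal{F}_{\tau'}^\pi)^n G_\tau$ and the contraction of Lemma~\ref{le:contraction}, that actually closes the argument. Your version is therefore a completed form of the paper's proof rather than a different route. The only caveat worth flagging is that monotonicity of $\mathcal{F}_\tau^\pi$ needs $\alpha \leq 1/2$ so that the weight $1-2\alpha$ is non-negative; the paper relies on this tacitly in the same decomposition inside the proof of Lemma~\ref{le:contraction}, and you are right to make it explicit.
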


\begin{proof}
We denote $G_{\tau'}$ is the fixed point of applying the operator $\mathcal{F}_{\tau'}^\pi$, and $G_\tau$ is the fixed point of applying the operator $\mathcal{F}_\tau$.
Based on Equation~\ref{eq:object_exp}, we have:
\begin{equation}
\begin{split}
    &\mathcal{F}_{\tau'}^\pi G({s_i}, {s_j}) - \mathcal{F}_{\tau}^\pi G({s_i}, {s_j})\\ &= (1 - 2 \alpha) G ({s_i}, {s_j}) + 2 \alpha \tau' \mathcal{F}_+^\pi G ({s_i}, {s_j}) + 2 \alpha (1 - \tau') \mathcal{F}_-^\pi G ({s_i}, {s_j})\\ &\quad\quad\quad\quad\quad\quad\quad\quad\quad -((1 -  2 \alpha)G({s_i}, {s_j}) + 2 \alpha \tau \mathcal{F}_+^\pi G({s_i}, {s_j}) + 2 \alpha (1 - \tau) \mathcal{F}_-^\pi G({s_i}, {s_j}))\\ &= 2 \alpha (\tau' - \tau) (\mathcal{F}_+^\pi G({s_i}, {s_j}) - \mathcal{F}_-^\pi G({s_i}, {s_j}))\\ &= 2 \alpha (\tau' - \tau) \mathbb{E}^\pi [[\delta]_+ - [\delta]_-] \geq 0.
\end{split}
\end{equation}
Therefore $G_{\tau'}>G_\tau$.
\end{proof}

\subsection{Theorem~\ref{thm: tau_to_1}}

\begin{theorem}
\label{app:thm: tau_to_1}
In deterministic MDP and fixed finite dataset, we have:
\begin{equation}
    \lim_{\tau\rightarrow1} G_\tau(s_i,s_j) = \max_{\substack{ a_i \in \mathcal{A}, a_j \in \mathcal{A} \\\text{s.t. }\pi_\beta(a_i|s_i) > 0, \pi_\beta(a_j|s_j) > 0}}G^*_{\sim}((s_i,a_i),(s_j,a_j)).
\end{equation}
where $G^*_{\sim}((s_i,a_i),(s_j,a_j))$ is a fixed-point measurement constrained to the dataset and defined on state-action space $\mathcal{S}\times\mathcal{A}$ as
\begin{equation}
\label{app:eq:optimal_G}
G^*_{\sim}((s_i,a_i),(s_j,a_j)) = |r(s_i,a_i) - r(s_j,a_j)| + \gamma \mathbb{E}_{\begin{subarray} {l}s_i'\sim T_{s_i}^{\pi_{\beta}} \\ s_j'\sim T_{s_j}^{\pi_\beta}\end{subarray}}\left[\max_{\substack{ a_i' \in \mathcal{A}, a_j' \in \mathcal{A} \\\text{s.t. }\pi_\beta(a_i'|s_i') > 0, \pi_\beta(a_j'|s_j') > 0}}G^*_{\sim}((s_i',a_i'),(s_j',a_j'))\right].
\end{equation}
\end{theorem}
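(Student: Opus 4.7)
The plan is to mirror the analogous IQL-style result relating expectile regression to the maximum over in-sample actions: as $\tau \to 1$, the $\tau$-expectile of a random variable concentrates on its essential supremum, so the outer expectation over $(a_i,a_j) \sim \pi_\beta \times \pi_\beta$ in Equation~\ref{eq:object_exp} is driven to a maximum over action pairs in the support of $\pi_\beta$. Combined with the determinism of the MDP (so that each in-support action pair induces a unique successor state pair), this max aligns exactly with the in-dataset recursion defining $G^*_\sim$.

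First, I would use Lemma~\ref{le:contraction} to guarantee that for each $\tau \in [0,1)$ the operator $\mathcal{F}_\tau^{\pi_\beta}$ has a unique fixed point $G_\tau$, and Lemma~\ref{le:geq} to conclude that $\{G_\tau\}_\tau$ is pointwise monotone non-decreasing in $\tau$. To secure a finite limit, I would show $G_\tau$ is uniformly bounded above by the candidate limit
\[
H(s_i,s_j) \;:=\; \max_{\substack{a_i, a_j \\ \pi_\beta(a_i|s_i)>0,\, \pi_\beta(a_j|s_j)>0}} G^*_{\sim}\bigl((s_i,a_i),(s_j,a_j)\bigr),
\]
by induction on the iterates of $\mathcal{F}_\tau^{\pi_\beta}$: the $\tau$-expectile of a bounded random variable never exceeds its essential supremum, and the latter is dominated by $H$ at the next state pair in the deterministic setting. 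Monotone convergence then yields a pointwise limit $G_\infty := \lim_{\tau \to 1} G_\tau \le H$.

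The crux is to show $G_\infty = H$, for which I would invoke the standard characterization of expectiles: the first-order optimality condition
\[
\tau \, \mathbb{E}\bigl[(X - m_\tau)_+\bigr] \;=\; (1-\tau) \, \mathbb{E}\bigl[(m_\tau - X)_+\bigr]
\]
forces $m_\tau(X) \to \operatorname{ess\,sup} X$ as $\tau \to 1$, whenever $X$ takes finitely many values (which is the case here since the dataset, and hence the support of $\pi_\beta$ at each observed state, is finite). Applied to $X = |r(s_i,a_i) - r(s_j,a_j)| + \gamma \, G_\infty(s_i',s_j')$ with $(a_i,a_j) \sim \pi_\beta(\cdot|s_i) \times \pi_\beta(\cdot|s_j)$ and the deterministic transitions $s_i' = T(s_i,a_i)$, $s_j' = T(s_j,a_j)$, the limiting recursion becomes exactly the max over in-support action pairs, i.e.\ $G_\infty$ satisfies the defining Bellman-style equation of $H$.

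The main obstacle is justifying the interchange of the $\tau \to 1$ limit with the fixed-point equation, since the contraction factor $\gamma_\tau = 1 - 2\alpha(1-\gamma)\min\{\tau,1-\tau\}$ degrades to $1$ as $\tau \to 1$, so a uniform-contraction argument is unavailable at the endpoint. I would sidestep this by working directly with the monotone pointwise limit $G_\infty$ and verifying the max-recursion via the expectile characterization above, and then appealing to the uniqueness of the fixed point of the max operator on state-action pairs (which is itself a contraction with modulus $\gamma$, by a standard Banach argument applied to $G^*_\sim$) to identify $G_\infty(s_i,s_j)$ with $H(s_i,s_j)$. Once this identification is in place, the stated formula follows immediately.
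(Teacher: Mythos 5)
Your proposal follows essentially the same route as the paper's proof: an upper bound $G_\tau \le \max_{a_i,a_j} G^*_{\sim}$ (the paper's corollary, proved by observing that a convex combination is dominated by the maximum), monotonicity of $G_\tau$ in $\tau$ via Lemma~\ref{le:geq}, and the expectile-to-supremum limit lemma borrowed from IQL. You are in fact more careful than the paper at the final step --- the paper condenses everything into ``combining the Corollary and the Lemma, we can obtain the above,'' whereas you explicitly justify passing to the monotone pointwise limit despite the degenerating contraction modulus and identify that limit with $H$ via uniqueness of the fixed point of the max-operator.
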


\setcounter{theorem}{15}
\begin{proof}
First, we can easily proof that $G_\sim^*(s_i,a_i,s_j,a_j)$ is a fixed point. Define the corresponding operator of $G_\sim^*$ is $F^*$, we can know that $F^*$ is a contraction. Then, we have
    \begin{corollary}
    \label{app:co:optimal_bisim}
        For any $\tau$, $s_i,s_j \in\mathcal{S}$ we have
    \begin{equation}
    G_\tau(s_i,s_j) \le \max_{\substack{ a_i \in \mathcal{A}, a_j \in \mathcal{A} \\\text{s.t. }\pi_\beta(a_i|s_i) > 0, \pi_\beta(a_j|s_j) > 0}}G^*_{\sim}((s_i,a_i),(s_j,a_j))
    \end{equation}
    \end{corollary}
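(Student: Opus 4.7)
The plan is to establish that the right-hand side of the Corollary, viewed as the function
\[
H(s_i,s_j) := \max_{\substack{a_i,a_j \in \mathcal{A}\\ \text{s.t.}\ \pi_\beta(a_i|s_i)>0,\ \pi_\beta(a_j|s_j)>0}} G^*_\sim((s_i,a_i),(s_j,a_j)),
\]
is a super-fixed-point of the expectile operator $\mathcal{F}_\tau^{\pi_\beta}$, and then to invoke monotonicity together with the $\gamma_\tau$-contraction of Lemma~\ref{le:contraction} to conclude that the unique fixed point $G_\tau$ satisfies $G_\tau \leq H$ pointwise.

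First, I would record the elementary fact that for any bounded random variable $Y$ and any $\tau \in [0,1)$, the $\tau$-expectile satisfies $\text{expectile}_\tau(Y) \leq \sup Y$: the first-order condition $\tau\,\mathbb{E}[(Y-m)_+] = (1-\tau)\,\mathbb{E}[(m-Y)_+]$ cannot hold for $m > \sup Y$, since the left side vanishes while the right side is positive, so the minimizer lies at or below $\sup Y$. Under the deterministic-MDP assumption, the inner expectation over $s_i', s_j'$ in Equation~\ref{eq:object_exp} collapses to a point evaluation at $T(s_i,a_i), T(s_j,a_j)$, and so the target inside the expectile becomes $|r(s_i,a_i)-r(s_j,a_j)| + \gamma H(T(s_i,a_i),T(s_j,a_j))$. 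By the defining recursion of $G^*_\sim$ in Equation~\ref{app:eq:optimal_G}, this quantity equals $G^*_\sim((s_i,a_i),(s_j,a_j))$ for each $(a_i,a_j)$ in the support of $\pi_\beta$. Applying the expectile-$\leq$-sup bound over the product action distribution $\pi_\beta(\cdot|s_i)\otimes\pi_\beta(\cdot|s_j)$ then yields $(\mathcal{F}_\tau^{\pi_\beta} H)(s_i,s_j) \leq H(s_i,s_j)$.

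Next, I would exploit the monotonicity of $\mathcal{F}_\tau^{\pi_\beta}$ to propagate this super-fixed-point inequality to the fixed point. A short argument using the first-order condition shows that if $G_1 \leq G_2$ pointwise, so that the two per-action targets satisfy $Y_1 \leq Y_2$, then the derivative of the asymmetric loss for $Y_2$ evaluated at $m_1 := \text{expectile}_\tau(Y_1)$ is nonpositive, which forces $\text{expectile}_\tau(Y_1) \leq \text{expectile}_\tau(Y_2)$. Consequently the iterates $(\mathcal{F}_\tau^{\pi_\beta})^k H$ form a non-increasing sequence, and by the $\gamma_\tau$-contraction of Lemma~\ref{le:contraction} they converge to the unique fixed point $G_\tau$; therefore $G_\tau \leq H$ pointwise, which is exactly the claim.

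The main obstacle I anticipate is cleanly formalizing the monotonicity of the expectile operator, since it is defined implicitly as an $\arg\min$ in Equation~\ref{eq:object_exp} and admits no closed-form expression. A secondary point to be careful about is the joint role of the two hypotheses in Theorem~\ref{thm: tau_to_1}: determinism is what allows the next-state expectation to collapse so that the per-action target matches $G^*_\sim$ exactly, and finiteness of the dataset ensures $\pi_\beta(\cdot|s)$ has finite support so the maximum defining $H$ is actually attained.
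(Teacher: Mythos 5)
Your proof is correct and follows essentially the same route as the paper, whose entire justification is the one-line observation that a convex combination (here, the $\tau$-expectile over the action distribution) is dominated by the maximum over the support. You simply make that observation rigorous in the standard way — showing the max-recursion solution $H$ is a super-fixed-point of $\mathcal{F}_\tau^{\pi_\beta}$, then using monotonicity and the $\gamma_\tau$-contraction of Lemma~\ref{le:contraction} to push the inequality down to the fixed point $G_\tau$ — which is a welcome expansion of the paper's terse argument rather than a different one.
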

    \begin{proof}
    The proof follows from the observation that convex combination is smaller than maximum.
    \end{proof}
    Besides, we also have
\begin{lemma}
\label{app:le:sup}
    Let $X$ be a real-valued random variable with a bounded support
and supremum of the support is $x^*$. Then,
$$
\lim_{\tau\rightarrow1} m_\tau = x^*$$
\end{lemma}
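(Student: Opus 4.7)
The plan is to combine the first-order optimality condition for the expectile with the boundedness of the support. Differentiating the asymmetric quadratic loss and setting the derivative to zero, the expectile $m_\tau$ is characterized by
\begin{equation*}
\tau \, \mathbb{E}[(X-m_\tau)_{+}] \;=\; (1-\tau)\, \mathbb{E}[(m_\tau - X)_{+}],
\end{equation*}
which can be rewritten (whenever the right-hand side denominator is nonzero) as
\begin{equation*}
\frac{\tau}{1-\tau} \;=\; \frac{\mathbb{E}[(m_\tau - X)_{+}]}{\mathbb{E}[(X-m_\tau)_{+}]}.
\end{equation*}
As $\tau \to 1$ the left side diverges, so the strategy is to show that the only way the right side can diverge is for $\mathbb{E}[(X-m_\tau)_{+}]$ to tend to $0$, which forces $m_\tau$ to approach the supremum $x^*$ of the support.

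First I would establish that $m_\tau$ is nondecreasing in $\tau$ (this is essentially the content of Lemma~\ref{app:le:geq} applied to a single random variable, or can be derived directly by a first-variation argument) and that $m_\tau \leq x^*$ for every $\tau \in (0,1)$. The latter bound follows because if $m > x^*$ then $(X - m)_{+} = 0$ almost surely while $(m - X)_{+} > 0$ with positive probability, so the optimality condition fails for any $\tau < 1$. Combined with monotonicity, these two facts imply that $\bar m := \lim_{\tau \to 1} m_\tau$ exists and satisfies $\bar m \leq x^*$.

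Next I would argue by contradiction: suppose $\bar m < x^*$. By definition of $x^*$ as the supremum of the support, for $\epsilon := (x^* - \bar m)/2 > 0$ we have $p := \Pr(X > x^* - \epsilon) > 0$. Using the bounded support, $\mathbb{E}[(m_\tau - X)_{+}]$ is uniformly bounded by a constant $M$ (for instance $x^* - \inf \operatorname{supp}(X)$). Meanwhile, since $m_\tau \leq \bar m \leq x^* - 2\epsilon$, we have $(X - m_\tau)_{+} \geq \epsilon$ on the event $\{X > x^* - \epsilon\}$, so $\mathbb{E}[(X-m_\tau)_{+}] \geq \epsilon p$ uniformly in $\tau$. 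Plugging into the optimality equation would then give $\tau/(1-\tau) \leq M/(\epsilon p)$, bounded in $\tau$, contradicting $\tau/(1-\tau) \to \infty$. Hence $\bar m = x^*$.

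The only subtlety I anticipate is the treatment of degenerate cases, namely when $X$ is constant (in which case $m_\tau = x^*$ for all $\tau$ and the statement is trivial) and when $x^*$ is itself an atom of $X$ so that $\Pr(X = x^*) > 0$; in the latter case the argument above still applies with $\epsilon$ taken arbitrarily small, but one must be careful that $\mathbb{E}[(X-m_\tau)_{+}] \geq (x^* - m_\tau)\Pr(X = x^*)$ only forces $m_\tau \to x^*$ through the ratio argument rather than giving $m_\tau = x^*$ in finite $\tau$. Beyond this bookkeeping, the argument is a direct convex-analysis calculation using the first-order condition.
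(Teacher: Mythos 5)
Your proof is correct, but it is genuinely more detailed than what the paper provides. The paper's own proof is a one-line deferral: it cites Lemma~1 of Kostrikov et al.\ (the IQL paper), asserts that the expectiles share the supremum $x^*$ and are nondecreasing in $\tau$, and concludes by monotone boundedness. As written, that argument only establishes that $\bar m := \lim_{\tau\to 1} m_\tau$ exists and satisfies $\bar m \le x^*$; the substantive claim that $\bar m$ cannot be strictly below $x^*$ is exactly the step the paper leaves to the cited reference. Your proposal fills this gap explicitly via the first-order condition $\tau\,\mathbb{E}[(X-m_\tau)_+] = (1-\tau)\,\mathbb{E}[(m_\tau-X)_+]$ and the quantitative ratio argument: if $\bar m < x^*$ then $\mathbb{E}[(X-m_\tau)_+]$ is bounded below by $\epsilon p > 0$ uniformly in $\tau$ while the right-hand side stays bounded, so $\tau/(1-\tau)$ cannot diverge. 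This is a clean contradiction and your handling of the degenerate and atomic cases is sound (in the atom case the lower bound $\mathbb{E}[(X-m_\tau)_+] \ge \epsilon\,\Pr(X > x^*-\epsilon)$ is all you need; no special treatment is required). What your route buys is a self-contained convex-analysis proof that does not rely on an external lemma; what the paper's route buys is brevity at the cost of burying the key step in a citation. Both arrive at the same conclusion and both rest on the same two pillars (monotonicity in $\tau$ and the upper bound $m_\tau \le x^*$), so the difference is one of completeness rather than of mathematical substance.
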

\begin{proof}
    Same as the Lemma 1 in~\cite{DBLP:conf/iclr/KostrikovNL22}. One can show that expectiles of a random variable have the same supremum $x^*$. Moreover, for all $\tau_1$ and $\tau_2$ such that $\tau_1<\tau_2$, we get $m_{\tau_1}\leq m_{\tau_2}$. Therefore, the limit follows from the properties of bounded monotonically non-decreasing functions.
\end{proof}
    Combining Corollary~\ref{app:co:optimal_bisim} and  Lemma~\ref{app:le:sup}, we can obtain the above.

\end{proof}

\setcounter{theorem}{8}
\subsection{Theorem~\ref{thm:value_error}}

\setcounter{theorem}{7}
\begin{theorem}
    (Value bound based on on-policy bisimulation measurements in terms of encoder error). Given an MDP $\widetilde{\mathcal{M}}$ constructed by aggregating states in an $\omega$-neighborhood, and an encoder $\phi$ that maps from states in the original MDP $\mathcal{M}$ to these clusters, the value functions for the two MDPs are bounded as
    \begin{equation}
        \left|V^\pi\left(s_i\right)-\widetilde{V}^\pi\left(\phi\left(s_i\right)\right)\right| \leq \frac{2 \omega+\hat{\Delta}}{c_r(1-\gamma)}.
    \end{equation}
    where $\hat{\Delta}:=\|\hat{G}_{\sim}^{\pi}-\hat{G}_{\phi}^{\pi}\|_\infty$ is the approximation error.
\end{theorem}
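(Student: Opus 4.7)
The strategy is to chain two ingredients: (i) a ``bisimulation-bounds-values'' inequality relating $\hat G_\sim^\pi$ to the absolute value-function gap, and (ii) a Bellman-contraction argument comparing $V^\pi$ on $\mathcal{M}$ with $\widetilde V^\pi$ on the aggregated MDP $\widetilde{\mathcal{M}}$, where within-cluster errors (of size at most $\omega+\hat\Delta$ by the aggregation hypothesis and triangle inequality) propagate into an additive geometric series.

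For step (i), I would define $h(s_i,s_j) := c_r\,|V^\pi(s_i)-V^\pi(s_j)|$ and, using the Bellman equation $V^\pi(s)=r^\pi_s+\gamma\,\mathbb{E}_{s'}V^\pi(s')$ together with the triangle inequality, show that $h$ is a sub-fixed-point of the generalized operator $\mathcal{F}^\pi$ from Equation~\eqref{eq:rew_scale_bisim} with $c_k=\gamma$:
\begin{equation*}
h(s_i,s_j) \;\le\; c_r\,|r^\pi_{s_i}-r^\pi_{s_j}| + \gamma\,\mathbb{E}_{s_i',s_j'}\bigl[h(s_i',s_j')\bigr].
\end{equation*}
Monotonicity of $\mathcal{F}^\pi$ and uniqueness of its fixed point then yield $h\le \hat G_\sim^\pi$ pointwise, equivalently $|V^\pi(s_i)-V^\pi(s_j)|\le \hat G_\sim^\pi(s_i,s_j)/c_r$. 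Combining this with $\hat G_\sim^\pi(s_i,s_j) \le \hat G_\phi^\pi(s_i,s_j) + \hat\Delta \le \omega+\hat\Delta$ for any $s_i,s_j$ in the same cluster (using the sup-norm definition of $\hat\Delta$ and the $\omega$-neighborhood aggregation hypothesis) gives a per-cluster value-gap bound of at most $(\omega+\hat\Delta)/c_r$.

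For step (ii), I would write the Bellman equations for $V^\pi$ in $\mathcal{M}$ and $\widetilde V^\pi$ in $\widetilde{\mathcal{M}}$, couple $T_s^\pi$ to the aggregated $\widetilde T^\pi_{\phi(s)}$ via a representative element of the cluster $\phi(s)$, and apply the bisimulation bound a second time to the expected next-state value difference. Each of the ``reward part'' and the ``transition part'' of the Bellman decomposition contributes a within-cluster term controlled by $(\omega+\hat\Delta)/c_r$, which explains the factor of $2$ in the numerator, while the remaining residual introduces $\gamma\,\|V^\pi-\widetilde V^\pi\circ\phi\|_\infty$ on the right-hand side. Taking the supremum over $s$ and rearranging the resulting fixed-point inequality produces the $1/(1-\gamma)$ factor, yielding the claimed $\tfrac{2\omega+\hat\Delta}{c_r(1-\gamma)}$ bound.

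The main obstacle is the bookkeeping for $\hat\Delta$: a naive application of the per-cluster bound $(\omega+\hat\Delta)/c_r$ in both the reward and transition contributions would produce $2\hat\Delta$ rather than $\hat\Delta$ in the numerator. Obtaining the stated tight bound requires isolating $\hat\Delta$ by exploiting that it is a single global sup-norm (not a per-step quantity), or equivalently by bounding the reward mismatch through the direct inequality $c_r|r^\pi_{s_i}-r^\pi_{s_j}|\le \hat G_\sim^\pi(s_i,s_j)$ so that $\hat\Delta$ enters only once through the expectation of the successor measurement. Apart from this cancellation step, the remainder of the argument rests on standard monotonicity, contraction, and triangle-inequality manipulations familiar from the bisimulation literature.
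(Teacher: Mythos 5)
Your overall architecture matches the paper's: the paper also (i) uses the fact that the fixed-point measurement dominates $c_r$ times the value gap, and (ii) runs a Bellman-decomposition/contraction argument (it cites Lemma~8 of Kemertas and Aumentado-Armstrong for this step) to get a per-step within-cluster term plus $\gamma\,\lVert V^\pi-\widetilde V^\pi\rVert_\infty$, then divides by $1-\gamma$. However, your accounting of the factor of $2$ is wrong, and the ``obstacle'' you flag at the end is a symptom of that error rather than a real difficulty. In the paper, the reward mismatch and the transition-induced value mismatch are \emph{not} bounded separately by two within-cluster terms: together they reassemble exactly into one application of the fixed-point equation, i.e.
\begin{equation*}
c_r\,\lvert r^\pi_s-r^\pi_z\rvert+\gamma\,\mathbb{E}_{s',z'}\bigl[G^\pi_\sim(s',z')\bigr]=G^\pi_\sim(s,z),
\end{equation*}
so the one-step error is controlled by a \emph{single} $G^\pi_\sim(s,z)/c_r$ for $z$ in the cluster of $s$. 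Consequently $\hat\Delta$ enters exactly once, through the single triangle inequality $G^\pi_\sim\le\hat G^\pi_\phi+\hat\Delta$; there is no $2\hat\Delta$ to cancel.

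The factor of $2$ instead comes from the aggregation hypothesis itself: the paper assumes that $\phi(s_i)=\phi(s_j)$ implies $\hat G^\pi_\phi(s_i,s_j)\le 2\omega$, i.e.\ aggregating states in an $\omega$-neighborhood gives clusters of $\hat G_\phi$-\emph{diameter} $2\omega$, not $\omega$. You instead assumed a within-cluster bound of $\omega$ and then tried to manufacture the $2$ from the reward-plus-transition split of the Bellman decomposition; with the correct single-$G_\sim$ bound that route would give $(\omega+\hat\Delta)/(c_r(1-\gamma))$ and fail to match the stated constant, while your double-counted route inflates $\hat\Delta$. To repair the proof: keep your step~(i), replace the separate reward/transition bounds in step~(ii) by the combined bound $G^\pi_\sim(s,z)/c_r$ above, and use the cluster-diameter hypothesis $\hat G^\pi_\phi(s,z)\le 2\omega$ together with one triangle inequality to obtain the per-step bound $(2\omega+\hat\Delta)/c_r$; the geometric series then gives the theorem.
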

\begin{proof}
    
    Let the reward function be bounded as $ R\in[0,1]$, $\phi:\mathcal{S}\rightarrow \widetilde{\mathcal{S}}$, and $ \phi(s_i) = \phi(s_j) \Rightarrow \hat{G}_{\phi}^{\pi}(s_i, s_j) = 
    |\phi(s_i)-\phi(s_j)| \leq 2\,\omega$, we can conduct an  aggregat MDP $\widetilde{M}=(\widetilde{\mathcal{S}},\mathcal{A},\widetilde{T}, \widetilde{R})$. Let $\xi$ be a measure on $\mathcal{S}$. Following Lemma 8 in ~\cite{DBLP:conf/nips/KemertasA21}, we have that:
    \begin{equation}
        \begin{aligned}
        \left|V^\pi\left(s_i\right)-\widetilde{V}^\pi\left(\phi\left(s_i\right)\right)\right| &\leq \frac{c_r^{-1}}{\xi(\phi(s))}\int\limits_{z \in \phi(s)} c_R \left | r^\pi(s) - r^\pi(z) \right| \\&\qquad+ (1-\gamma) \left|~ \int\limits_{s^\prime \in \mathcal{S}}  (T^\pi(s^\prime|s) - T^\pi(s^\prime|z))\frac{c_r\gamma}{1-\gamma}V^\pi(s^\prime) ds^\prime \right| d\xi(z) + \gamma \|V^\pi - \widetilde{V}^\pi\|_\infty\\
        &\leq \frac{c_r^{-1}}{\xi(\phi(s))}\int\limits_{z \in \phi(s)}G^\pi_{\sim}(s, z)d\xi(z) + \gamma \|V^\pi - \widetilde{V}^\pi\|_\infty
        \end{aligned}
    \end{equation}
    Thus, taking the supremum on the LHS, we have:
    \begin{equation}
    \begin{aligned}
        (1-\gamma)\left|V^\pi\left(s_i\right)-\widetilde{V}^\pi\left(\phi\left(s_i\right)\right)\right|
        &\leq \frac{c_r^{-1}}{\xi(\phi(s))}\int\limits_{z \in \phi(s)}G^\pi_{\sim}(s, z)d\xi(z) \\
        &\leq \frac{c_r^{-1}}{\xi(\phi(s))}\int\limits_{z \in \phi(s)}\hat{G}^\pi_{\phi}(s, z)+\|G_{\sim}^{\pi}-\hat{G}_{\phi}^{\pi}\|_\infty  d\xi(z)\\
        &=\frac{c_r^{-1}}{\xi(\phi(s))}\int\limits_{z \in \phi(s)}(2\omega+\hat{\Delta})d\xi(z)\\
        &=c_r^{-1} (2\omega+\hat{\Delta}).
    \end{aligned}
    \end{equation}
    Therefore,
    \begin{equation}
        \left|V^\pi\left(s_i\right)-\widetilde{V}^\pi\left(\phi\left(s_i\right)\right)\right| \leq \frac{2 \omega+\hat{\Delta}}{c_r(1-\gamma)},
    \end{equation}
\end{proof}

\section{Understanding of Theorem~\ref{thm: tau_to_1}}

\setcounter{theorem}{6}
\begin{theorem}
In deterministic MDP and fixed finite dataset, we have:
\begin{equation}
\begin{aligned}
    \lim_{\tau\rightarrow1} G_\tau(s_i,s_j) = \max_{\substack{ a_i \in \mathcal{A}, a_j \in \mathcal{A} \\\text{s.t. }\pi_\beta(a_i|s_i) > 0, \pi_\beta(a_j|s_j) > 0}}G^*_{\sim}((s_i,a_i),(s_j,a_j)).
\end{aligned}
\end{equation}
where $G^*_{\sim}((s_i,a_i),(s_j,a_j))$ is a fixed-point measurement constrained to the dataset and defined on the state-action space $\mathcal{S}\times\mathcal{A}$ as
\begin{equation}
\begin{aligned}
\label{app:eq:g_star}
\resizebox{\linewidth}{!}{$
G^*_{\sim}((s_i,a_i),(s_j,a_j)) = |r(s_i,a_i) - r(s_j,a_j)| + \gamma 
\mathbb{E}_{\begin{subarray} {l}s_i'\sim T_{s_i}^{\pi_{\beta}} \\ s_j'\sim T_{s_j}^{\pi_\beta}\end{subarray}}\left[\underset{\substack{ a_i' \in \mathcal{A}, a_j' \in \mathcal{A} \\\text{s.t. }\pi_\beta(a_i'|s_i') > 0, \pi_\beta(a_j'|s_j') > 0}}{\max}G^*_{\sim}((s_i',a_i'),(s_j',a_j'))\right].$
}
\end{aligned}
\end{equation}
\end{theorem}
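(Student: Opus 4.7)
The plan is to identify the candidate limit, namely $\bar G(s_i, s_j) := \max_{a_i, a_j \in \mathrm{supp}(\pi_\beta)} G^*_\sim((s_i,a_i),(s_j,a_j))$, as the unique fixed point of an ``optimistic'' max-based operator on the dataset, and then to show that the monotone family $\{G_\tau\}_{\tau \in [0,1)}$ converges pointwise to this object as $\tau \to 1$. The proof splits naturally into two directions: a uniform upper bound $G_\tau \leq \bar G$ that holds for every $\tau$, and a matching lower bound in the limit, which is where the expectile-to-supremum property must do its work.

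First, I would establish monotonicity and an upper bound. Monotonicity is free: Lemma~\ref{le:geq} gives $G_\tau \leq G_{\tau'}$ whenever $\tau \leq \tau'$. For the upper bound, I would observe that any $\tau$-expectile of a random variable is bounded above by its essential supremum, so applying this fact to the target distribution of $\mathcal{F}_\tau^{\pi_\beta}$ one obtains, at the fixed point,
\begin{equation*}
G_\tau(s_i, s_j) \leq \max_{a_i, a_j \in \mathrm{supp}(\pi_\beta)} \left\{ |r(s_i,a_i) - r(s_j,a_j)| + \gamma\, G_\tau(T(s_i,a_i), T(s_j,a_j)) \right\},
\end{equation*}
where I have used determinism to collapse the inner expectation over next states. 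Since the max-operator on the right-hand side is a $\gamma$-contraction in sup-norm with unique fixed point $\bar G$, a standard comparison argument then yields $G_\tau \leq \bar G$ uniformly. Combined with monotonicity, this guarantees that the pointwise limit $G_\infty := \lim_{\tau \to 1} G_\tau$ exists and satisfies $G_\infty \leq \bar G$.

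Second, I would pin down $G_\infty$ by invoking a standard expectile property: for a random variable $X$ with bounded support, $\lim_{\tau \to 1} m_\tau(X) = \mathrm{ess\,sup}\, X$ (monotonicity of expectiles in $\tau$, together with the fact that all expectiles share the supremum of the support, delivers this). Applied to the target random variable at the fixed point, with $(a_i, a_j) \sim \pi_\beta \otimes \pi_\beta$ and deterministic next states, the fixed-point equation of $\mathcal{F}_\tau^{\pi_\beta}$ passes to the limit as
\begin{equation*}
G_\infty(s_i, s_j) = \max_{a_i, a_j \in \mathrm{supp}(\pi_\beta)} \left\{ |r(s_i,a_i) - r(s_j,a_j)| + \gamma\, G_\infty(T(s_i,a_i), T(s_j,a_j)) \right\}.
\end{equation*}
This is precisely the Bellman-like equation characterizing $\bar G$, so uniqueness of the fixed point of the max-operator forces $G_\infty = \bar G$, which is the claim.

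The main obstacle I anticipate is justifying that the expectile-to-supremum limit may be taken \emph{inside} the recursion, given that the target itself depends on $G_\tau$, not on a fixed reference function. The clean way around this is to first prove uniform convergence $G_\tau \to G_\infty$ on the finite state-action set induced by the dataset (finitely many relevant pairs, monotone bounded convergence suffices), so that the targets $|r(s_i,a_i)-r(s_j,a_j)| + \gamma G_\tau(T(s_i,a_i), T(s_j,a_j))$ converge uniformly to their $G_\infty$-analogues; the expectile-supremum lemma then applies to the limiting target without any subtle exchange of limits. Determinism is essential here, since it removes the second layer of expectation over next states and lets the supremum over the behavior-supported action set be attained exactly.
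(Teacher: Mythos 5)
Your proposal is correct and follows essentially the same route as the paper's proof: an upper bound $G_\tau \leq \bar G$ obtained from the fact that an expectile never exceeds the supremum of the target's support (the paper's Corollary that ``a convex combination is smaller than the maximum''), combined with the expectile-to-supremum limit as $\tau \to 1$ (the paper cites Lemma~1 of Kostrikov et al.) and monotonicity in $\tau$. The one place you go beyond the paper is in explicitly justifying the exchange of limits for the self-referential target via uniform convergence on the finite dataset and uniqueness of the fixed point of the max-operator — a step the paper compresses into ``combining the Corollary and the Lemma'' — so your write-up is a strictly more careful rendering of the same argument.
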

Given the MDP specified by the tuple $(\mathcal{S}, \mathcal{A}, T, R)$, we construct a lifted MDP $(\widetilde{\mathcal{S}}, \widetilde{\mathcal{A}}, \widetilde{T}, \widetilde{R})$, by taking the
	state space to be $\widetilde{\mathcal{S}} = \mathcal{S}^2$, the action space
	to be $\widetilde{\mathcal{A}} = \mathcal{A}^2$, the transition dynamics to be
    given by $\widetilde{T}_{\tilde{s}}^{\tilde{a}}(\tilde{s}')=\widetilde{T}_{(s_i, s_j)}^{(a_i, a_j)}((s_i',s_j')) = T_{s_i}^{a_i}(s_i')T_{s_j}^{a_j}(s_j')$ for
    all $(s_i,s_j), (s_i',s_j') \in \mathcal{S}^2$, $a_i,a_j \in \mathcal{A}$, and the action-independent rewards to be $\widetilde{R}_{\tilde{s}}=\widetilde{R}_{(s_i,s_j)} = |r^\pi_{s_i} - r^\pi_{s_j}|$ for all $s_i, s_j \in \mathcal{S}$. The Bellman evaluation operator $\widetilde{\mathcal{F}}^{\tilde{\pi}}$ for this lifted MDP at discount rate $\gamma$ under the policy $\tilde{\pi}(\tilde{a}|\tilde{s})=\tilde{\pi}(a_i,a_j|s_i,s_j) = \pi(a_i|s_i) \pi(a_j|s_j)$ is given by (for all $G^{\pi} \in \mathbb{R}^{\mathcal{S}\times\mathcal{S}}$ and $(s_i, s_j) \in \mathcal{S} \times\mathcal{S}$):
    \begin{equation}
        \begin{aligned}
            \label{app:eq:q_star}
            (\widetilde{\mathcal{F}}^{\tilde{\pi}}Q^*)(\tilde{s},\tilde{a}) & = \widetilde{R}_{\tilde{s},\tilde{a}}\! +\! \gamma \sum_{\tilde{s} \in \widetilde{\mathcal{S}}}  \widetilde{T}_{\tilde{s}}^{\tilde{a}}(\tilde{s}') \max_{\tilde{a}\in\widetilde{A}} Q^{*}(\tilde{s}^\prime, \tilde{a}').
        \end{aligned}
    \end{equation}
Though similar, Equation~\ref{app:eq:g_star} has more constraints as it requires the possibility of $\pi_\beta(a_i'|s_i')$ and $\pi_\beta(a_j'|s_j')$ are larger than zero in the dataset. As such, we may also change the Equation~\ref{app:eq:q_star} to:
    \begin{equation}
        \begin{aligned}
            \label{app:eq:q_star_in_sample}
            (\widetilde{\mathcal{F}}^{\tilde{\pi}}Q^*)(\tilde{s},\tilde{a}) & = \widetilde{R}_{\tilde{s},\tilde{a}}\! +\! \gamma \sum_{\tilde{s} \in \widetilde{\mathcal{S}}}  \widetilde{T}_{\tilde{s}}^{\tilde{a}}(\tilde{s}') \max_{\substack{ \tilde{a}' \in \mathcal{A} \\\text{s.t. }\tilde{\pi}_\beta(\tilde{a}'|\tilde{s}') > 0}} Q^{*}(\tilde{s}^\prime, \tilde{a}').
        \end{aligned}
    \end{equation}
This is, indeed, equivalent to the \textit{in-sample}-style Q function in~\cite{DBLP:conf/iclr/KostrikovNL22}. Intuitively, $G_\sim^*((s_i,a_i),(s_j,a_j))$ can be interpreted as the optimal state-action value function $Q^*(\tilde{s},\tilde{a})$ in a lifted MDP $\widetilde{M}$. Then $G_\sim^\pi((s_i,a_i),(s_j,a_j))$ is the state-action value function $Q^\pi(\tilde{s},\tilde{a})$ that associated  with policy $\pi$, and $G_\sim(s_i,s_j)$ as a state value function $V(\tilde{s})$. And therefore, we can connect our expectile-based bisimulation operator to the lifted MDP, where we can use the conventional analytics tools in RL to analyze bisimulation operators.

\section{Additional Experiments}

\subsection{Ablation Study - Value of Expectile}

\begin{figure*}[th]
\centering
\includegraphics[width=.9\textwidth]{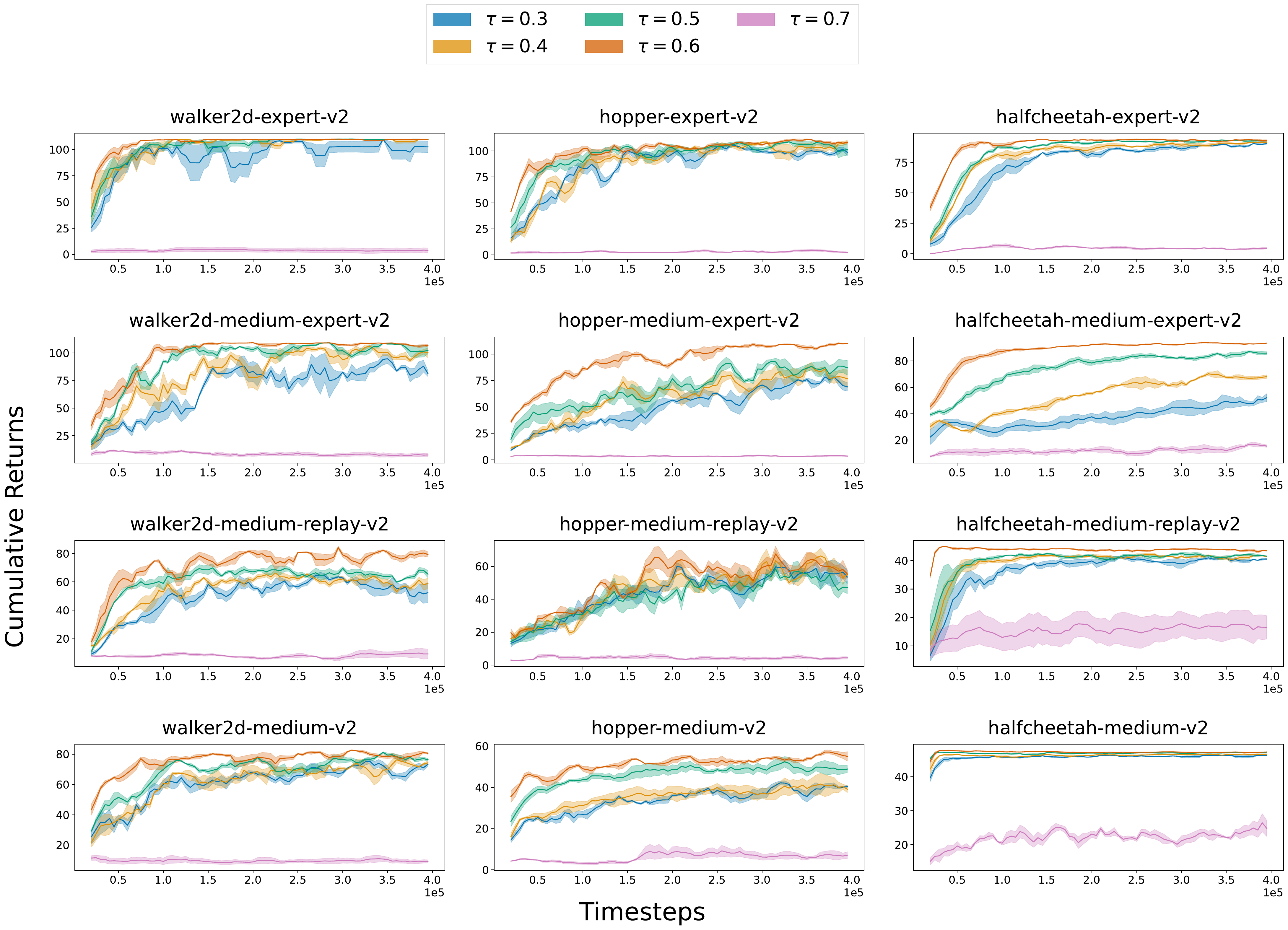}
\caption{Performance comparison on 12 D4RL tasks over 10 seeds with one standard error shaded in the default setting.}
\label{app:fig:standard_curve_tau}
\end{figure*}

Here we present the ablation study of setting different expectile $\tau\in\{0.3,0.4,\cdots,0.7\}$ in Figure~\ref{app:fig:standard_curve_tau} to investigate the effect of the critical hyper-parameter in EBS. The experimental results demonstrate that the final performance  gradually improves with a larger $\tau$. Notably, the most superior performance is achieved when $\tau$ equals 0.6. However, when $\tau$ further increases to 0.7, the agent's performance suffers a sharp decline. We hypothesize that this could be due to the value function possibly exploding when $\tau$ is set to larger values, subsequently leading to poorer performance outcomes. This is as expected since the over-large $\tau$ leads to the overestimation error caused by neural networks. The experimental results demonstrate that we can balance a trade-off between minimizing the expected bisimulation residual and evaluating ``optimal'' measurement solely on the dataset by choosing a suitable $\tau$.

\subsection{Ablation Study - Effectiveness of Reward Scaling}

In the experiment, we set $\gamma$ as 0.99 and $c_r$ will be $1-\gamma=0.01$ accordingly in \textbf{\textit{RS}}. In this ablation experiment, we considered different combinations of min-max normalization/standardization and various value of $c_r$ (including 1, 0.1, 0.01, and 0.001). The results in Figure~\ref{fig:reward_scaling} are consistent with our analysis in Section 5.2. The last two show better gains. As \textbf{\textit{RS}} has tighter bounds, it excels in most datasets, validating our theory.

\begin{figure}[t]
    \centering
     \includegraphics[width=0.55\textwidth]{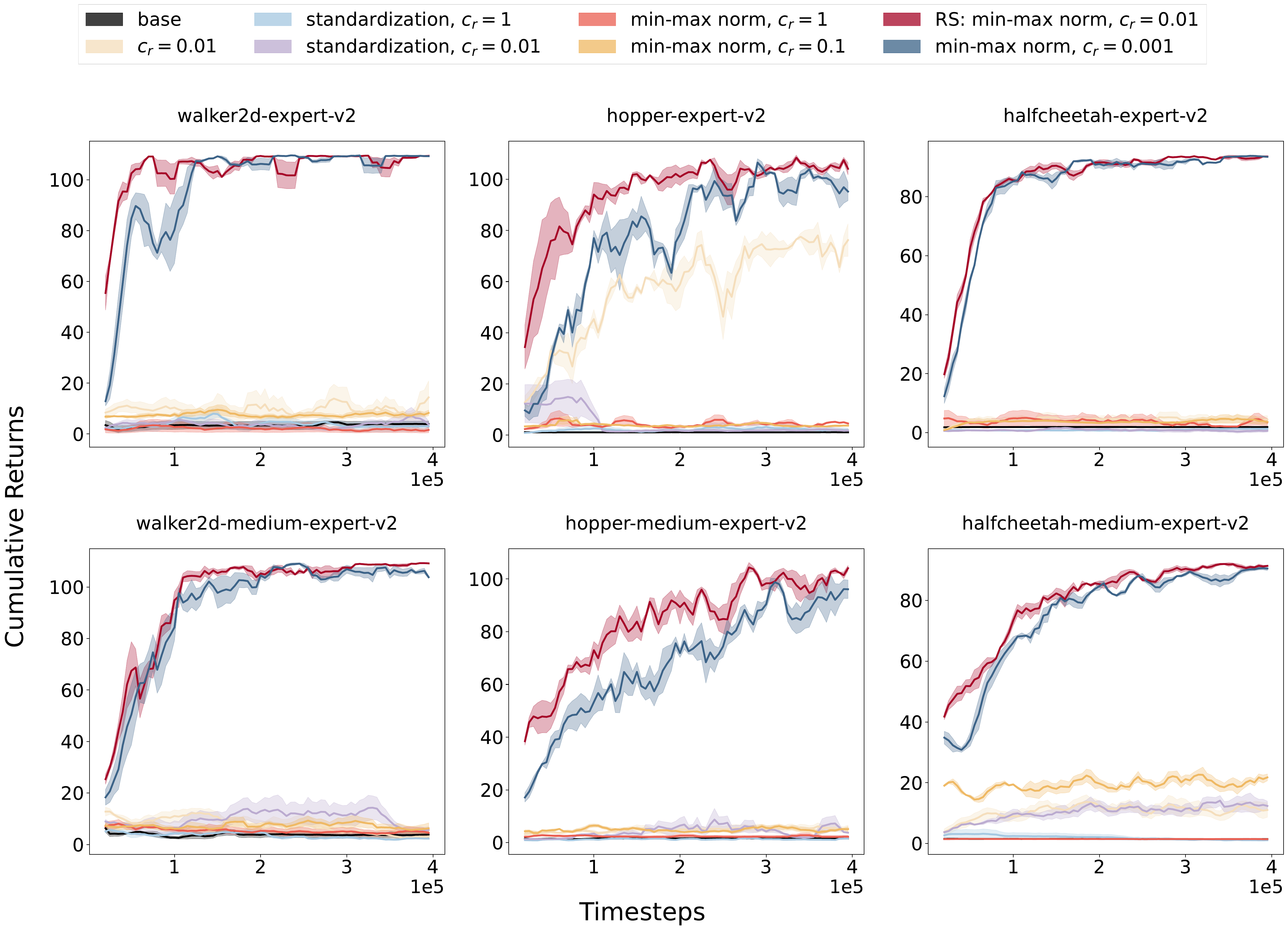}
    
    \caption{Ablation studies on 6 D4RL tasks over 3 seeds with one standard error shaded. }
    \label{fig:reward_scaling}
\end{figure}

\subsection{Case Study on MICo}
\begin{figure}[t]
\centering
    \includegraphics[width=0.5\textwidth]{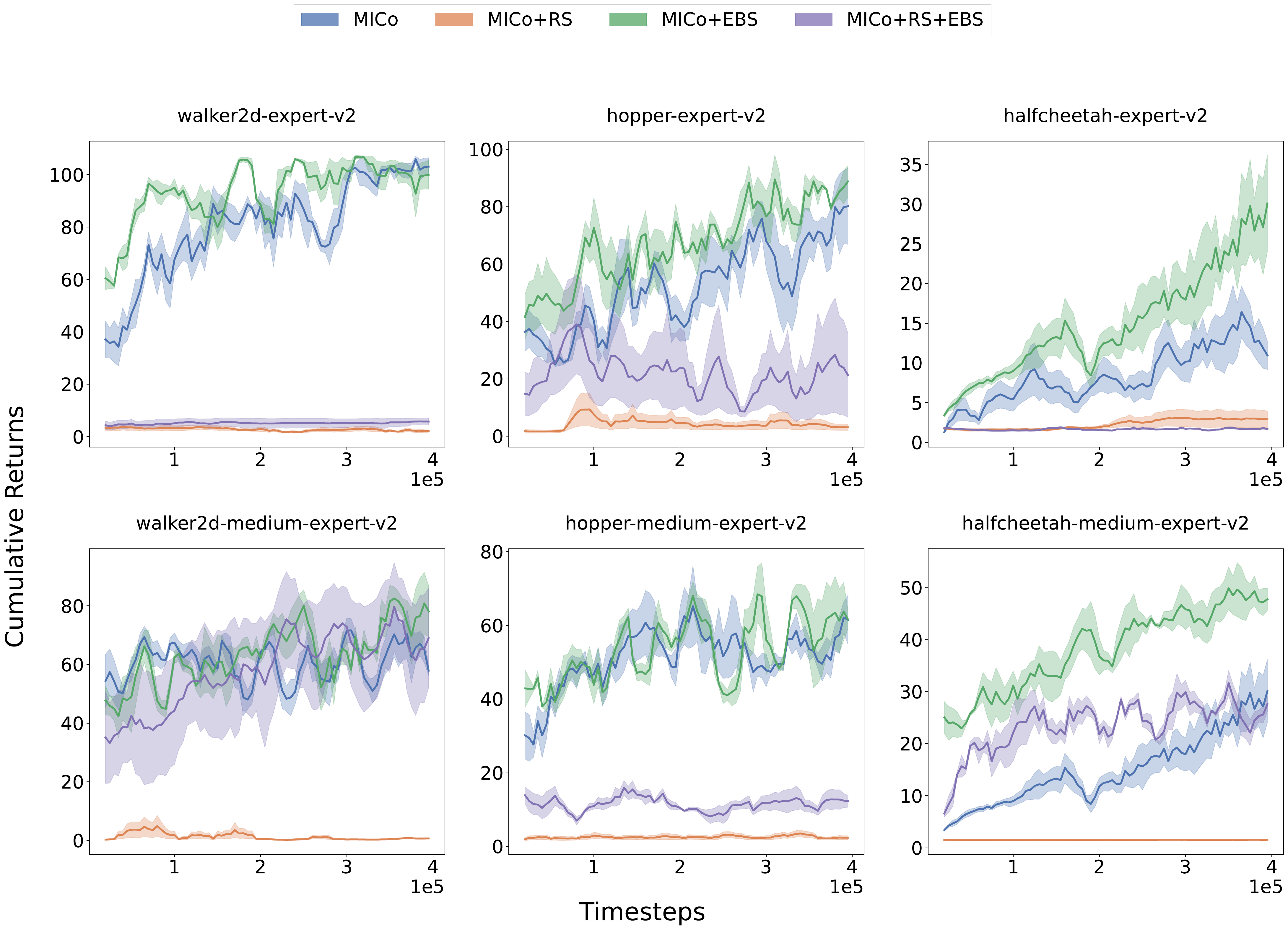}
    \caption{Ablation studies on 6 D4RL tasks over 3 seeds on MICo. }
    \label{fig:mico_ablation}
\end{figure}

As we illustrate in Section~\ref{sec:reward_scale} and Section~\ref{sec:experiments}, Results in Figure~\ref{fig:mico_ablation} show that an unsuitable reward scaling dramatically decreases the performance while applying EBS will increase the performance in many datasets. 

\section{Additional Related Works}
Here we present a brief introduction of all the baselines we used in the experiments:

\paragraph{TD3BC~\cite{DBLP:conf/nips/FujimotoG21}} add a behavior cloning term to regularize the policy of the TD3~\cite{DBLP:conf/icml/FujimotoHM18} algorithm, achieves a state-of-the-art performance in Offline settings.

\paragraph{DrQ+BC~\cite{DBLP:journals/corr/abs-2206-04779}} combining data augmentation techniques with the TD3+BC method, which applies TD3 in the offline setting with a regularizing behavioral-cloning term to the policy loss. The policy objective is: $\pi=\underset{\pi}{\operatorname{argmax}} \mathbb{E}_{(s, a) \sim \mathcal{D}}\left[\lambda Q(s,\pi(s))-(\pi(s)-a)^2\right]$

\paragraph{DRIML~\citep{DBLP:conf/nips/MazoureCDBH20} and HOMER~\citep{DBLP:conf/icml/MisraHK020}} (Time Contrastive methods) learn representations which can discriminate between adjacent observations in a rollout and pairs of random observations.

\paragraph{CURL~\citep{DBLP:conf/icml/LaskinSA20}} (Augmentation Contrastive method) learns a representation that is invariant to a class of data augmentations while being different across random example pairs. 

\paragraph{Inverse Model~\citep{DBLP:conf/icml/PathakAED17}} (One-Step Inverse Models) predict the action taken conditioned on the previous and resulting observations. 

\section{Additional Discussion}

\subsection{The severity of the proposed problem}

\paragraph{How do bisimulation-based objectives perform in other (online or goal-conditioned) settings?}

Various methods, such as DBC~\cite{DBLP:conf/iclr/0001MCGL21}, MICo~\cite{DBLP:conf/nips/CastroKPR21}, SimSR~\cite{DBLP:conf/aaai/Zang0W22}, and PSE~\cite{DBLP:conf/iclr/AgarwalMCB21}, have consistently demonstrated positive results in online settings, regardless of the presence of distractors. This evidence supports the efficacy of bisimulation techniques in online settings. Additionally, GCB~\cite{DBLP:conf/icml/Hansen-Estruch022} excelled in goal-conditioned environments, ExTra~\cite{DBLP:conf/atal/SantaraMMR20} showcased the power of bisimulation metric in exploration, and HiP-BMDP~\cite{DBLP:journals/corr/abs-2007-07206} successfully incorporated bisimulation into multi-task settings, highlighting its superior performance, all mostly in online settings too, with little work in offline RL. These studies suggest that when tailored to specific environments, bisimulation methods can excel. Despite these works, bisimulation methods have had little success when extended to offline settings, and our motivation is to tackle this problem.

\paragraph{While bisimulation objectives used in the offline setting are directly affected by missing transitions, many other representation objectives may not.}

When referring to state representation learning, using bisimulation in offline settings presents challenges due to the two issues we outlined: the presence of missing transitions and inappropriate reward scales. Concurrently, there exists other representation objectives, like CURL~\cite{DBLP:conf/icml/LaskinSA20}, ATC~\cite{DBLP:conf/icml/StookeLAL21}, which focus on pairs of states without the explicit necessity for transition information. As a consequence, they do not explicitly require accounting for missing transitions or reward scaling in their objectives. This absence of direct influence sets them apart from bisimulation-based methods. Yet, we consider that bisimulation-based techniques have a theoretical edge and have proven effective in online settings, Thus, we deem that our work is impactful in that it delivers a proof that bisimulation can be successful offline.

\paragraph{Compounded effect for bisimulation principle in offline settings.} 

In online scenarios, state representations and policies are updated concurrently, while in offline settings, state representation is pre-trained before policy learning, with the two phases completely decoupled. Errors during representation learning in offline settings can have a compounded effect on policy learning, leading to significant issues. This is the reason that missing transitions is particularly harmful to the bisimulation principle in offline settings. Although reward scales affect bisimulation universally, as offline settings require pretraining state embedding, any major discrepancy between this fixed representation and the policy parameter space can further undermine the learning process. Hence, the proposed solutions hold promise for enhancing bisimulation's efficiency in offline settings.

\subsection{Suitability of different techniques}

\paragraph{EBS} We provide EBS as a general method, which is applicable to all bisimulation-based objectives, given that they all adhere to the foundational principle of bisimulation. This principle revolves around the contraction mapping properties similar to the value iteration. Whenever there's an intent to employ bisimulation in offline scenarios, with an aim to reduce the Bellman residual for approximating the fixed point, the outlined challenge emerges. Consequently, EBS holds the potential to enhance any bisimulation-based method, regardless of the distance they use.

\paragraph{RS} In essence, the given theoretical analysis is applicable across all bisimulation-based objectives. However, the precise settings for 
 hinge on the foundational distance. For instance, SimSR uses the cosine distance which has definitive bounds. As a result, we need to infer the ideal setting from Equation 10 and Theorem 8. In contrast, the MICo-like distance and DBC employ L-K distance and L1 distance respectively, having bounds ranging from 
. Consequently, they can adapt to more value settings. We propose our approach as a general method/principle to employ a novel bisimulation metric or distance, especially in the context of offline RL.

\section{Empirical estimation of bisimulation error}

In this section, we would like to conduct a toy experiment to empirically show that the bisimulation error could possibly be larger than bisimulation bellman residual in fixed/finite datasets.

\paragraph{Data collection} To collect the evaluation dataset, we utilize TD3~\cite{DBLP:conf/icml/FujimotoHM18} (a deterministic algorithm) instead of SAC~\cite{DBLP:conf/icml/HaarnojaZAL18} to avoid stochasticity. Firstly, we train a TD3 agent using the rlkit~\cite{rlkit} codebase until convergence. Then, we collect 10k transitions and select specific transitions (such as 10, 100, 1000, 5000...) from these 10k transitions with uniform probability to form the evaluation dataset $\mathcal{D}$. For determining termination, we follow the settings described in \cite{DBLP:conf/icml/FujimotoHM18} and \cite{DBLP:conf/icml/HaarnojaZAL18}, considering a state terminal only if termination occurs before 1000 timesteps. If termination occurs before 1000 timesteps, we set $\gamma=0$; otherwise, we set $\gamma=0.99$.

\paragraph{Computation} Given an evaluation dataset $\mathcal{D}$, the bisimulation Bellman residual $\epsilon_\phi$ is computed by $\frac{1}{|\mathcal{D}|} \sum_{\left(\left(s_i, a i\right),\left(s_j, a_j\right)\right) \sim \mathcal{D}}\left(G_\phi\left(s_i, s_j\right)-\left(\mid r\left(s_i, a_i\right)-r\left(s_j, a_j\right)+\gamma G_\phi\left(s_i, s_j\right)\right)\right)^2$, and the bisimulation error $\Delta_\phi$ is computed by $\frac{1}{\mathcal{D}} \sum_{\left(s_{i, s} j\right) \sim \mathcal{D}}\left(G_\phi\left(s_i, s_j\right)-G_{\sim}\left(s_i, s_j\right)\right)^2$, where $G_\sim$ denotes the corresponding fixed point measurement. Since directly computing $G_\sim$ is challenging, we compute $\left|V^\pi\left(s_i\right)-V^\pi\left(s_j\right)\right|$ instead, as they should be equal when considering the measurement is the fixed point and the transition is deterministic. To compute $V^{\pi}(s_i)$, we reset the MuJoCo environment to the specific state $s_i$ and ran the policy for 1000 timesteps. Since the environment and policy are deterministic, a single trajectory is sufficient to estimate the true value. 

The results are presented in Table~\ref{value_bisim_error_and_bisim_residual}, which indicates that the bisimulation error on finite datasets is indeed larger than the bisimulation bellman residual.

\begin{table}[t]\centering
\caption{The exact values of bisimulation error and bisimulation bellman residual}
\label{value_bisim_error_and_bisim_residual}
\begin{tabular}{l|r|r|r|r}
\hline
Transition number&	100&	500	&1000&	2000\\\hline
Bisimulation error&	0.2792&	0.2891&	0.2880&	0.2915\\\hline
Bisimulation Bellman residual&	0.003&	0.0009&	0.0032&	0.0016\\

\hline
\end{tabular}

\end{table}

\end{document}